\newtheorem{theorem}{Theorem}
\newtheorem{lemma}{Lemma}
\newtheorem{definition}{Definition}
\newtheorem{assumption}{Assumption}
\newtheorem{remark}{Remark}
\newcommand*\cM{\mathcal{M}}
\newcommand*\cU{\mathcal{U}}
\newcommand*\cB{\mathcal{B}}
\newcommand*\cT{\mathcal{T}}
\newcommand*\cQ{\mathcal{Q}}
\newcommand*\bH{\mathbf{H}}
\newcommand*\bU{\mathbf{U}}
\newcommand*\bV{\mathbf{V}}
\newcommand*\bA{\mathbf{A}}
\newcommand*\bZ{\mathbf{Z}}
\newcommand*\bS{\mathbf{S}}
\newcommand*\bB{\mathbf{B}}
\newcommand*\ma{\mathbf{a}}
\newcommand*\mb{\mathbf{b}}
\newcommand*\w{\mathbf{w}}
\newcommand*\g{\mathbf{g}}
\newcommand*\p{\mathbf{p}}
\newcommand*\x{\mathbf{x}}
\newcommand*\bu{\mathbf{u}}
\newcommand*\z{\mathbf{z}}
\pgfplotsset{width=5.3cm,compat=1.9}
\DeclareSymbolFont{extraup}{U}{zavm}{m}{n}
\DeclareMathSymbol{\vardiamond}{\mathalpha}{extraup}{87}
\long\def\comment#1{}
\newcommand{\real}{\ensuremath{\mathbb{R}}}
\newcommand{\R}{\ensuremath{\mathbb{R}}}
\title{Distributed Newton Can Communicate Less and Resist Byzantine Workers}
\author{%
  Avishek Ghosh \\
  Department of EECS,
  UC Berkeley\\
  Berkeley, CA 94720 \\
  \texttt{avishek\_ghosh@berkeley.edu} \\
   \And
   Raj Kumar Maity\\
   College of Information and Computer Sciences\\ 
   UMass Amherst, MA-01002\\
   \texttt{rajkmaity@cs.umass.edu}\\
   \AND
   Arya Mazumdar\\
   College of Information and Computer Sciences\\ 
   UMass Amherst, MA-01002\\
   \texttt{arya@cs.umass.edu}\\
}
\begin{document}

\maketitle

\begin{abstract}
  We develop a distributed second order optimization algorithm that is communication-efficient as well as robust against Byzantine failures of the worker machines. We propose \textsf{COMRADE} (COMunication-efficient and Robust Approximate Distributed nEwton), an iterative second order algorithm, where the worker machines communicate \emph{only once} per iteration with the center machine. This is in sharp contrast with the state-of-the-art distributed second order algorithms like GIANT \cite{giant} and DINGO\cite{dingo}, where the worker machines send (functions of) local gradient and Hessian sequentially; thus ending up communicating twice with the center machine per iteration. Moreover, we show that the worker machines can further compress the local information before sending it to the center. In addition, we employ a simple norm based thresholding rule to filter-out the Byzantine worker machines. We establish the linear-quadratic rate of convergence of \textsf{COMRADE} and establish that the communication savings and Byzantine resilience result in only a small statistical error rate for arbitrary convex loss functions. To the best of our knowledge, this is the first work that addresses the issue of Byzantine resilience in second order distributed optimization. Furthermore, we validate our theoretical results with extensive experiments on synthetic and benchmark LIBSVM \cite{libsvm} data-sets and demonstrate convergence guarantees.
\end{abstract}
\vspace{-4mm}
\section{Introduction}
\vspace{-2.5mm}
In modern data-intensive applications like image recognition, conversational AI and recommendation systems, the size of training datasets has grown in such proportions that distributed computing have become an integral part of machine learning. To this end, a fairly common distributed learning framework, namely \emph{data parallelism}, distributes the (huge) data-sets over multiple \emph{worker machines} to exploit the power of parallel computing. In many applications, such as Federated Learning \cite{federated}, data is stored in users' personal devices and judicious exploitation of the on-device machine intelligence can speed up computation. Usually, in a distributed learning framework, computation (such as processing, training) happens in the worker machines and the local results are communicated to a \emph{center machine} (ex., a parameter server). The center machine updates the model parameters by properly aggregating the local results.

Such distributed frameworks face the following two
fundamental challenges: First, the parallelism gains are often bottle-necked by the heavy communication overheads between worker and the center machines. This issue is further exacerbated where large clusters of worker machines are used for modern deep learning applications using models with millions of parameters (NLP models, such as BERT~\cite{devlin2018bert}, may have well over 100 million parameters). Furthermore, in Federated Learning, this uplink cost is tied to the user's upload bandwidth. Second, the worker machines might be susceptible to errors owing to data crashes, software or hardware bugs, stalled computation or even malicious and co-ordinated attacks. This inherent unpredictable (and potentially adversarial) nature of worker machines is typically modeled as Byzantine failures. As shown in \cite{lamport}, Byzantine behavior a single worker machine can be fatal to the learning algorithm.

Both these challenges, communication efficiency and Byzantine-robustness, have been addressed in a significant number of recent works, albeit mostly separately. For communication efficiency, several recent works \cite{atomo,dme,signsgd,vqsgd,qsgd,terngrad,errorfeed} use quantization or sparsification schemes to compress the message sent by the worker machines to the center machine. An  alternative, and perhaps more natural way  to reduce the  communication  cost (via reducing the  number of iterations) is to use second order optimization algorithms;  which  are  known  to  converge  much  faster  than their first order counterparts. Indeed, a handful of algorithms has been developed using this philosophy, such as DANE \cite{dane}, DISCO \cite{disco}, GIANT \cite{giant} , DINGO \cite{dingo}, Newton-MR \cite{newtonmr}, INEXACT DANE and AIDE \cite{aide}. In a recent work \cite{sparsedingo}, second order optimization and compression schemes are used simultaneously for communication efficiency. 
On the other hand, the problem of developing Byzantine-robust distributed algorithms has also been considered recently (see  \cite{su,feng,chen,dong,dong1,ghosh2019robust,blanchard2017byzantine} ). However, \emph{all} of these papers analyze different variations of the gradient descent, the standard first order optimization algorithm.

In this work, we propose \textsf{COMRADE}, a distributed approximate Newton-type algorithm that communicates less and is resilient to Byzantine workers. Specifically, we consider a distributed setup with $m$ worker machines and one center machine. The goal is to minimize a regularized convex loss $f: \R^d\to \R$, which is additive over the available data points. Furthermore, we assume that $\alpha$ fraction of the worker machines are Byzantine, where $\alpha \in [0,1/2)$. We assume that Byzantine workers can send any arbitrary values to the center machine. In addition, they may completely know the learning algorithm and are allowed to collude with each other. To the best of our knowledge, this is the first paper that addresses the problem of Byzantine resilience in second order optimization.

In our proposed algorithm, the worker machines communicate \emph{only once} per iteration with the center machine. This is in sharp contrast with the state-of-the-art distributed second order algorithms (like GIANT \cite{giant}, DINGO \cite{dingo}, Determinantal Averaging \cite{deter}), which sequentially estimates functions of local gradients and Hessians and communicate them with the center machine. In this way, they end up communicating twice per iteration with the center machine. We show that this sequential estimation is redundant. Instead, in \textsf{COMRADE}, the worker machines only send a $d$ dimensional vector, the product of the inverse of local Hessian and the local gradient. Via sketching arguments, we show that the empirical mean of the product of local Hessian inverse and local gradient is close to the global Hessian inverse and gradient product, and thus just sending the above-mentioned product is sufficient to ensure convergence. Hence, in this way, we save $\mathcal{O}(d)$ bits of communication per iteration. Furthermore, in Section~\ref{sec:compress}, we argue that, in order to cut down further communication, the worker machines can even compress the local Hessian inverse and gradient product. Specifically, we use a (generic) $\rho$-approximate compressor (\cite{errorfeed}) for this, that encompasses sign-based compressors like QSGD \cite{qsgd} and top$_k$ sparsification \cite{stich2018sparsified}.

For Byzantine resilience, \textsf{COMRADE} employs a simple thresholding policy on the norms of the local Hessian inverse and local gradient product. Note that norm-based thresholding is computationally much simpler in comparison to existing co-ordinate wise median or trimmed mean (\cite{ dong}) algorithms. Since the norm of the Hessian-inverse and gradient product determines the \emph{amount} of movement for Newton-type algorithms, this norm corresponds to a natural metric for identifying and filtering out Byzantine workers. 

\vspace{-3mm}
\paragraph{Our Contributions:}
We propose a communication efficient Newton-type algorithm that is robust to Byzantine worker machines. Our proposed algorithm, \textsf{COMRADE} takes as input the local Hessian inverse and gradient product (or a compressed version of it) from the worker machines, and performs a simple thresholding operation on the norm of the said vector to discard $\beta > \alpha$ fraction of workers having largest norm values. We prove the linear-quadratic rate of convergence of our proposed algorithm for strongly convex loss functions. In particular, suppose there are $m$ worker machines, each containing $s$ data points; and let $\mathbf{\Delta}_t = \w_t - \w^*$, where $\w_t$ is the $t$-th iterate of \textsf{COMRADE}, and $\w^*$ is the optimal model we want to estimate. In Theorem 2, we show that
\begin{align*}
\| \mathbf{\Delta}_{t+1}\| \leq \max \{ \Psi^{(1)}_t \| \mathbf{\Delta}_{t}\|, \Psi^{(2)}_t \| \mathbf{\Delta}_{t}\|^2  \}+ (\Psi^{(3)}_t +\alpha) \sqrt{\frac{1}{s}},
\end{align*}
where $\{\Psi^{(i)}_t\}_{i=1}^3$ are quantities dependent on several problem parameters. Notice that the above implies a quadratic rate of convergence when $ \| \mathbf{\Delta}_{t}\| \geq \Psi^{(1)}_t/\Psi^{(2)}_t$. Subsequently, when $\| \mathbf{\Delta}_{t}\|$ becomes sufficiently small, the above condition is violated and the convergence  slows down to a linear rate. The error-floor, which is $\mathcal{O}(1/\sqrt{s})$ comes from the Byzantine resilience subroutine in conjunction with the simultaneous estimation of Hessian and gradient. Furthermore, in Section~\ref{sec:compress}, we consider worker machines compressing the local Hessian inverse and gradient product via a $\rho$-approximate compressor \cite{errorfeed}, and show that the (order-wise) rate of convergence remain unchanged, and the compression factor, $\rho$ affects the constants only.

We experimentally validate our proposed algorithm, \textsf{COMRADE}, with several benchmark data-sets. We consider several types of Byzantine attacks and observe that \textsf{COMRADE} is robust against Byzantine worker machines, yielding better classification accuracy compared to the existing state-of-the-art second order algorithms.

A major technical challenge of this paper is to approximate local gradient and Hessian simultaneously in the presence of Byzantine workers. We use sketching, similar to \cite{giant}, along with the norm based Byzantine resilience technique. Using \emph{incoherence} (defined shortly) of the local Hessian along with concentration results originating from uniform sampling, we obtain the simultaneous gradient and Hessian approximation. Furthermore, ensuring at least one non-Byzantine machine gets trimmed at every iteration of \textsf{COMRADE}, we control the influence of Byzantine workers.

\vspace{-2mm}
\paragraph{Related Work:} \textit{Second order Optimization:} Second order optimization has received a lot of attention in the recent years in the distributed setting owing to its attractive convergence speed. The fundamentals of second order optimization is laid out in \cite{dane}, and an extension with better convergence rates is presented in \cite{aide}. Recently, in GIANT \cite{giant} algorithm, each worker machine computes an approximate Newton direction in each iteration and the center machine averages them to obtain a \emph{globally improved} approximate Newton direction.  Furthermore, DINGO \cite{dingo} generalizes second order optimization beyond convex functions by extending the Newton-MR \cite{newtonmr} algorithm in a distributed setting. Very recently, \cite{deter} proposes Determinantal averaging to correct the inversion bias of the second order optimization. A slightly different line of work (\cite{wangsketched}, \cite{oversketched}, \cite{newtonsketch}) uses Hessian sketching to solve a large-scale distributed learning problems.

\textit{Byzantine Robust Optimization:} In the seminal work of \cite{feng},  a generic framework of one shot median based robust learning has been proposed and analyzed in the distributed setting.  The issue of Byzantine failure is tackled by grouping the servers in batches and computing the median of batched servers in
\cite{chen} (the median of means algorithm). Later in \cite{dong,dong1}, co-ordinate wise median, trimmed mean and iterative filtering based algorithm have been proposed and
optimal statistical error rate is obtained. Also,  \cite{mhamdi2018hidden,Damaskinos} consider adversaries may steer convergence to bad local minimizers for non-convex optimization problems.
Byzantine resilience with gradient quantization has been addressed in the recent works of \cite{anima,ghosh2019communication}.  
\paragraph*{Organization:} In Section~\ref{sec:one}, we first analyze \textsf{COMRADE} with \emph{one round} of communication per iteration. We assume $\alpha =0$, and focus on the communication efficiency aspect only. Subsequently, in Section~\ref{sec:byz}, we make $\alpha \neq 0$, thereby addressing communication efficiency and Byzantine resilience simultaneously. Further, in Section~\ref{sec:compress} we augment a compression scheme along with the setting of Section~\ref{sec:byz}. Finally, in Section~\ref{sec:exp}, we validate our theoretical findings with experiments. Proofs of all theoretical results can be found in the supplementary material.

\paragraph*{Notation:} For a positive integer $r$, $[r]$ denotes the set $\{1,2,\ldots,r\}$. For a vector $v$, we use $\|v\|$ to denote the $\ell_2$ norm unless otherwise specified. For a matrix $X$, we denote $\|X\|_{2}$ denotes the operator norm, $\sigma_{max}(X)$ and $\sigma_{min}(X)$ denote the maximum and minimum singular value. Throughout the paper, we use $C,C_1,c,c_1$ to denote positive universal constants, whose value changes with instances.

\vspace{-3mm}
\section{Problem Formulation}
\label{sec:formulation}
\vspace{-2mm}
We begin with the standard statistical learning framework for empirical risk minimization, where the objective is to minimize the following loss function: 
\begin{align}
f(\w)= \frac{1}{n} \sum_{j=1}^n \ell_j(\w^T\x_j) + \frac{\lambda}{2}\|\w\|^2,   \label{prob}
\end{align}
where, the loss functions $\ell_j : \mathbb{R} \rightarrow \mathbb{R}$, $j \in [n]$ are \emph{convex, twice differentiable and  smooth}. Moreover, $\x_1,\x_2,\ldots ,\x_n \, \in \mathbb{R}^d$ denote the input feature vectors and $y_1,y_2,\ldots ,y_n \in \real $  denote the corresponding responses. Furthermore, we assume that the function $f$  is \emph{strongly convex}, implying the existence of a unique minimizer of \eqref{prob}. We denote this minimizer by $\w^*$. Note that the response  $\{y_j\}_{j=1}^n$ is captured by the corresponding loss function $\{\ell_j\}_{j=1}^n$. Some examples of $\ell_j$ are 
\begin{align*}
\text{logistic loss: }\,\, \ell_j(z_j) =\log(1- \exp(-z_jy_j)), \quad
\text{squared loss: }\,\, \ell_j(z_j) = \frac{1}{2}(z_j-y_j)^2  
\end{align*}
We consider the framework of distributed optimization with $m$ worker machines, where the feature vectors and the loss functions $(\x_1,\ell_1),\ldots,(\x_n,\ell_n) $ are partitioned homogeneously among them. Furthermore, we assume that $\alpha$
fraction of the worker machines are Byzantine for some $\alpha < \frac{1}{2}$.  The Byzantine machines, by nature, may send any arbitrary values to the center machine. Moreover, they can even collude with each other and plan malicious attacks with complete information of the learning algorithm.
\vspace{-3mm}
\section{\textsf{COMRADE} Can Communicate Less}\label{sec:one}
\vspace{-2mm}
We first present the Newton-type learning algorithm, namely \textsf{COMRADE} without any Byzantine workers, i.e., $\alpha =0$. It is formally given in Algorithm~\ref{alg:main_algo} (with $\beta =0$).  In each iteration of our algorithm, every worker machine computes the local Hessian and local gradient and sends the local second order update (which is the product of the inverse of the local Hessian and local gradient) to the center machine. The center machine aggregates the  updates from the worker machines by averaging them and updates the model parameter $\w$. Later the center machine broadcast the parameter $\w $ to all the worker machines.

In any iteration $t$, a standard Newton algorithm requires the computation of exact Hessian $(\bH_t)$ and gradient $(\g_t)$ of the loss function which can be written as
\vspace{-2mm}
\begin{align}
\g_t = \frac{1}{n}\sum_{i=1}^n\ell_j'(\w_t^{\top}\x_i)\x_i + \lambda\w_t, \,\,
\bH_t = \frac{1}{n}\sum_{i=1}^n\ell_j^{''}(\w_t^{\top}\x_i)\x_i\x_i^{\top} + \lambda\mathbf{I}. \label{globalgradhess}
\vspace{-2mm}
\end{align}
In a distributed set up, the exact  Hessian $(\bH_t)$ and gradient $(\g_t)$ can be computed in parallel in the following manner. In each iteration, the center machine  `broadcasts' the model parameter $\w_t$ to the worker machines and each worker machine computes its own local gradient and Hessian. Then the center machine can compute the exact gradient and exact Hessian by averaging the the local gradient vectors and local Hessian matrices. But for each worker machine the per iteration communication complexity is $\mathcal{O}(d)$ for the gradient computation and $\mathcal{O}(d^2$) for the Hessian computation. Using Algorithm~\ref{alg:main_algo}, we reduce the communication cost to only $\mathcal{O}(d)$ per iteration, which is the same as the first order methods. 

Each worker machine possess $s$ samples drawn uniformly from $\{ (\x_1,\ell_1),(\x_2,\ell_2),\ldots ,(\x_n,\ell_n)  \}$. By $S_i$, we denote the indices of the samples held by worker machine $i$. At any iteration $t$, the worker machine computes  the local Hessian $\bH_{i,t}$ and local gradient $\g_{i,t}$ as
\vspace{-2mm}
\begin{align}
\g_{i,t}  = \frac{1}{s}\sum_{i \in S_i}\ell_j'(\w_t^{\top}\x_i)\x_i + \lambda\w_t,  \quad
\bH_{i,t}= \frac{1}{s}\sum_{i \in S_i}\ell_j^{''}(\w_t^{\top}\x_i)\x_i\x_i^{\top} + \lambda\mathbf{I}. \label{localgradhess}
\end{align}
It is evident from the uniform sampling that $\mathbb{E}[\g_{i,t}]=\g_t $ and $ \mathbb{E}[\bH_{i,t}]=\bH_t$. The update direction from the worker machine is defined as $\hat{\p}_{i,t}= (\bH_{i,t})^{-1}\g_{i,t}$.  Each worker machine requires $O(sd^2)$ operations to compute the Hessian matrix $\bH_{i,t}$ and $O(d^3)$ operations to invert the matrix. In practice, the computational cost can be reduced by employing conjugate gradient method. The center machine computes the parameter update direction $\hat{\p}_t= \frac{1}{m}\sum_{i=1}^m \hat{\p}_{i,t}$.

 We show that given large enough sample in each worker machine ($s$ is large) and with incoherent data points (the information is spread out and not concentrated to a small number of sample data points), the local Hessian $\bH_{i,t}$ is  close to the global Hessian $\bH_t$ in spectral norm, and the local gradient $\g_{i,t}$ is close to the global gradient $\g_t$. Subsequently, we prove that the empirical average of the local updates acts as a good proxy for the global Newton update and achieves good convergence guarantee. 
 
\begin{algorithm}[t!]
  \caption{COMmunication-efficient and Robust Approximate Distributed nEwton (\textsf{COMRADE})}
  \begin{algorithmic}[1]
 \STATE  \textbf{Input:} Step size $\gamma$, parameter $\beta \ge 0$ 
 \STATE \textbf{Initialize:} Initial iterate $w_0 \in \mathbb{R}^d$ \\
  \FOR{$t=0,1, \ldots, T-1 $}
\STATE \underline{Central machine:} broadcasts $w_t$  \\
  \textbf{ for $ i \in [m]$ do in parallel}\\
  \STATE \underline{$i$-th worker machine:} 
    \begin{itemize}
    \item Non-Byzantine: Computes local gradient $\g_{i,t}$  and local Hessian $\bH_{i,t}$; sends $\hat{\p}_{i,t}=(\bH_{i,t})^{-1}\g_{i,t}$ to the central machine,
        \item Byzantine: Generates $\star$ (arbitrary), and sends it  to the center machine
         \end{itemize}
    \textbf{end for}
\STATE \underline{Center Machine:}
    \begin{itemize}
        \item Sort the worker machines in a non decreasing order according to norm of updates $\{ \hat{\p}_{i,t}\}_{i=1}^m$ from the local machines 
        \item Return the indices of the first $1-\beta$ fraction of machines as $\mathcal{U}_t$,
        \item Approximate Newton Update direction : $ \hat{\p}_t= \frac{1}{|\mathcal{U}_t|}\sum_{i\in \mathcal{U}_t} \hat{\p}_{i,t}$ \\
       \item Update model parameter: $w_{t+1} = w_t - \gamma \hat{\p}_t$.
   \end{itemize}
  \ENDFOR
  \end{algorithmic}\label{alg:main_algo}
\end{algorithm}

\subsection{Theoretical Guarantee}
\vspace{-2mm}
We define the matrix $\bA_t^{\top}= [\ma_1^{\top},\ldots, \ma_n^{\top}]\in \mathbb{R}^{d\times n}$ where $\ma_j = \sqrt{\ell''_j(\w^{\top}\x_j)}\,\x_j$. So the exact Hessian in equation~\eqref{globalgradhess} is  $\bH_t= \frac{1}{n}\bA_t^{\top}\bA_t + \lambda \mathbf{I}$. Also we define $\bB_t= [\mb_1,\ldots ,\mb_n] \in \mathbb{R}^{d\times n}$ where $\mb_i= \ell'_i(\w^T\x_i)\x_i$. So the exact gradient in equation~\eqref{globalgradhess} is $\g_t= \frac{1}{n}\bB_t\mathbf{1}+ \lambda\w_t$

\begin{definition}[Coherence of a Matrix] \label{def:cohe}
Let $\bA \in \mathbb{R}^{n \times d}$ be any matrix with $\bU \in \mathbb{R}^{n \times d}$ being its orthonormal basis (the left singular vectors). The row coherence of the matrix $\bA$ is defined as $\mu(\bA)=\frac{n}{d} \max_i \left\| \bu_i\right\|^2 \in  \left[1,\frac{n}{d} \right]$, where $\bu_i$ is the $i$th row of $\bU$.
\end{definition}

\begin{remark}
If the coherence of $\bA_t$ is small, it can be shown that the Hessian matrix can be approximated well via selecting a subset of rows. Note that this is a fairly common to use coherence condition as an approximation tool (see \cite{drineas2012fast,randnla,mahoneyrandomized})
\end{remark}
In the following, we assume that the Hessian matrix is  $L$-Lipschitz (see definition below), which is a standard assumption for the analysis of the second order method for general smooth loss function (as seen in \cite{giant},\cite{deter}).

\begin{assumption}\label{asm:hess}
The Hessian matrix of the loss function $f$ is $L$-Lipschitz continuous i.e. $\left\|\nabla^2 f(w)- \nabla^2 f(w') \right\|_2 \leq L \left\|w-w' \right\|$.
\end{assumption}


In the following theorem, we provide the convergence rate of \textsf{COMRADE} (with $\alpha=\beta =0$) in the terms of $\mathbf{\Delta}_{t}=\w_t-\w^*$. Also, we define $\kappa_t = \sigma_{max} (\bH_t)/\sigma_{min}(\bH_t)$ as the condition number of $\bH_t$, and hence $\kappa_t \geq 1$.

\begin{theorem}\label{thm:smooth}
Let $\mu \in \left[ 1, \frac{n}{d}\right]$ be the coherence of $\bA_t$ . Suppose $\gamma =1$ and $s \geq  \frac{3\mu d}{\eta^2}\log \frac{md}{\delta}$ for some $\eta,\delta \in (0,1)$.  Under Assumption ~\ref{asm:hess} , with probability exceeding $1-\delta$, we obtain 
\begin{align*}
  \| \mathbf{\Delta}_{t+1}\| \leq \max \{ \sqrt{\kappa_t(\frac{\zeta^2}{1-\zeta^2})}\| \mathbf{\Delta}_{t}\|, \frac{L}{\sigma_{min}(\bH_t) }\| \mathbf{\Delta}_{t}\|^2  \}+ \frac{ 2\epsilon}{\sqrt{\sigma_{min}(\bH_t)}},
\end{align*}
 where $ \zeta= \nu(\frac{\eta}{\sqrt{m}}+ \frac{\eta^2}{1-\eta})$,  $\nu= \frac{\sigma_{max}(\bA^{\top}\bA)}{\sigma_{max}(\bA^{\top}\bA)+n\lambda} \leq 1$, and 
\begin{align}
\epsilon =\frac{1}{1-\eta}\frac{1}{\sqrt{\sigma_{min}(\bH_t)}} (1+ \sqrt{2\ln (\frac{m}{\delta})})\sqrt{\frac{1}{s}}\max_i \|\mb_i\|. \label{eps}
\end{align}
\end{theorem}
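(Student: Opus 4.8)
The plan is to control $\mathbf{\Delta}_{t+1}=\mathbf{\Delta}_t-\hat{\p}_t$ (recall $\gamma=1$) by comparing the averaged direction $\hat{\p}_t=\frac1m\sum_{i=1}^m\bH_{i,t}^{-1}\g_{i,t}$ first with the exact Newton step $\bH_t^{-1}\g_t$, and then that step with $\mathbf{\Delta}_t$. Concretely I would split
\begin{align*}
\mathbf{\Delta}_{t+1}=\underbrace{\big(\mathbf{\Delta}_t-\bH_t^{-1}\g_t\big)}_{\text{Newton error}}+\underbrace{\Big(\bH_t^{-1}-\tfrac1m\sum_i\bH_{i,t}^{-1}\Big)\g_t}_{\text{Hessian-averaging error}}+\underbrace{\tfrac1m\sum_i\bH_{i,t}^{-1}(\g_t-\g_{i,t})}_{\text{gradient-sampling error}}.
\end{align*}
The first term yields the quadratic rate, the second the linear rate, and the third the $1/\sqrt{s}$ floor; these are exactly the three pieces of the claimed bound, so the whole argument reduces to estimating each in turn.

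The heart of the proof is the second term. Writing $\bH_t=\frac1n\bA_t^\top\bA_t+\lambda\bI$ and $\bH_{i,t}=\frac1s\bA_{S_i}^\top\bA_{S_i}+\lambda\bI$, uniform sampling of $s\gtrsim \mu d\,\eta^{-2}\log(md/\delta)$ rows together with the coherence bound $\mu(\bA_t)$ gives, via a matrix-Chernoff / subspace-embedding argument and a union bound over the $m$ machines, the relative spectral inequality $(1-\eta)\frac1n\bA_t^\top\bA_t\preceq\frac1s\bA_{S_i}^\top\bA_{S_i}\preceq(1+\eta)\frac1n\bA_t^\top\bA_t$ simultaneously for all $i$. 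Conjugating by $\bH_t^{-1/2}$ and cancelling the common $\lambda\bH_t^{-1}$ turns this into $\norms{\bH_t^{-1/2}\bH_{i,t}\bH_t^{-1/2}-\bI}\le\eta\nu$, where $\nu=\sigma_{max}(\bA^\top\bA)/(\sigma_{max}(\bA^\top\bA)+n\lambda)$ is precisely $\norms{\bH_t^{-1/2}\frac1n\bA_t^\top\bA_t\bH_t^{-1/2}}$. Setting $\bE_i=\bH_t^{-1/2}\bH_{i,t}\bH_t^{-1/2}-\bI$ (so $\norms{\bE_i}\le\eta\nu$ and $\E\bE_i=0$) and expanding $\bH_t^{1/2}\bH_{i,t}^{-1}\bH_t^{1/2}=(\bI+\bE_i)^{-1}=\bI-\bE_i+\sum_{k\ge2}(-\bE_i)^k$ in a Neumann series, I would average:
\begin{align*}
\bH_t^{1/2}\Big(\tfrac1m\sum_i\bH_{i,t}^{-1}\Big)\bH_t^{1/2}-\bI=-\tfrac1m\sum_i\bE_i+\tfrac1m\sum_i\sum_{k\ge2}(-\bE_i)^k.
\end{align*}
The first, mean-zero, term concentrates at rate $1/\sqrt m$ by matrix concentration, contributing $\nu\eta/\sqrt m$; the higher-order remainder is a genuine \emph{inversion bias} that does not average out and is bounded by $\sum_{k\ge2}(\eta\nu)^k\le\nu\eta^2/(1-\eta)$. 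Adding these gives $\norms{\bI-\bH_t^{1/2}\big(\tfrac1m\sum_i\bH_{i,t}^{-1}\big)\bH_t^{1/2}}\le\zeta$.

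With this spectral bound I would assemble the three contributions. The Hessian-averaging error equals $\bH_t^{-1/2}\big(\bI-\bH_t^{1/2}(\tfrac1m\sum_i\bH_{i,t}^{-1})\bH_t^{1/2}\big)\bH_t^{-1/2}\g_t$; using $\bH_t^{-1/2}\g_t=\bH_t^{1/2}\mathbf{\Delta}_t+O(\norms{\mathbf{\Delta}_t}^2)$ near the optimum bounds its norm by $\zeta\sqrt{\sigma_{max}(\bH_t)/\sigma_{min}(\bH_t)}\,\norms{\mathbf{\Delta}_t}=\sqrt{\kappa_t}\,\zeta\,\norms{\mathbf{\Delta}_t}$, with the refinement to $\sqrt{\kappa_t\,\zeta^2/(1-\zeta^2)}$ coming from carrying the iterate in the $\bH_t$-norm and bounding the residual-to-projection ratio rather than the operator norm directly. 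The Newton error is standard: $\mathbf{\Delta}_t-\bH_t^{-1}\g_t=\bH_t^{-1}\int_0^1\big(\bH_t-\nabla^2 f(\w^*+\tau\mathbf{\Delta}_t)\big)\mathbf{\Delta}_t\,d\tau$, and $L$-Lipschitzness of the Hessian (Assumption~\ref{asm:hess}) bounds it by $\frac{L}{\sigma_{min}(\bH_t)}\norms{\mathbf{\Delta}_t}^2$. Finally $\g_t-\g_{i,t}=\frac1n\bB_t\mathbf{1}-\frac1s\sum_{j\in S_i}\mb_j$ is a zero-mean uniform-sampling error of bounded vectors, so a vector Bernstein/Hoeffding inequality with a union bound over the $m$ machines gives $\norms{\g_t-\g_{i,t}}\le\frac1{\sqrt s}(1+\sqrt{2\ln(m/\delta)})\max_i\norms{\mb_i}$; combining with $\norms{\bH_{i,t}^{-1}}\le\frac1{(1-\eta)\sigma_{min}(\bH_t)}$ produces the floor $\tfrac{2\epsilon}{\sqrt{\sigma_{min}(\bH_t)}}$. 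Taking the maximum of the linear and quadratic contributions (which never dominate simultaneously) yields the stated inequality.

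The main obstacle is the second paragraph: because each worker sends a single $d$-dimensional vector, I must control the \emph{averaged inverse} $\frac1m\sum_i\bH_{i,t}^{-1}$ rather than a single sketch, and the crucial subtlety is cleanly separating the variance-reducing mean-zero fluctuation (which earns the $1/\sqrt m$ and drives the $\nu\eta/\sqrt m$ piece) from the systematic $O(\eta^2)$ inversion bias (which does not average out and drives the $\nu\eta^2/(1-\eta)$ piece) — it is precisely this bias that keeps the linear rate from improving indefinitely with $m$. Making the matrix concentration hold uniformly over all $m$ machines at the stated sample complexity $s\gtrsim\mu d\,\eta^{-2}\log(md/\delta)$, paying only a $\log m$ for the union bound, is where the incoherence condition is indispensable, and faithfully tracking the constants ($\nu$, the $1/(1-\zeta^2)$ factor, and the interplay between the gradient error and $\bH_{i,t}^{-1}$) is the remaining fiddly bookkeeping.
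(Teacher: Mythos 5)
Your proposal is correct in substance, but it takes a genuinely different route from the paper's. The paper never decomposes the iterate directly: it runs everything through the auxiliary quadratic $\phi_t(\p)=\frac{1}{2}\p^{\top}\bH_t\p-\g_t^{\top}\p$. Its Lemma~\ref{lem:onernd} proves the function-value bound $\phi_t(\hat{\p}_t)\le\epsilon^2+(1-\zeta^2)\min_{\p}\phi_t(\p)$ by splitting $\hat{\p}_t-\p^*$ into $(\hat{\p}_t-\tilde{\p}_t)+(\tilde{\p}_t-\p^*)$, where $\tilde{\p}_t=\frac{1}{m}\sum_i\bH_{i,t}^{-1}\g_t$ is the GIANT direction; the second piece is handled by importing GIANT's lemma (Lemma~\ref{lem:giant}) as a black box, and the first by the McDiarmid gradient bound (Lemma~\ref{lem:gradsketch}) combined with $\|\bH_t^{1/2}\bH_{i,t}^{-1}\bH_t^{1/2}\|\le\frac{1}{1-\eta}$. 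Lemma~\ref{lem:delta} then converts function-value suboptimality into the iterate bound by comparing $\phi_t(\hat{\p}_t)$ against $(1-\zeta^2)\phi_t(\frac{1}{1-\zeta^2}\mathbf{\Delta}_t)$, which is precisely where the $\zeta^2/(1-\zeta^2)$ factor is manufactured. You instead bound $\mathbf{\Delta}_{t+1}$ by a direct three-term operator decomposition and re-derive the content of GIANT's lemma from scratch via the Neumann expansion of the averaged inverse, cleanly separating the mean-zero $\nu\eta/\sqrt{m}$ fluctuation from the systematic $\nu\eta^2/(1-\eta)$ inversion bias. Your route is more self-contained and transparent about where $\zeta$ comes from, and it even yields the slightly sharper linear coefficient $\sqrt{\kappa_t}\,\zeta$ and floor $\epsilon/\sqrt{\sigma_{min}(\bH_t)}$; what the paper's $\phi_t$ machinery buys is reuse — the identical function-value lemmas are recycled verbatim for the Byzantine and compressed variants (Theorems~\ref{thm:byzsmooth} and~\ref{thm:compbyzsmooth}), where the trimming analysis is naturally phrased as a perturbation of $\phi_t(\hat{\p}_t)$. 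Two caveats on your side. First, for the mean-zero term, invoking generic matrix Bernstein/Hoeffding would cost an extra $\sqrt{\log d}$ factor; the clean way to get exactly $\eta/\sqrt{m}$ at the stated sample complexity is to observe that $\frac{1}{m}\sum_i\bH_{i,t}$ is itself a uniform-sampling sketch with $ms$ rows and apply the second half of Lemma~\ref{lem:Hessketch} — this is what the paper (via GIANT) does. Second, your decomposition produces a \emph{sum} of the linear and quadratic contributions rather than their maximum, so your final inequality matches the theorem only up to a factor of $2$ on the max term; this is cosmetic (the paper's own passage from its quadratic inequality in $\|\mathbf{\Delta}_{t+1}\|$ to the stated max form carries the same slack), and you correctly anticipated that the $1/\sqrt{1-\zeta^2}$ refinement is an artifact of the paper's weighted-norm bookkeeping rather than something your argument is missing.
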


\begin{remark}
It is well known that a distributed Newton method has linear-quadratic convergence rate. In Theorem~\ref{thm:smooth} the quadratic term comes from the standard analysis of Newton method.  The linear term (which is small)  arises owing to Hessian approximation. It gets smaller with better Hessian approximation (smaller $\eta$), and thus the above rate becomes quadratic one.  The small error floor arises due to the gradient approximation in the worker machines, which is essential for the one round of communication per iteration. The error floor is $\propto \frac{1}{\sqrt{s}}$ where $s$ is the number of samples in each worker machine. So for a sufficiently large $s$, the error floor becomes negligible. 
\end{remark}

\begin{remark}
The sample size in each worker machine is dependent on the coherence of the matrix $\bA_t$ and the dimension $d$ of the problem. Theoretically, the analysis is feasible for the case of $s \geq d$ (since we work with $\bH_{i,t}^{-1}$). However, when $s<d$, one can replace the inverse by a pseudo-inverse (modulo some changes in convergence rate). 
\end{remark}
\section{\textsf{COMRADE} Can Resist Byzantine Workers} \label{sec:byz}
In this section, we analyze \textsf{COMRADE} with Byzantine workers. We assume that $\alpha (< 1/2)$  fraction of worker machines are Byzantine. We define the set of Byzantine worker machines by $\cB$ and the set of the good (non-Byzantine)  machines by $\cM$. \textsf{COMRADE} employs a `norm based thresholding' scheme on the local Hessian inverse and gradient product to tackle the Byzantine workers.

In the $t$-th iteration, the center machine outputs a set  $\cU_t$ with  $\left| \cU_t \right|= (1-\beta)m$, consisting the indices of the worker machines with smallest norm. Hence, we `trim' the worker machines that may try to diverge the learning algorithm.  We denote the set of trimmed machines as $\cT_t$. Moreover, we take $\beta > \alpha$ to ensure at least one good machine falls in $\cT_t$. This condition helps us to control the Byzantine worker machines. Finally, the update is given by $\hat{\p}_t= \frac{1}{|\cU_t|}\sum_{i\in \cU_t}\hat{\p}_{i,t} $. 
We define:
\small
\begin{align}
  & \epsilon_{byz}^2 = [3 (\frac{1-\alpha}{1-\beta})^2 +4\kappa_t (\frac{\alpha}{1-\beta})^2 ]\epsilon^2, \label{gbyz}\\
   & \zeta^2_{byz}  =2 (\frac{1-\alpha}{1-\beta} )^2(\frac{\nu}{1-\eta})^2 + \nu^2 (\frac{1-\alpha}{1-\beta})^2 (\frac{\eta}{\sqrt{(1-\alpha)m}}+ \frac{\eta^2}{1-\eta} )^2 + 4\kappa_t (\frac{\alpha}{1-\beta})^2 [2+(\frac{\nu}{1-\eta})^2 ]. \label{alpbyz} 
\end{align}
\normalsize

 $\epsilon$ is defined in \eqref{eps}, $\nu= \frac{\sigma_{max}(\bA^T\bA)}{\sigma_{max}(\bA^T\bA)+n\lambda}$ and $\kappa_t$ is the condition number of the exact Hessian $\bH_t$. 

\begin{theorem}\label{thm:byzsmooth}
Let $\mu \in \left[ 1, \frac{n}{d}\right]$ be the coherence of $\bA_t$ . Suppose $\gamma =1$ and $s \geq  \frac{3\mu d}{\eta^2}\log \frac{md}{\delta}$ for some $\eta,\delta \in (0,1)$. For $0\leq\alpha < \beta <1/2$  , under Assumption ~\ref{asm:hess} , with probability exceeding $1-\delta$, Algorithm~\ref{alg:main_algo} yields 
\begin{align*}
  \| \mathbf{\Delta}_{t+1}\| \leq \max \{ \sqrt{\kappa_t(\frac{\zeta_{byz}^2}{1-\zeta_{byz}^2} )}\| \mathbf{\Delta}_{t}\|, \frac{L}{\sigma_{min}(\bH_t) }\| \mathbf{\Delta}_{t}\|^2  \}+ \frac{ 2\epsilon_{byz} }{\sqrt{\sigma_{min}(\bH_t)}},
\end{align*}
where $\zeta_{byz}$ and $\epsilon_{byz}$ are defined in equations~\eqref{gbyz} and \eqref{alpbyz} respectively.
 \end{theorem}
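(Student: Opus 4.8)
The plan is to reuse the machinery behind Theorem~\ref{thm:smooth} and graft onto it a separate accounting of the error injected by the $\alpha m$ corrupt machines. With $\gamma=1$ the recursion is $\mathbf{\Delta}_{t+1}=\mathbf{\Delta}_t-\hat{\p}_t$, and I split
\[
\mathbf{\Delta}_{t+1}=\big(\mathbf{\Delta}_t-\bH_t^{-1}\g_t\big)+\big(\bH_t^{-1}\g_t-\hat{\p}_t\big).
\]
The first bracket is the exact Newton residual: writing $\g_t=\bar{\bH}\mathbf{\Delta}_t$ with $\bar{\bH}=\int_0^1\nabla^2 f(\w^*+\tau\mathbf{\Delta}_t)\,d\tau$ and applying Assumption~\ref{asm:hess} gives $\|\mathbf{\Delta}_t-\bH_t^{-1}\g_t\|\le\frac{L}{\sigma_{min}(\bH_t)}\|\mathbf{\Delta}_t\|^2$, reproducing the quadratic branch untouched by the adversary. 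All the new work is in the second bracket, which I analyse in the Hessian-weighted norm $\|\bH_t^{1/2}\cdot\|$ (this is what produces the $1/\sqrt{\sigma_{min}(\bH_t)}$ scaling of the floor and a single $\sqrt{\kappa_t}$ on conversion back to $\ell_2$), and the target is to show it is at most $\sqrt{\kappa_t\,\zeta_{byz}^2/(1-\zeta_{byz}^2)}\,\|\mathbf{\Delta}_t\|+2\epsilon_{byz}/\sqrt{\sigma_{min}(\bH_t)}$.

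I would decompose the approximation error by subtracting the exact step inside each surviving worker,
\[
\bH_t^{-1}\g_t-\hat{\p}_t=\underbrace{\frac{1}{(1-\beta)m}\sum_{i\in\cU_t\cap\cM}\big(\bH_t^{-1}\g_t-\hat{\p}_{i,t}\big)}_{\text{honest, kept}}+\underbrace{\frac{1}{(1-\beta)m}\sum_{i\in\cU_t\cap\cB}\big(\bH_t^{-1}\g_t-\star_i\big)}_{\text{Byzantine, net}},
\]
which is an exact identity because $|\cU_t|=(1-\beta)m$. For the honest-kept sum I use $\sum_{i\in\cU_t\cap\cM}=\sum_{i\in\cM}-\sum_{i\in\cT_t\cap\cM}$ to rewrite it as $\tfrac{1-\alpha}{1-\beta}$ times the full honest average $\tfrac{1}{(1-\alpha)m}\sum_{i\in\cM}(\bH_t^{-1}\g_t-\hat{\p}_{i,t})$ plus a trimmed-honest correction. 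The full honest average is exactly the object bounded in Theorem~\ref{thm:smooth}, but over $(1-\alpha)m$ honest machines rather than $m$; reapplying the coherence-plus-uniform-sampling concentration there yields the Hessian-approximation contribution $\nu\big(\tfrac{\eta}{\sqrt{(1-\alpha)m}}+\tfrac{\eta^2}{1-\eta}\big)$ and a gradient floor of order $\epsilon$, which after the $\tfrac{1-\alpha}{1-\beta}$ rescaling are precisely the second group of $\zeta_{byz}^2$ and the $3(\tfrac{1-\alpha}{1-\beta})^2\epsilon^2$ part of $\epsilon_{byz}^2$. The trimmed-honest correction is bounded through the honest per-machine estimate $\|\hat{\p}_{i,t}\|$, whose Hessian-driven part scales like $\tfrac{\nu}{1-\eta}\|\mathbf{\Delta}_t\|$, supplying the first group $2(\tfrac{1-\alpha}{1-\beta})^2(\tfrac{\nu}{1-\eta})^2$.

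The crux is the Byzantine-net term, where the norm-thresholding rule does the work. Because $\beta>\alpha$ we have $|\cT_t\cap\cM|\ge(\beta-\alpha)m>0$, so at least one honest machine is trimmed; since $\cU_t$ keeps the $(1-\beta)m$ smallest $\ell_2$ norms, every surviving corrupt index obeys $\|\star_i\|\le\|\hat{\p}_{j,t}\|$ for a trimmed honest $j$, hence $\|\star_i\|\le\max_{j\in\cM}\|\bH_{j,t}^{-1}\g_{j,t}\|$. With $|\cU_t\cap\cB|\le\alpha m$ this bounds the net term by $\tfrac{\alpha}{1-\beta}\big(\max_{j\in\cM}\|\hat{\p}_{j,t}\|+\|\bH_t^{-1}\g_t\|\big)$. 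The subtlety — and the reason the third group of $\zeta_{byz}^2$ and $\epsilon_{byz}^2$ carries an \emph{extra} $\kappa_t$ relative to the honest terms — is that thresholding controls $\star_i$ only in the \emph{unweighted} $\ell_2$ norm, whereas the recursion lives in the $\bH_t$-weighted norm; converting $\|\star_i\|\le\|\hat{\p}_{j,t}\|$ into a bound on $\|\bH_t^{1/2}\star_i\|$ costs a factor $\sqrt{\sigma_{max}(\bH_t)/\sigma_{min}(\bH_t)}=\sqrt{\kappa_t}$, which together with the outer conversion yields the $4\kappa_t(\tfrac{\alpha}{1-\beta})^2[2+(\tfrac{\nu}{1-\eta})^2]$ linear group and the $4\kappa_t(\tfrac{\alpha}{1-\beta})^2\epsilon^2$ floor, the bracket $[2+(\tfrac{\nu}{1-\eta})^2]$ coming from splitting $\|\hat{\p}_{j,t}\|+\|\bH_t^{-1}\g_t\|$ into its $\bH_t^{-1}\g_t$-sized part (order $\|\mathbf{\Delta}_t\|$) and its Hessian-error part (order $\tfrac{\nu}{1-\eta}\|\mathbf{\Delta}_t\|$).

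Finally I would collect the three groups by the triangle inequality, absorb the self-referential Hessian-approximation factor into the $\zeta_{byz}^2/(1-\zeta_{byz}^2)$ form exactly as in the $\eta\!\to\!\zeta$ step of Theorem~\ref{thm:smooth}, convert back to $\ell_2$ to expose the outer $\sqrt{\kappa_t}$, and combine the resulting linear bound with the quadratic Newton residual via the maximum. The step I expect to be the main obstacle is this weighted-versus-unweighted norm reconciliation for the adversarial vectors: the thresholding test and the Newton geometry disagree, and one must simultaneously (i) control \emph{all} honest update norms $\|\hat{\p}_{j,t}\|$ uniformly over $j\in\cM$ on the single high-probability sketching event, so that the comparison $\|\star_i\|\le\|\hat{\p}_{j,t}\|$ can be invoked for whichever honest machine happens to be trimmed, and (ii) track the extra $\kappa_t$ this conversion introduces without letting it leak into the honest terms, which is what keeps the $1/\sqrt{(1-\alpha)m}$ variance reduction intact in the final rate.
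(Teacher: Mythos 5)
Your proposal is correct and follows essentially the same route as the paper's own proof: it reduces to bounding $\|\bH_t^{1/2}(\hat{\p}_t-\p^*)\|$ (which is exactly the suboptimality of $\hat{\p}_t$ for the paper's auxiliary quadratic $\phi_t$), uses the identical three-way split into all-honest, trimmed-honest, and kept-Byzantine sums, handles the honest parts by the sketching/concentration lemmas over $(1-\alpha)m$ machines, and handles the Byzantine part by the $\beta>\alpha$ norm-thresholding argument that pays the extra $\kappa_t$ when converting between unweighted and $\bH_t$-weighted norms, exactly as in the paper's Term~3. The only cosmetic difference is that you split $\mathbf{\Delta}_{t+1}$ directly into the exact Newton residual plus the aggregation error, whereas the paper routes the same computation through its quadratic-comparison machinery (Lemmas~\ref{lem:byzone} and~\ref{lem:byzdelta} followed by the argument of Theorem~\ref{thm:smooth}); the mathematical content coincides.
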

The remarks of Section~\ref{sec:one} is also applicable here. On top of that, we have the following remarks:
\begin{remark}
Compared to the convergence rate of Theorem~\ref{thm:smooth}, the rate here remains order-wise same even with Byzantine robustness. The coefficient of the quadratic term remains unchanged but the linear rate and the error floor suffers a little bit (by a small constant factor).
\end{remark}
\begin{remark}
Note that for Theorem~\ref{thm:byzsmooth} to hold, we require $\alpha \sim 1/\sqrt{\kappa_t}$  for all $t$. In cases where $\kappa_t$ is large, this can impose a stricter condition on $\alpha$. However, we conjecture that this dependence can be improved via applying a more intricate (and perhaps computation heavy) Byzantine resilience algorithm. In this work, we kept the Byzantine resilience scheme simple at the expense of this condition on $\alpha$.
\end{remark}

\section{\textsf{COMRADE} Can Communicate Even Less and Resist Byzantine Workers}\label{sec:compress}
In Section~\ref{sec:one} we analyze \textsf{COMRADE} with an additional feature. We let the worker machines further reduce the communication cost by applying a generic class of $\rho$-approximate compressor \cite{errorfeed} on the parameter update of Algorithm~\ref{alg:main_algo}. We first define the class of $\rho$-approximate compressor:

\begin{definition} \label{def:comp}
An operator $\cQ: \mathbb{R}^d \rightarrow \mathbb{R}^d$ is defined as $\rho$-approximate compressor on a set $S \subset \mathbb{R}^d$ if, $\forall x \in S$,
$
\left\|\cQ(x)-x  \right\|^2 \leq (1-\rho)\left\| x  \right\|^2
$, 
where $\rho \in [0,1]$ is the compression factor. 
\end{definition}

The above definition can be extended for  any randomized operator $\cQ$ satisfying 
$
\mathbb{E}(\left\|\cQ(x)-x  \right\|^2 ) \leq (1-\rho)\left\| x  \right\|^2
$,
for all $\forall x \in S$. The expectation is taken over the randomization of the operator. Notice that $\rho=1$ implies that $\cQ(x)=x$ (no compression). Examples of $\rho$-approximate compressor include QSGD \cite{qsgd}, $\ell_1$-QSGD \cite{errorfeed}, top$_k$ sparsification and rand$_k$ \cite{stich2018sparsified}.

Worker machine $i$ computes the product of local Hessian inverse inverse and local gradient and then apply $\rho$-approximate compressor to obtain $\mathcal{Q}(\bH_{i,t}^{-1} \g_{i,t})$; and finally sends this compressed vector to the center. The Byzantine resilience subroutine remains the same--except, instead of sorting with respect to $ \|\bH_{i,t}^{-1} \g_{i,t}\|$, the center machine now sorts according to $\|\mathcal{Q}(\bH_{i,t}^{-1} \g_{i,t})\|$. The center machine aggregates the compressed updates by averaging $\cQ(\hat{\p})= \frac{1}{|\cU_t|}\sum_{i \in \cU_t} \cQ(\hat{\p}_{i,t})$, and take the next step as $w_{t+1} = w_t - \gamma  \cQ(\hat{\p})$.

Recall the definition of $\epsilon$ from \eqref{eps}. We also use the following notation : $\zeta^2_{\cM} = \nu(\frac{\eta}{\sqrt{(1-\alpha)m}}+ \frac{\eta^2}{1-\eta}),  \zeta_{1}= \frac{\nu}{1-\eta}$ and $\nu= \frac{\sigma_{max}(\bA^T\bA)}{\sigma_{max}(\bA^T\bA)+n\lambda}$. Furthermore, we define the following:
\small
\begin{align}
    \epsilon_{comp,byz}^2 &= [3 (\frac{1-\alpha}{1-\beta} )^2 +4\kappa_t (\frac{\alpha}{1-\beta})^2 ] (1+ \kappa (1-\rho))\epsilon^2 \label{ceps} \\
   \zeta^2_{comp,byz} & =2(\frac{1-\alpha}{1-\beta})^2 (\zeta^2_1 + \kappa_t (1-\rho)((1+\zeta^2_1) ) + (\frac{1-\alpha}{1-\beta})^2 (\zeta^2_{\cM} + \kappa_t (1-\rho)((1+\zeta^2_1)) \nonumber \\
   &+ 4\kappa _t (\frac{\alpha}{1-\beta})^2 (2+(\zeta^2_1 + \kappa_t (1-\rho)((1+\zeta^2_1))) \label{calpha}
\end{align}
\normalsize

\begin{theorem}\label{thm:compbyzsmooth}
Let $\mu \in \left[ 1, \frac{n}{d}\right]$ be the coherence of $\bA_t$ . Let $\gamma =1$ and $s \geq  \frac{3\mu d}{\eta^2}\log \frac{md}{\delta}$ for some $\eta,\delta \in (0,1)$. For $0\le \alpha < \beta <1/2$, under Assumption ~\ref{asm:hess}  and with $\cQ$ being the $\rho$-approximate compressor, with probability exceeding $1-\delta$, we obtain
\begin{align*}
 \| \mathbf{\Delta}_{t+1}\| \leq \max \{ \sqrt{\kappa_t (\frac{\zeta^2_{comp,byz}}{1-\zeta^2_{comp,byz}})}\| \mathbf{\Delta}_{t}\|, \frac{L}{\sigma_{min}(\bH_t) }\| \mathbf{\Delta}_{t}\|^2  \}+ \frac{ \epsilon_{comp,byz}}{\sqrt{\sigma_{min}(\bH_t)}}
\end{align*}
where $\epsilon_{comp,byz}$ and $\zeta_{comp,byz}$ are given in equations~\eqref{ceps} and \eqref{calpha} respectively.
\end{theorem}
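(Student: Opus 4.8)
The plan is to reduce Theorem~\ref{thm:compbyzsmooth} to the Byzantine-only analysis of Theorem~\ref{thm:byzsmooth} by isolating the \emph{additional} error contributed by the compressor $\cQ$. Since $\gamma=1$, the iterate obeys $\mathbf{\Delta}_{t+1}=\mathbf{\Delta}_t-\cQ(\hat{\p})$ with $\cQ(\hat{\p})=\frac{1}{|\cU_t|}\sum_{i\in\cU_t}\cQ(\hat{\p}_{i,t})$. Introducing the \emph{uncompressed} trimmed mean $\hat{\p}=\frac{1}{|\cU_t|}\sum_{i\in\cU_t}\hat{\p}_{i,t}$, I would split
\begin{align*}
\mathbf{\Delta}_{t+1} = \underbrace{(\mathbf{\Delta}_t - \bH_t^{-1}\g_t)}_{T_1} + \underbrace{(\bH_t^{-1}\g_t - \hat{\p})}_{T_2} + \underbrace{(\hat{\p} - \cQ(\hat{\p}))}_{T_3}.
\end{align*}
The term $T_1$ is the exact-Newton residual; under Assumption~\ref{asm:hess} the standard integral-form Taylor bound gives the quadratic term $\tfrac{L}{\sigma_{min}(\bH_t)}\|\mathbf{\Delta}_t\|^2$, and it is untouched by compression. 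The term $T_2$ is the joint Hessian-and-gradient approximation error of the trimmed average in the presence of Byzantine workers, which is exactly the quantity already controlled in Theorem~\ref{thm:byzsmooth} (supplying the linear piece governed by $\zeta_{byz}$ and the $\epsilon_{byz}$-type floor, built from the coherence estimate of Definition~\ref{def:cohe} and the gradient concentration bound \eqref{eps}). The only genuinely new work is bounding the aggregate compression error $T_3$.

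For $T_3=\frac{1}{|\cU_t|}\sum_{i\in\cU_t}(\hat{\p}_{i,t}-\cQ(\hat{\p}_{i,t}))$ I would separate the surviving good machines $\cU_t\cap\cM$ from the surviving Byzantine machines $\cU_t\cap\cB$. On a good machine, Definition~\ref{def:comp} yields $\|\hat{\p}_{i,t}-\cQ(\hat{\p}_{i,t})\|\le\sqrt{1-\rho}\,\|\hat{\p}_{i,t}\|$, so the task reduces to bounding $\|\hat{\p}_{i,t}\|=\|\bH_{i,t}^{-1}\g_{i,t}\|$. Writing $\hat{\p}_{i,t}=\bH_t^{-1}\g_t+(\hat{\p}_{i,t}-\bH_t^{-1}\g_t)$, the leading piece $\|\bH_t^{-1}\g_t\|$ is controlled by $\|\mathbf{\Delta}_t\|$ up to the condition number $\kappa_t$ (this is precisely where the $\kappa_t(1-\rho)(1+\zeta_1^2)$ cross-terms in \eqref{calpha} and the $(1+\kappa(1-\rho))$ inflation of $\epsilon^2$ in \eqref{ceps} originate), while the residual reproduces the familiar $\zeta_1,\zeta_{\cM},\epsilon$ factors, now each carrying an extra $\sqrt{1-\rho}$.

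For the surviving Byzantine machines I would invoke the norm-based thresholding: because $\beta>\alpha$, at least one good machine is trimmed, so any $i\in\cU_t\cap\cB$ has reported norm dominated by that of some trimmed good machine. This lets me charge each surviving Byzantine compression term to the largest good-machine norm, producing the $\kappa_t(\alpha/(1-\beta))^2$ multipliers common to \eqref{ceps} and \eqref{calpha}. Collecting $T_2$ and $T_3$, squaring, and using $(a+b)^2\le 2a^2+2b^2$-style splits to match constants yields the stated $\zeta_{comp,byz}$ and $\epsilon_{comp,byz}$, after which the rearrangement into the $\zeta^2/(1-\zeta^2)$ form proceeds verbatim as in Theorem~\ref{thm:smooth}. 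The main obstacle is the bookkeeping inside $T_3$: cleanly propagating the $\sqrt{1-\rho}$ factor through the decomposition of $\|\hat{\p}_{i,t}\|$ so the condition-number dependence lands exactly as $\kappa_t(1-\rho)(1+\zeta_1^2)$, and verifying that the charging argument survives compression, since the center now sorts by $\|\cQ(\hat{\p}_{i,t})\|$ rather than $\|\hat{\p}_{i,t}\|$ and the threshold comparisons must be re-derived up to the $(1\pm\sqrt{1-\rho})$ distortion relating $\|\cQ(\hat{\p}_{i,t})\|$ to $\|\hat{\p}_{i,t}\|$.
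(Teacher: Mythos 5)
Your high-level architecture is partly sound and partly mirrors the paper's: the paper also isolates the compression error at the per-machine level (Lemma~\ref{lem: useful} bounds $\left\| \bH^{\frac{1}{2}}(\cQ(\hat{\p}_i)-\p^*)\right\|^2$ by inflating $\epsilon^2$ and the $\zeta$-terms with $\kappa(1-\rho)$ factors, exactly where your $\kappa_t(1-\rho)(1+\zeta_1^2)$ bookkeeping points) and also charges surviving Byzantine machines to a trimmed good machine. However, there is a genuine gap in your treatment of $T_2$. You claim $T_2=\bH_t^{-1}\g_t-\hat{\p}$ ``is exactly the quantity already controlled in Theorem~\ref{thm:byzsmooth},'' but it is not: in the compressed algorithm the surviving set $\cU_t$ is produced by sorting $\|\cQ(\hat{\p}_{i,t})\|$, whereas Theorem~\ref{thm:byzsmooth} analyzes the set produced by sorting $\|\hat{\p}_{i,t}\|$. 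The structural fact that drives the Byzantine bound --- every surviving machine's sorting statistic is dominated by that of some trimmed good machine --- is therefore available only for the \emph{compressed} norms, so the uncompressed trimmed mean over this particular $\cU_t$ is not covered by the earlier theorem. Worse, for $i\in\cU_t\cap\cB$ the ``uncompressed'' vector $\hat{\p}_{i,t}$ does not exist at all: a Byzantine machine transmits an arbitrary vector, so your split into $T_2$ (uncompressed trimmed mean) plus $T_3$ (aggregate compression error) is ill-defined on precisely the machines that create the difficulty, and the $(1\pm\sqrt{1-\rho})$ distortion patch you propose has nothing to act on --- there is no underlying honest vector whose norm the received vector distorts.

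The paper avoids this entirely by never leaving the compressed domain: Lemmas~\ref{lem: useful}, \ref{lem:combyzone} and \ref{lem:compbyzdelta} rerun the Section~\ref{sec:byz} argument with $\cQ(\hat{\p}_{i,t})$ in place of $\hat{\p}_{i,t}$ throughout, so the sorting statistic and the analyzed quantity are the same object; the charging of $\cU_t\cap\cB$ to $\max_{i\in\cM}\|\cQ(\hat{\p}_i)\|$ is then immediate, and the compressor enters only through the per-machine bound of Lemma~\ref{lem: useful}. Your decomposition can be repaired --- define $\hat{\p}_{i,t}$ for Byzantine machines to be whatever they send (zero compression error), restrict $T_3$ to $\cU_t\cap\cM$, and \emph{redo} (not cite) the charging argument in terms of compressed norms --- but at that point you have essentially reconstructed Lemma~\ref{lem:combyzone}. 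A smaller soft spot: the final passage to the $\zeta^2/(1-\zeta^2)$ form is not a verbatim rearrangement of a triangle-inequality bound; it requires the quadratic-function comparison trick of Lemma~\ref{lem:delta}, which compares $\phi_t(\cdot)$ against $(1-\zeta^2)\,\phi_t\bigl(\tfrac{1}{1-\zeta^2}\mathbf{\Delta}_t\bigr)$. Since $T_2+T_3=\p^*-\cQ(\hat{\p})$, your combined bound can indeed be fed into that machinery, but only after it is cast in the $\phi_t$ form rather than as a bound on $\|\mathbf{\Delta}_{t+1}\|$ by parts.
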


\begin{remark}
 With no compression ($\rho=1$) we get back the convergence guarantee of Theorem~\ref{thm:byzsmooth}.
\end{remark}
\begin{remark}
Note that even with compression, we retain the linear-quadratic rate of convergence of \textsf{COMRADE}. The constants are affected by a $\rho$-dependent term.
\end{remark}

\begin{figure}[t!]%
    \centering
    \subfloat[w5a]{{\includegraphics[height = 3cm,width=3.5cm]{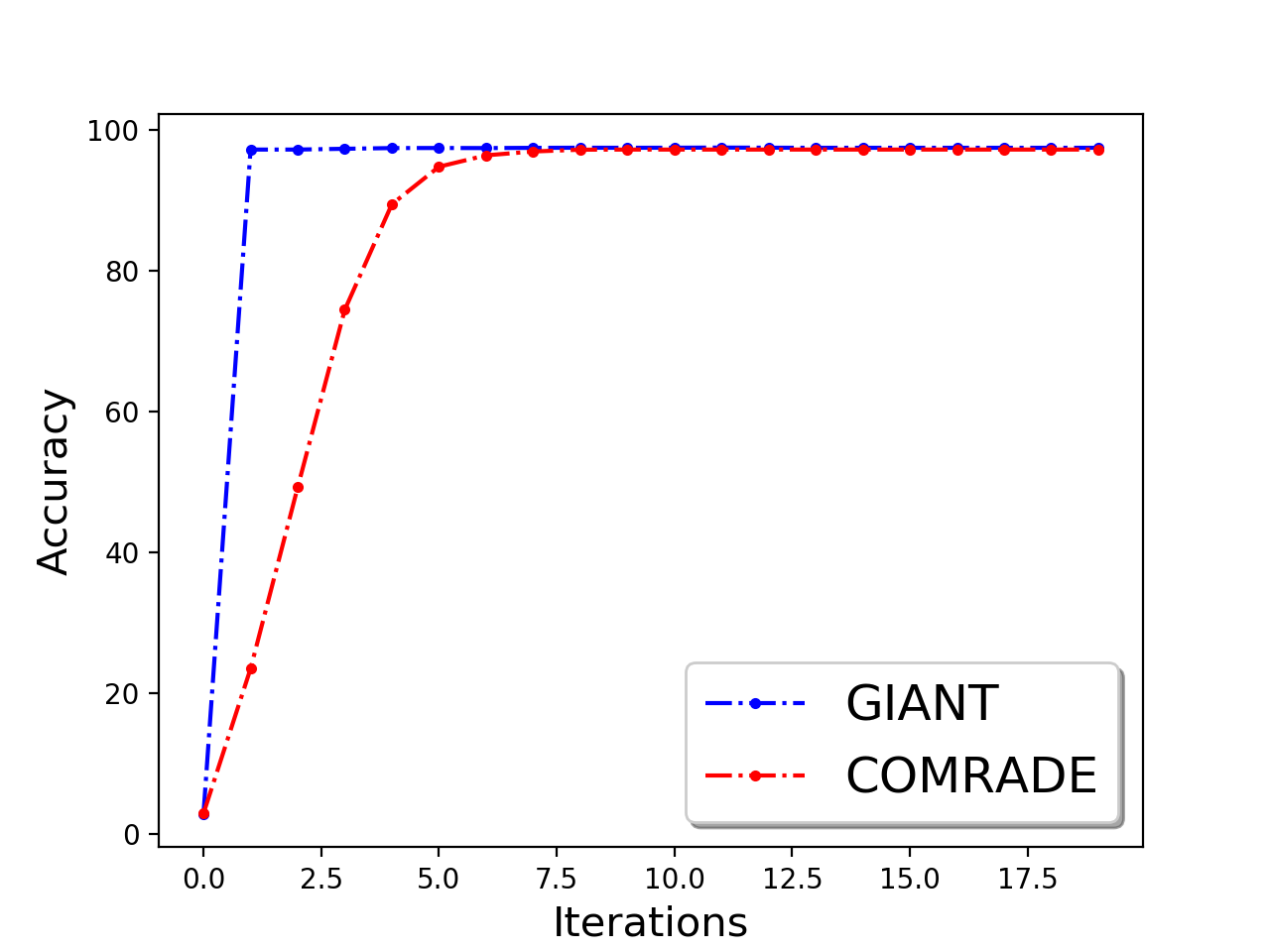} }}%
    \subfloat[a9a]{{\includegraphics[height = 3cm,width=3.5cm]{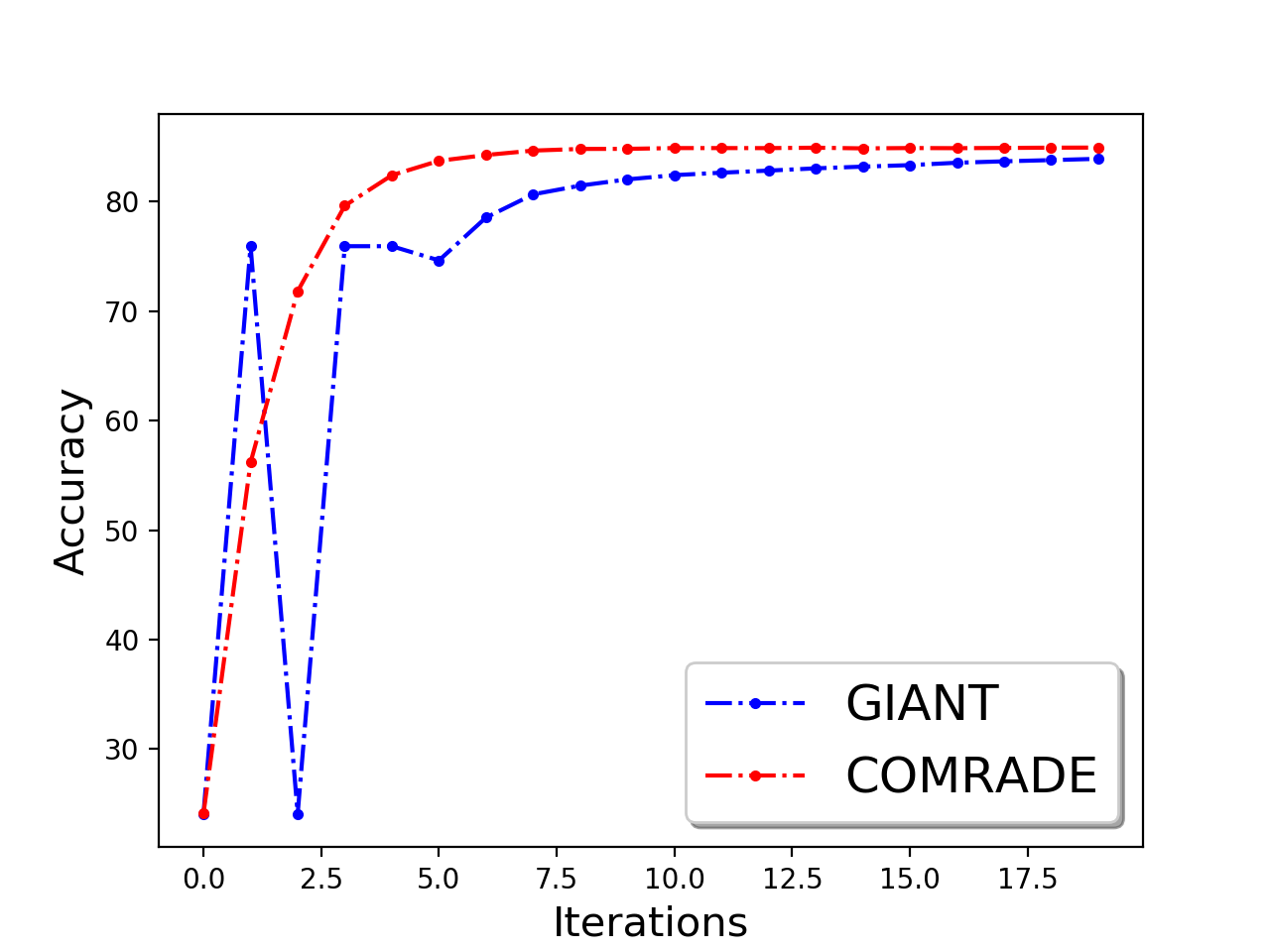} }}%
    \subfloat[Epsilon]{{\includegraphics[height = 3cm,width=3.5cm]{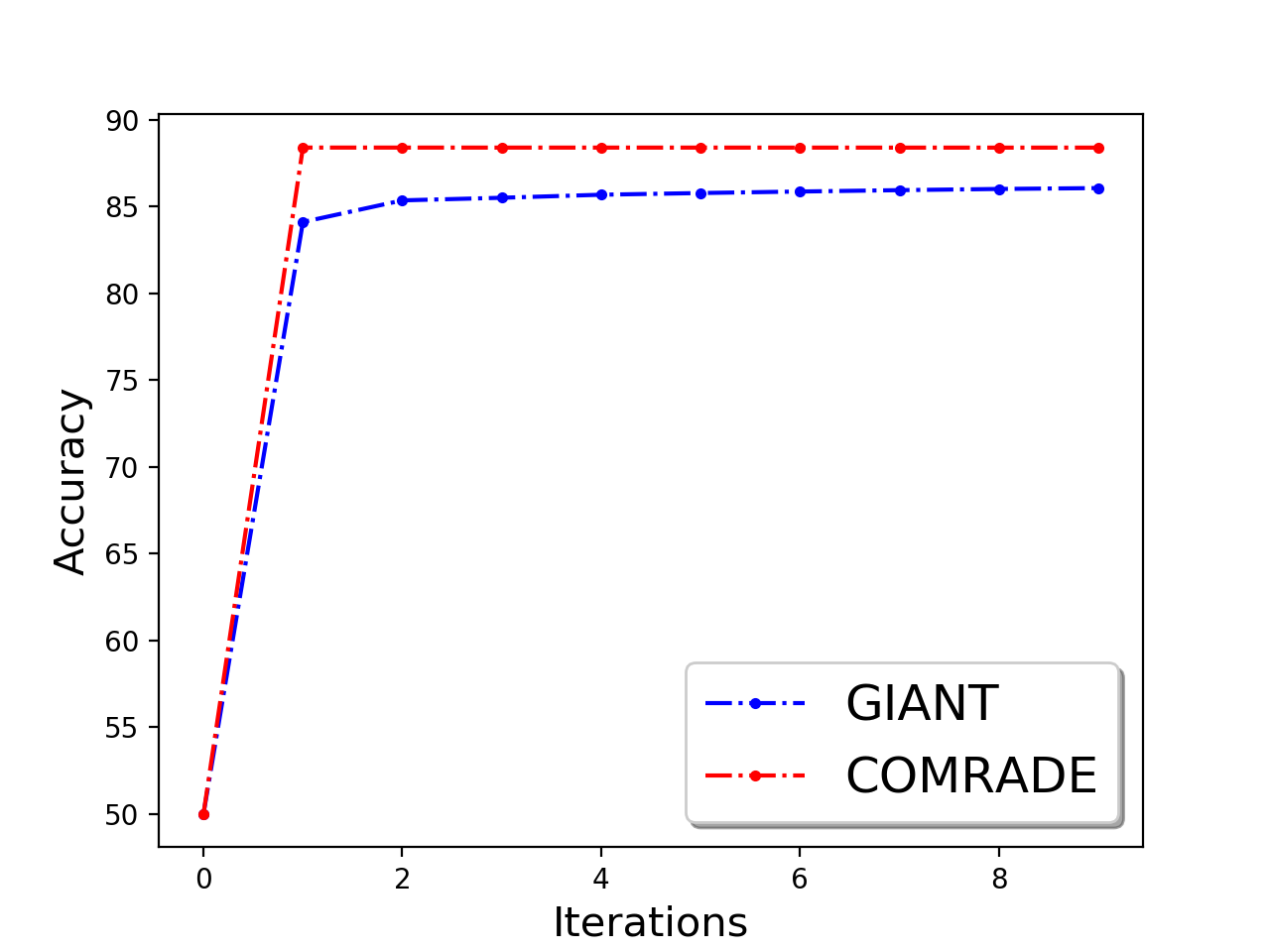} }}%
    \subfloat[covtype]{{\includegraphics[height = 3cm,width=3.5cm]{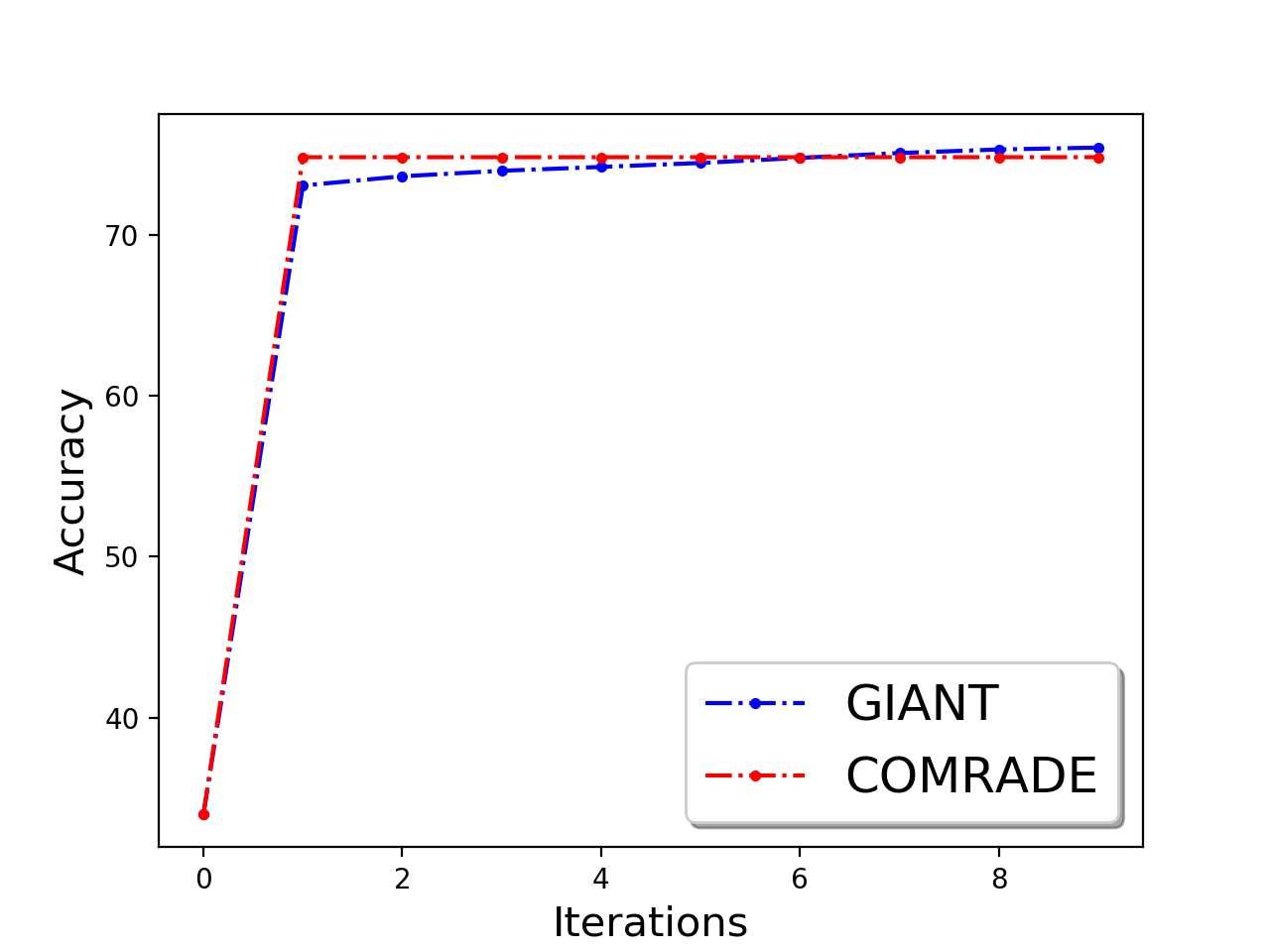} }}%
\vspace{-10pt}
    \subfloat[GIANT `flipped' attack]{{\includegraphics[height = 3cm,width=3.5cm]{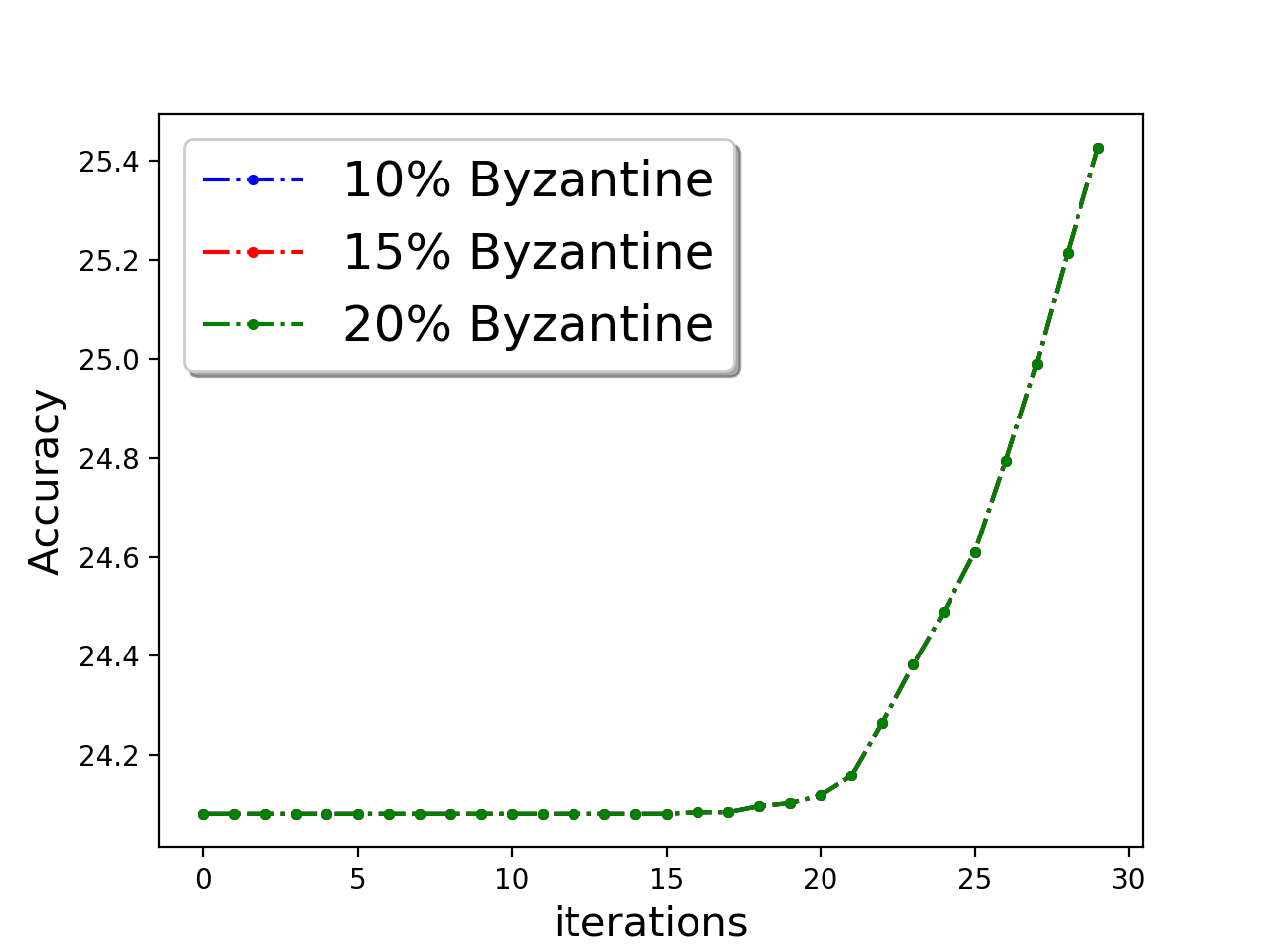} }}%
    \subfloat[GIANT `negative' attack]{{\includegraphics[height = 3cm,width=3.5cm]{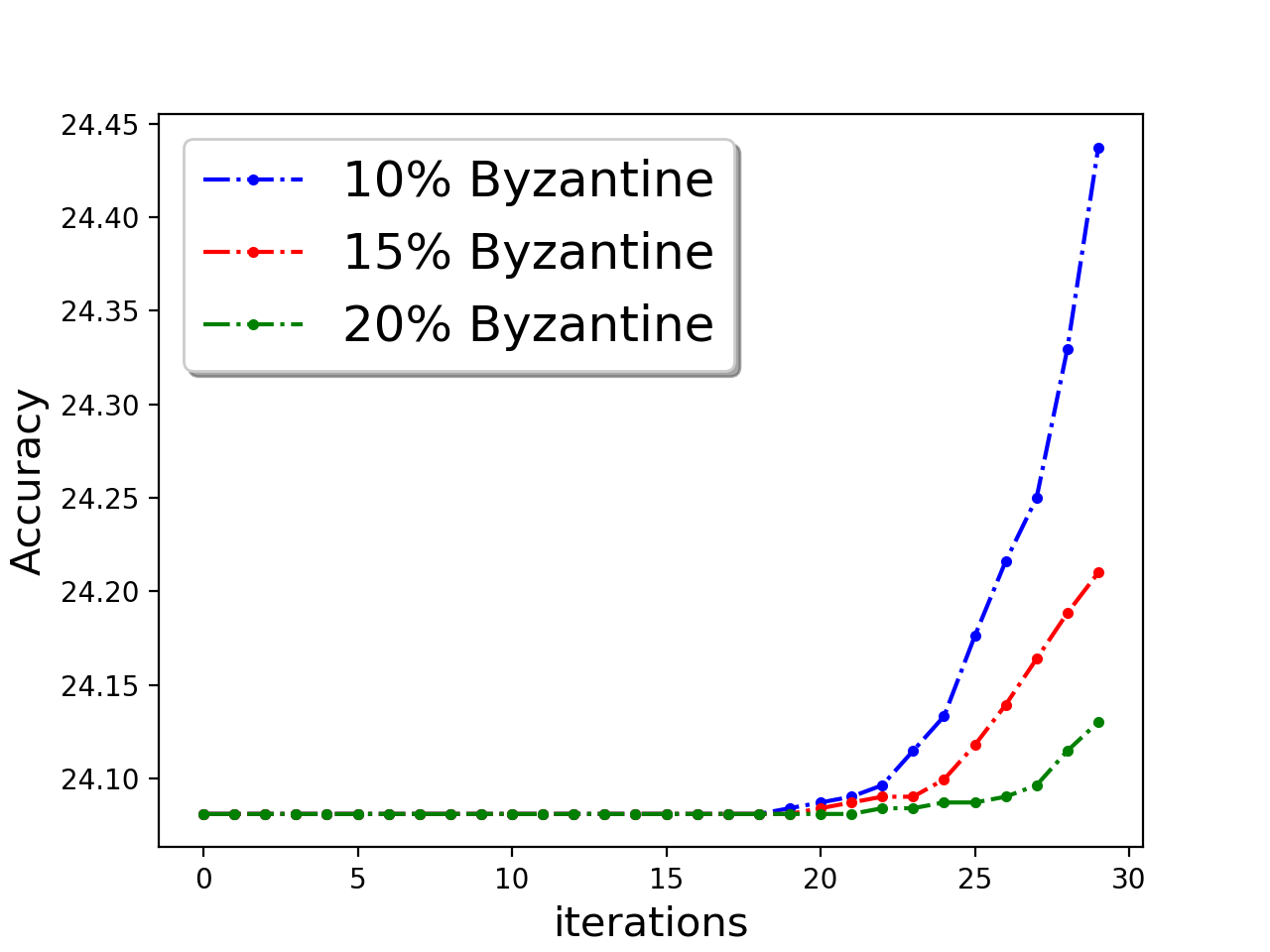} }}%
    \subfloat[Robust GIANT]{{\includegraphics[height = 3cm,width=3.5cm]{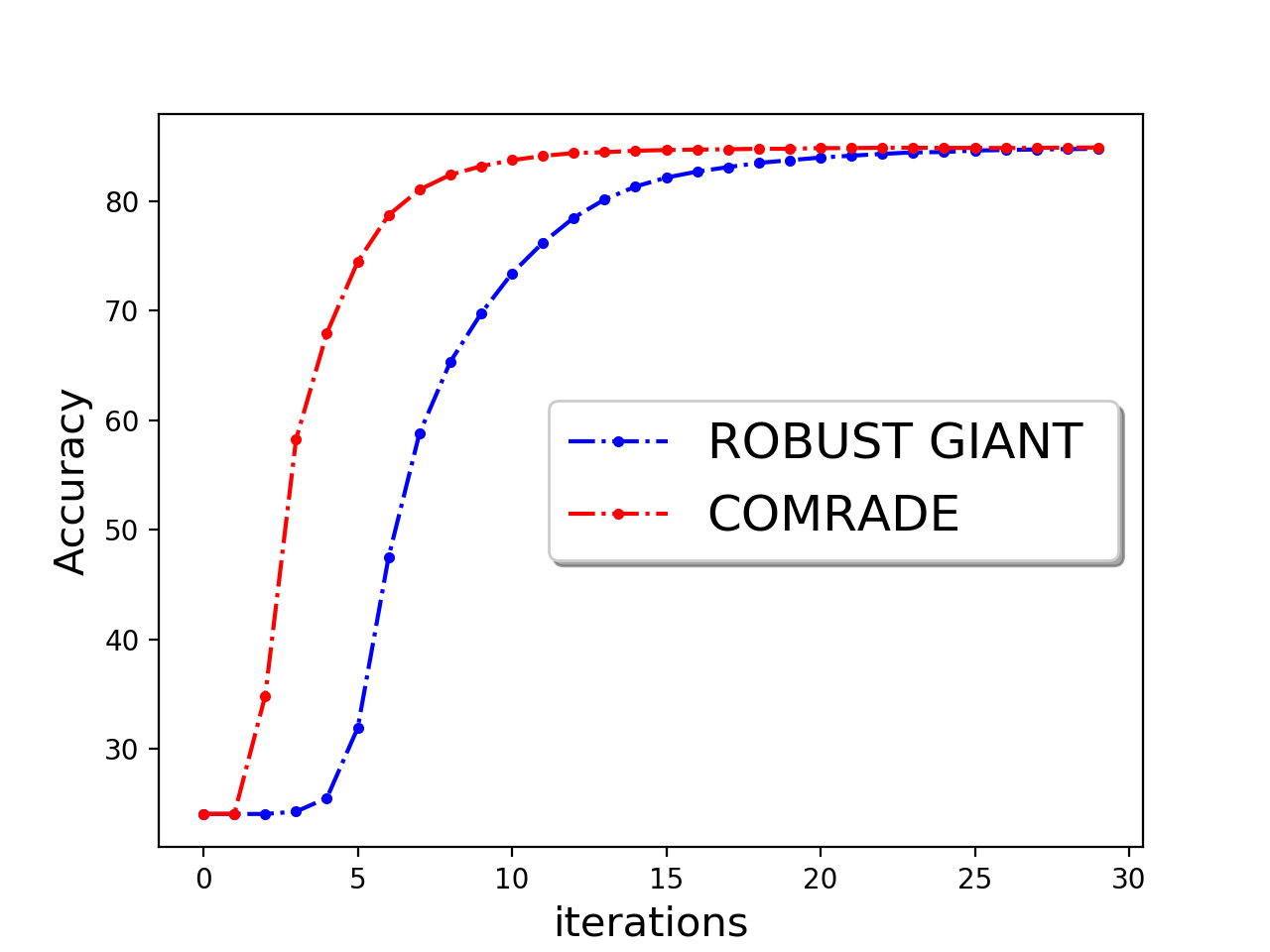} }}%
    \subfloat[Robust GIANT]{{\includegraphics[height = 3cm,width=3.5cm]{plotdata/comp/giant_a9a_det_thres.png} }}%

    \caption{(First row) Comparison of training accuracy between \textsf{COMRADE}(Algorithm~\ref{alg:main_algo}) and GIANT \cite{giant} with  (a) w5a (b) a9a  (c) Epsilon (d) Covtype dataset.  (Second row) Training accuracy of (e) GIANT for `flipped label' and (f) `negative update' attack; and comparison of Robust GIANT and \textsf{COMRADE} with a9a dataset for (g) `flipped label' and (h) `negative update' attack. }%
    \label{fig:comparision}%
\end{figure}
\section{Experimental Results} \label{sec:exp}

In this section we validate our algorithm, \textsf{COMRADE} in Byzantine  and non-Byzantine setup  on synthetically generated and benchmark LIBSVM \cite{libsvm} data-set. The experiments focus on the standard logistic regression problem. The logistic regression objective is defined as 
$\frac{1}{n} \sum_{i=1}^n \log \left( 1 + \exp(-y_i\x_i^{\top}\w ) \right) + \frac{\lambda}{2n}\| \w\|^2$, where $\w \in \mathbb{R}^d$ is the parameter, $\{\x_i\}_{i=1}^n \in \mathbb{R}^d$ are the feature data  and $\{ y_i\}_{i=1}^n \in \{0,1\}$ are the corresponding labels. We use `mpi4py' package   in   distributed computing framework (swarm2) at the University of Massachusetts Amherst \cite{swarm2}.
We choose `a9a' ($d=123 ,n\approx 32$K), `w5a' ($d=300,n\approx 10k$), `Epsilon' ($d=2000, n=0.4$M) and `covtype.binary' ($d=54 ,n\approx0.5$M)  classification datasets and partition the data in $20$ different worker machines.  In the experiments, we choose two types of Byzantine attacks : (1). `flipped label'-attack where (for binary classification) the Byzantine worker machines flip the labels of the data, thus making the model learn with wrong labels,  and (2). `negative update attack' where the Byzantine worker machines compute the local update ($\hat{\p}_{i}$) and communicate  $-c\times \hat{\p}_{i}$ with $c \in (0,1)$ making the updates  to be opposite of actual direction. We choose $\beta= \alpha +\frac{2}{m}$. 

\begin{figure}[h!]%
    \centering
   \subfloat[w5a]{{\includegraphics[height = 3cm,width=3.5cm]{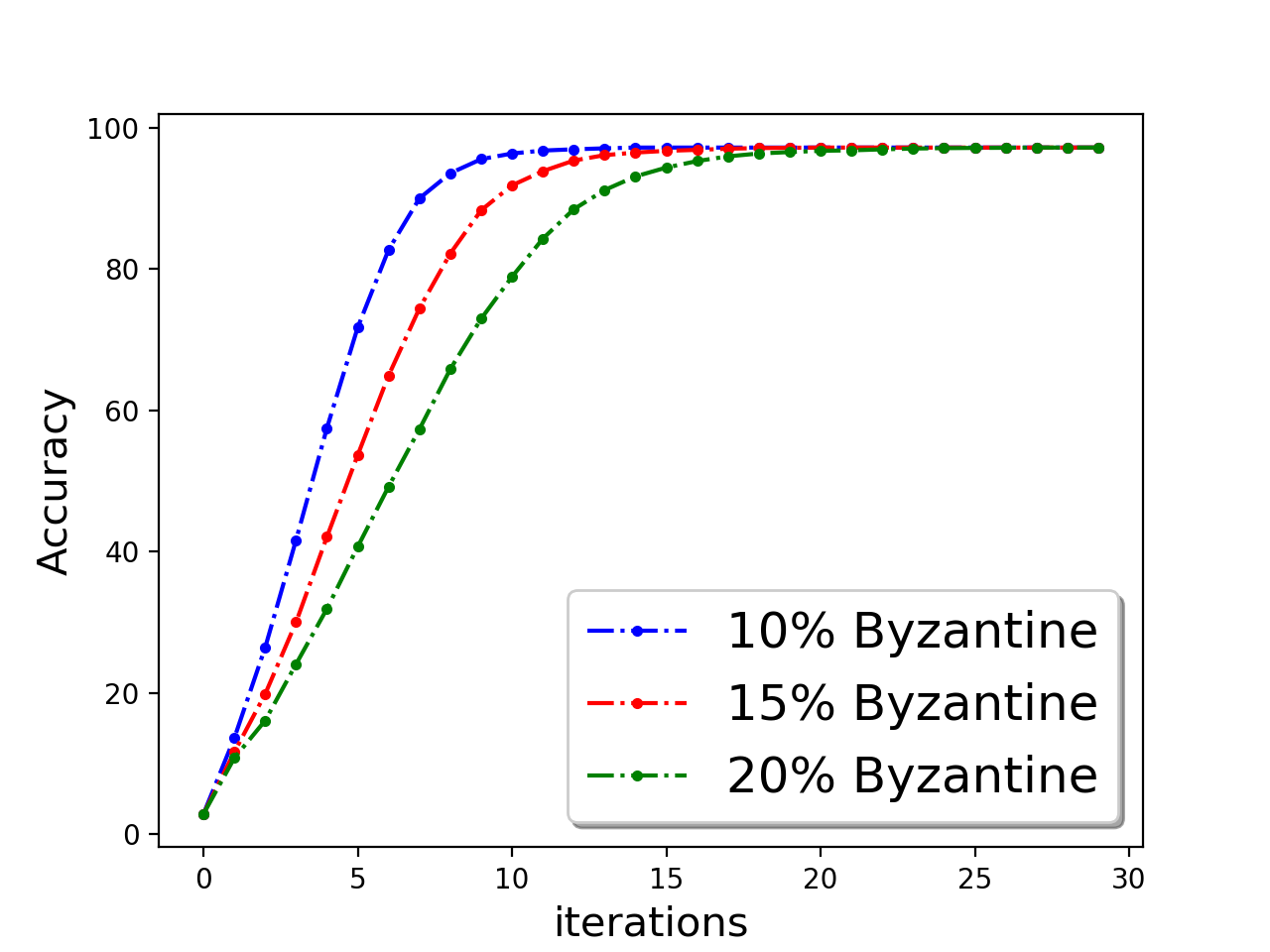} }}%
    \subfloat[a9a]{{\includegraphics[height = 3cm,width=3.5cm]{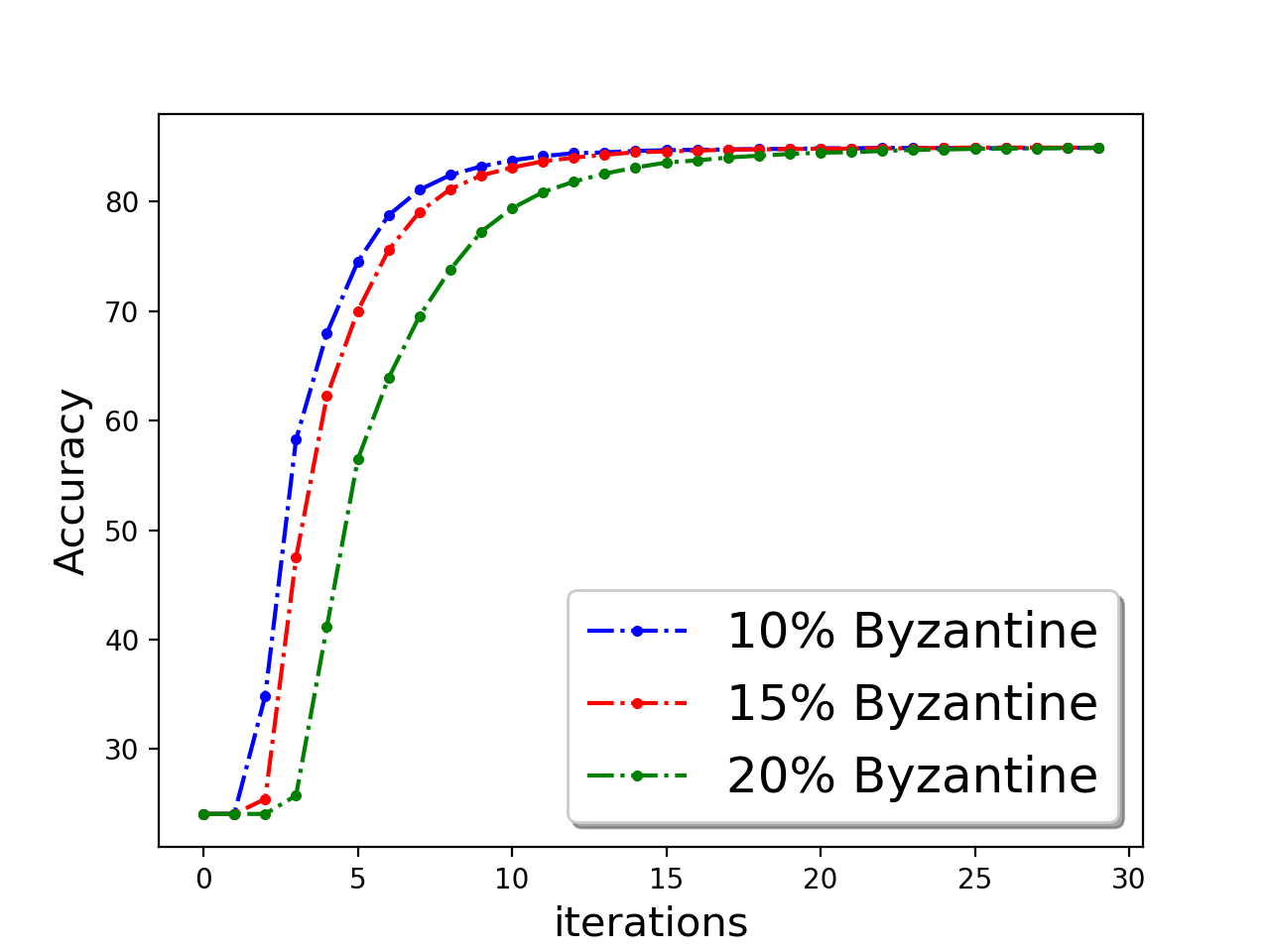} }}%
    \subfloat[Epsilon]{{\includegraphics[height = 3cm,width=3.5cm]{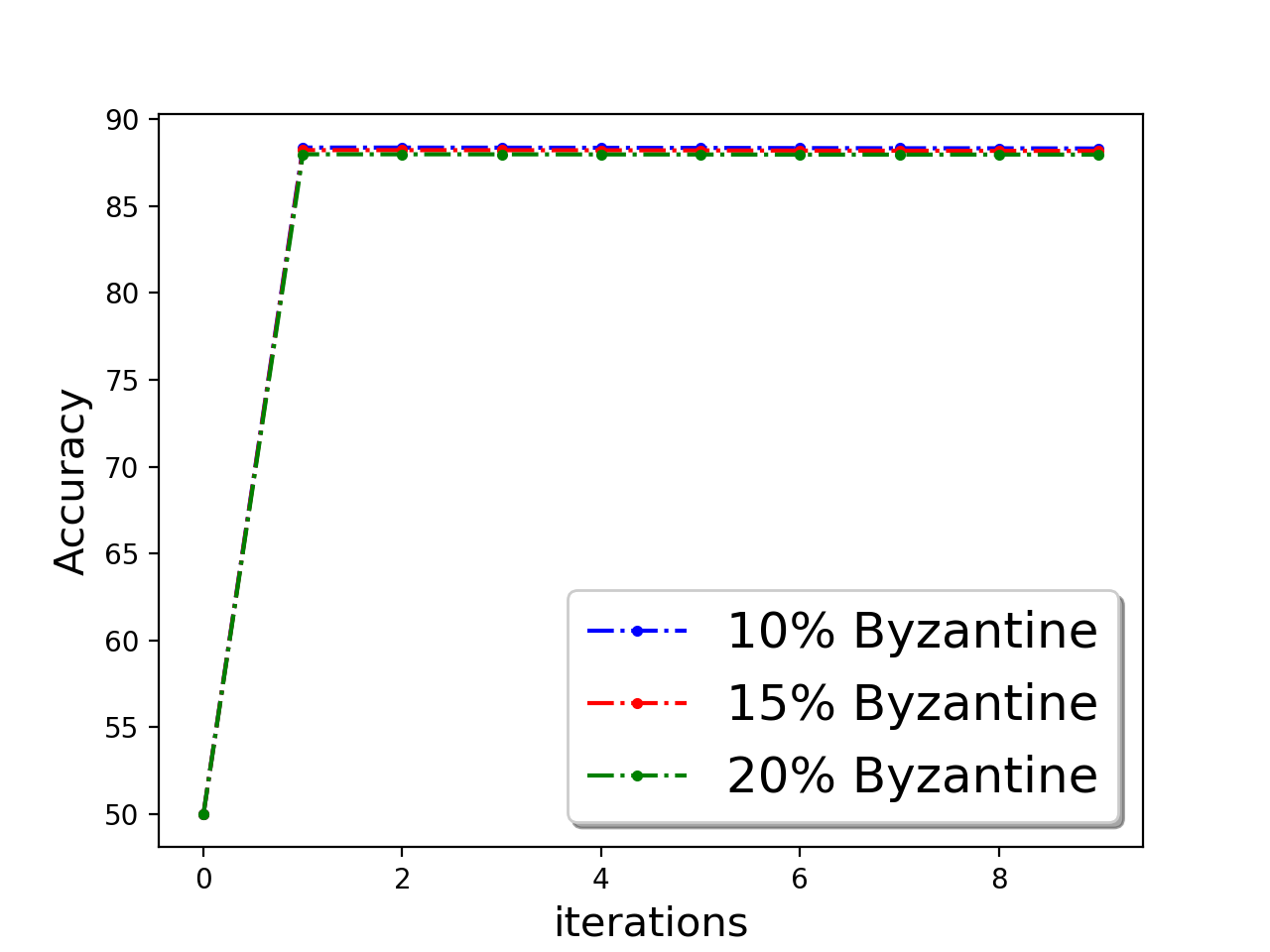} }}%
    \subfloat[covtype]{{\includegraphics[height = 3cm,width=3.3cm]{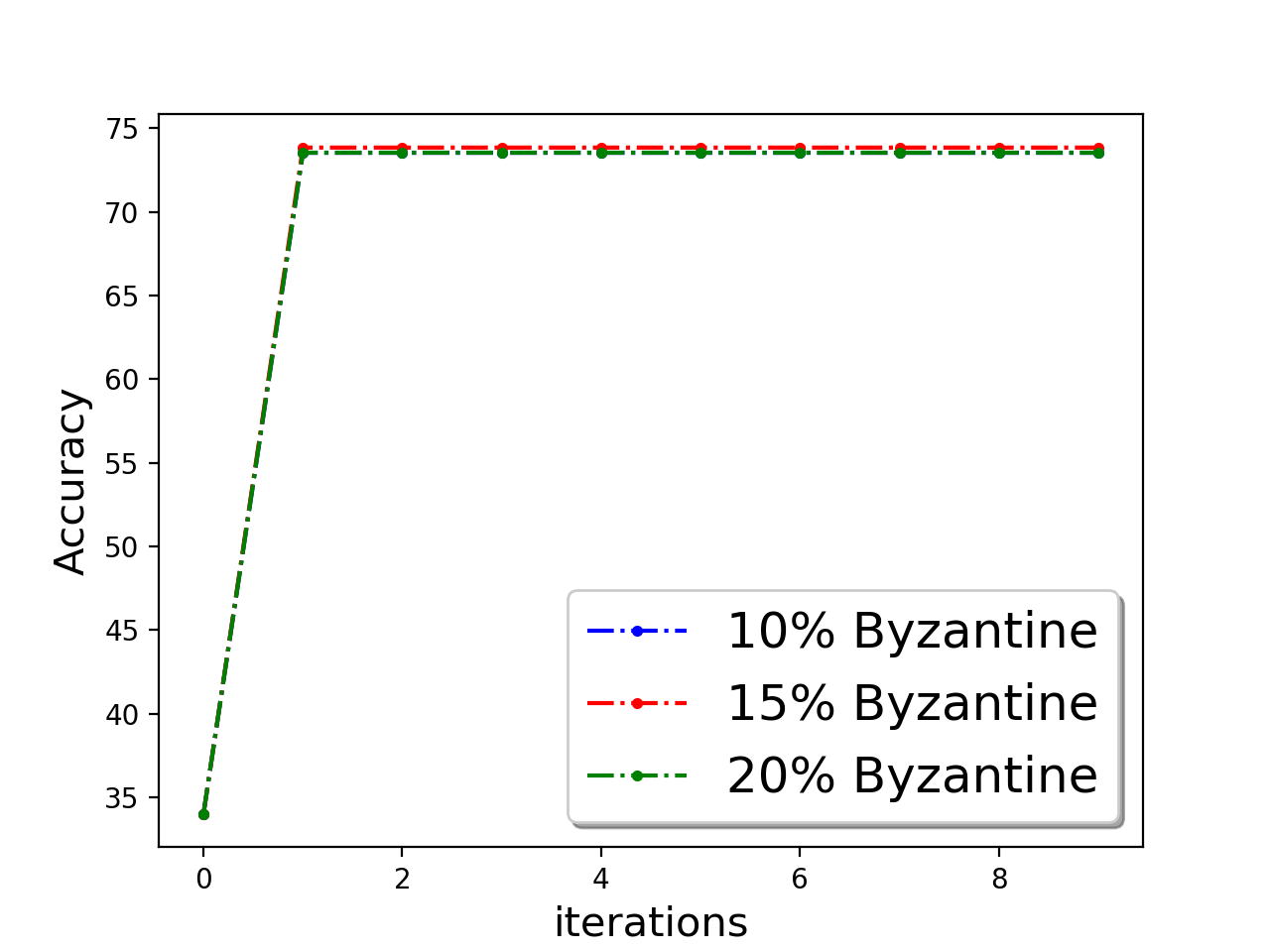} }}%
\vspace{-10pt}
  \subfloat[w5a]{{\includegraphics[height = 3cm,width=3.5cm]{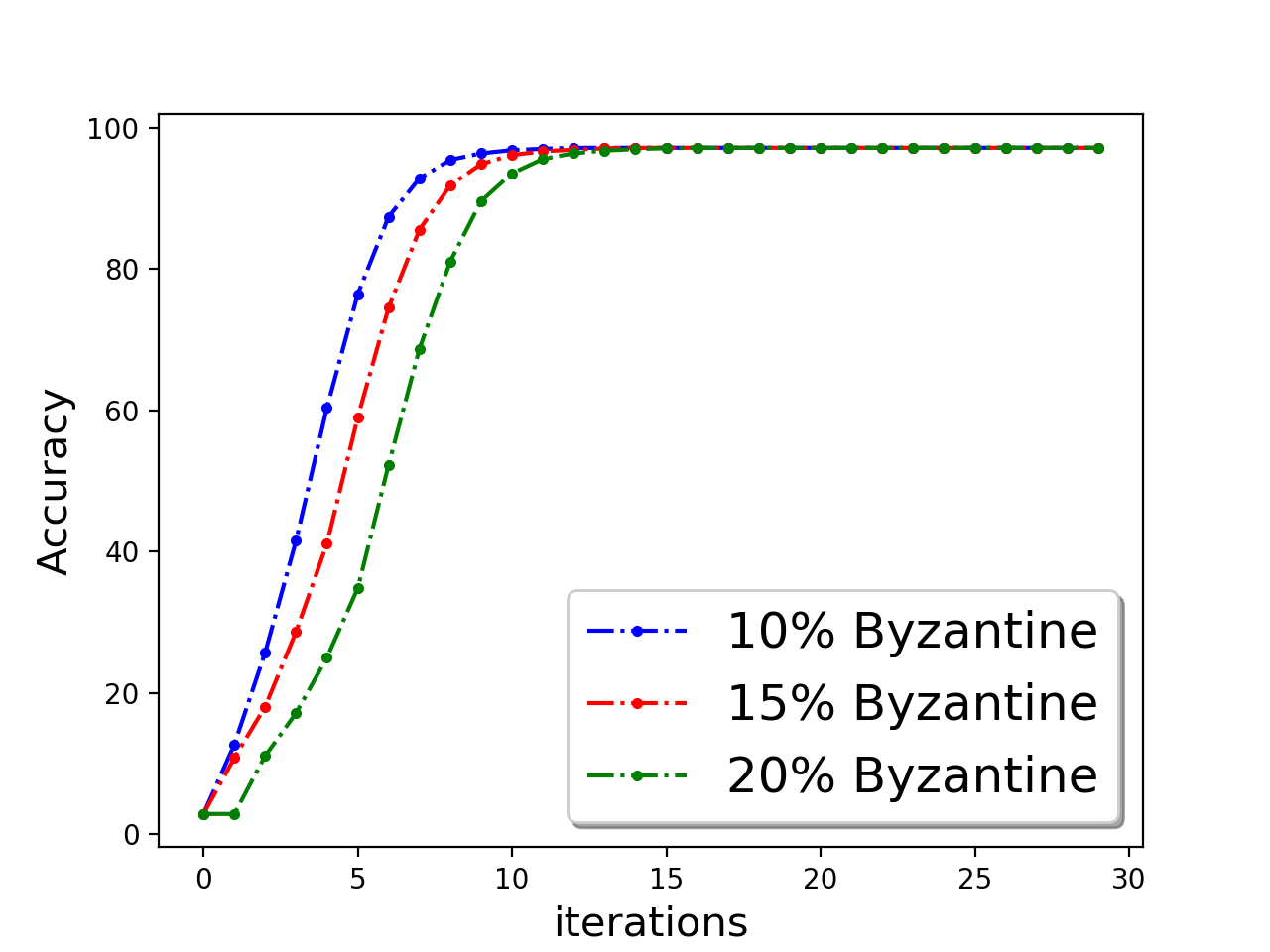} }}%
    \subfloat[a9a]{{\includegraphics[height = 3cm,width=3.5cm]{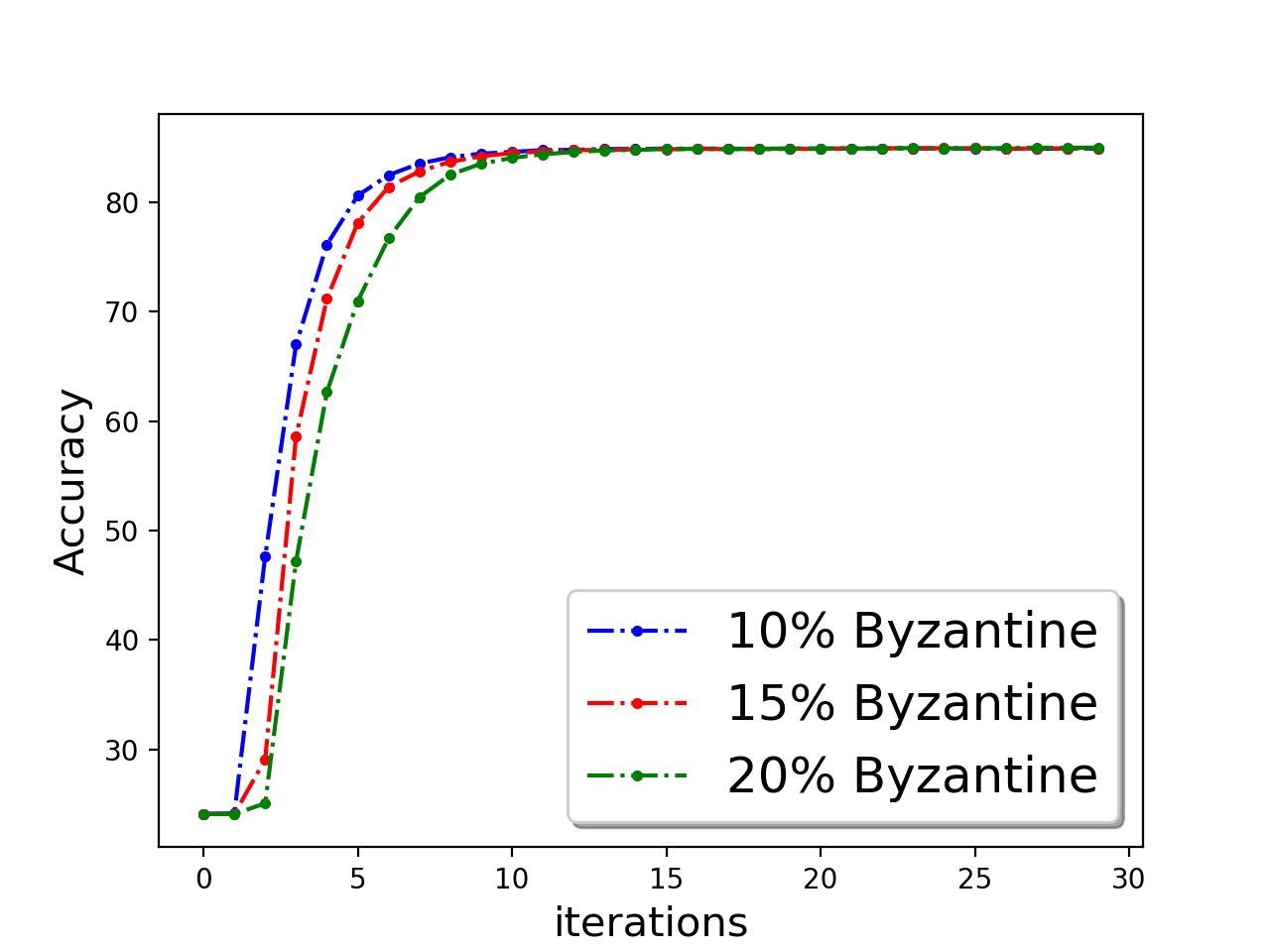} }}%
    \subfloat[Epsilon]{{\includegraphics[height = 3cm,width=3.5cm]{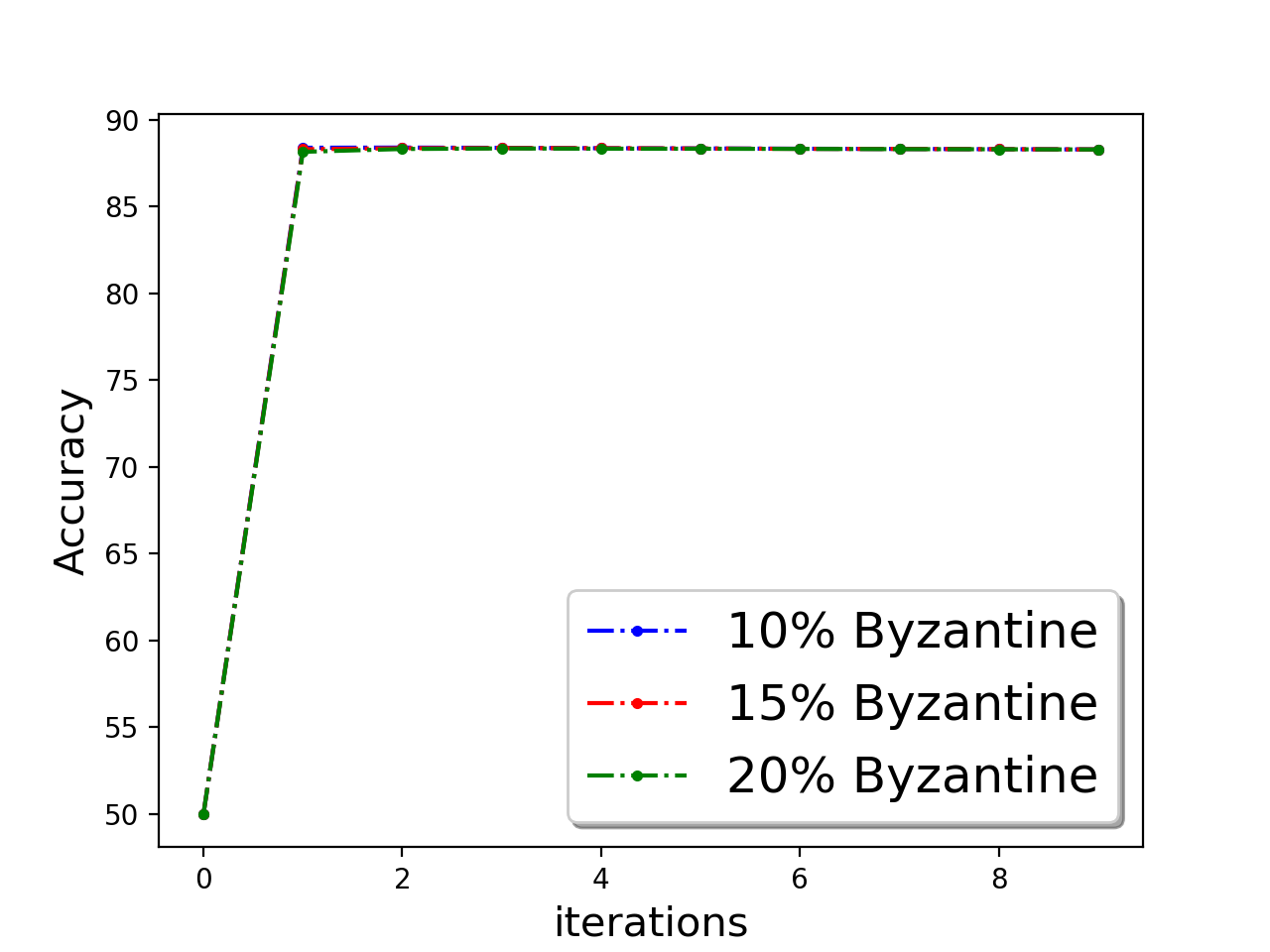} }}%
    \subfloat[covtype]{{\includegraphics[height = 3cm,width=3.3cm]{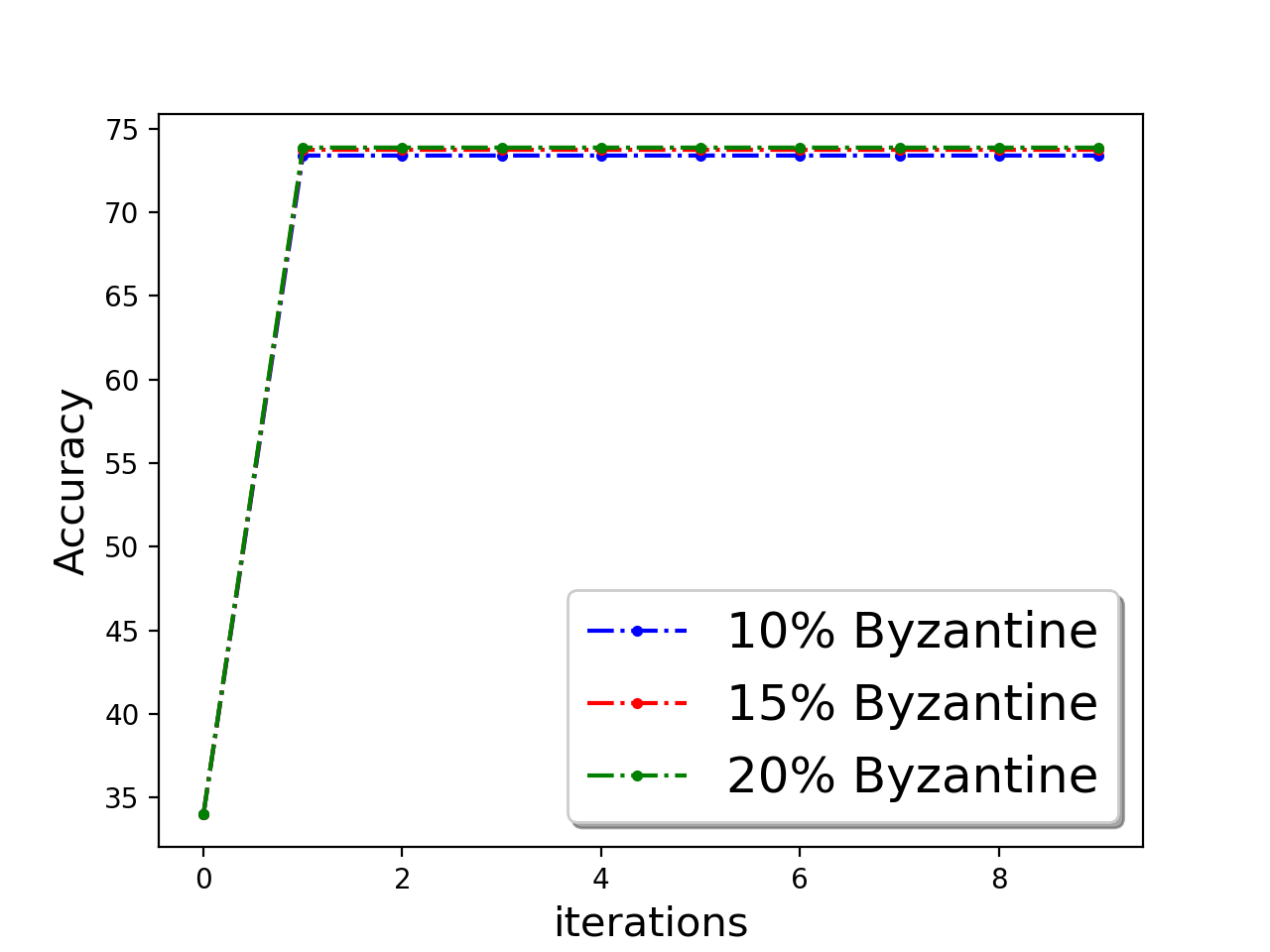} }}%
\vspace{-10pt}
    \subfloat[w5a `flipped']{{\includegraphics[height = 3cm,width=3.5cm]{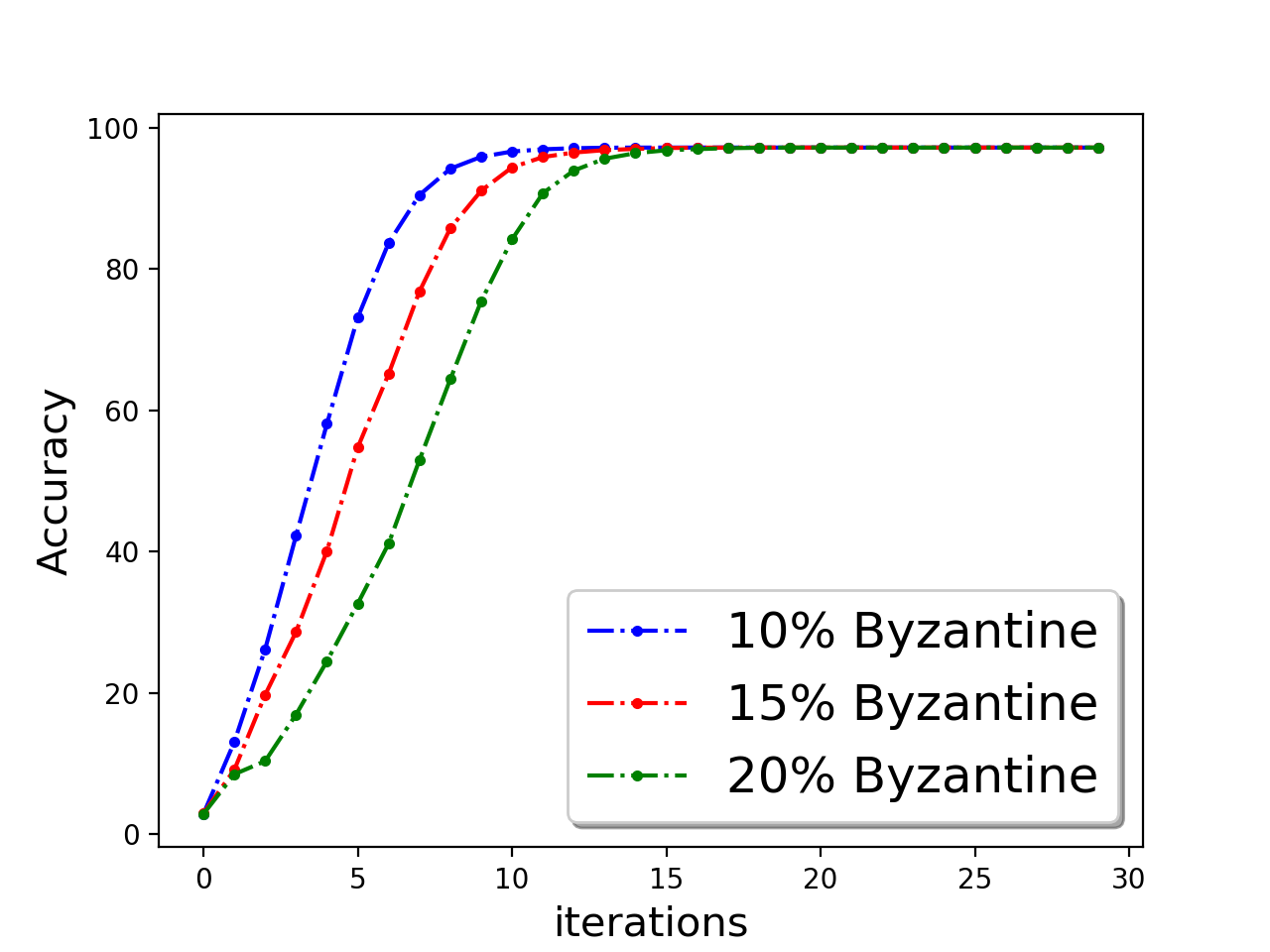} }}%
    \subfloat[w5q `negative']{{\includegraphics[height = 3cm,width=3.5cm]{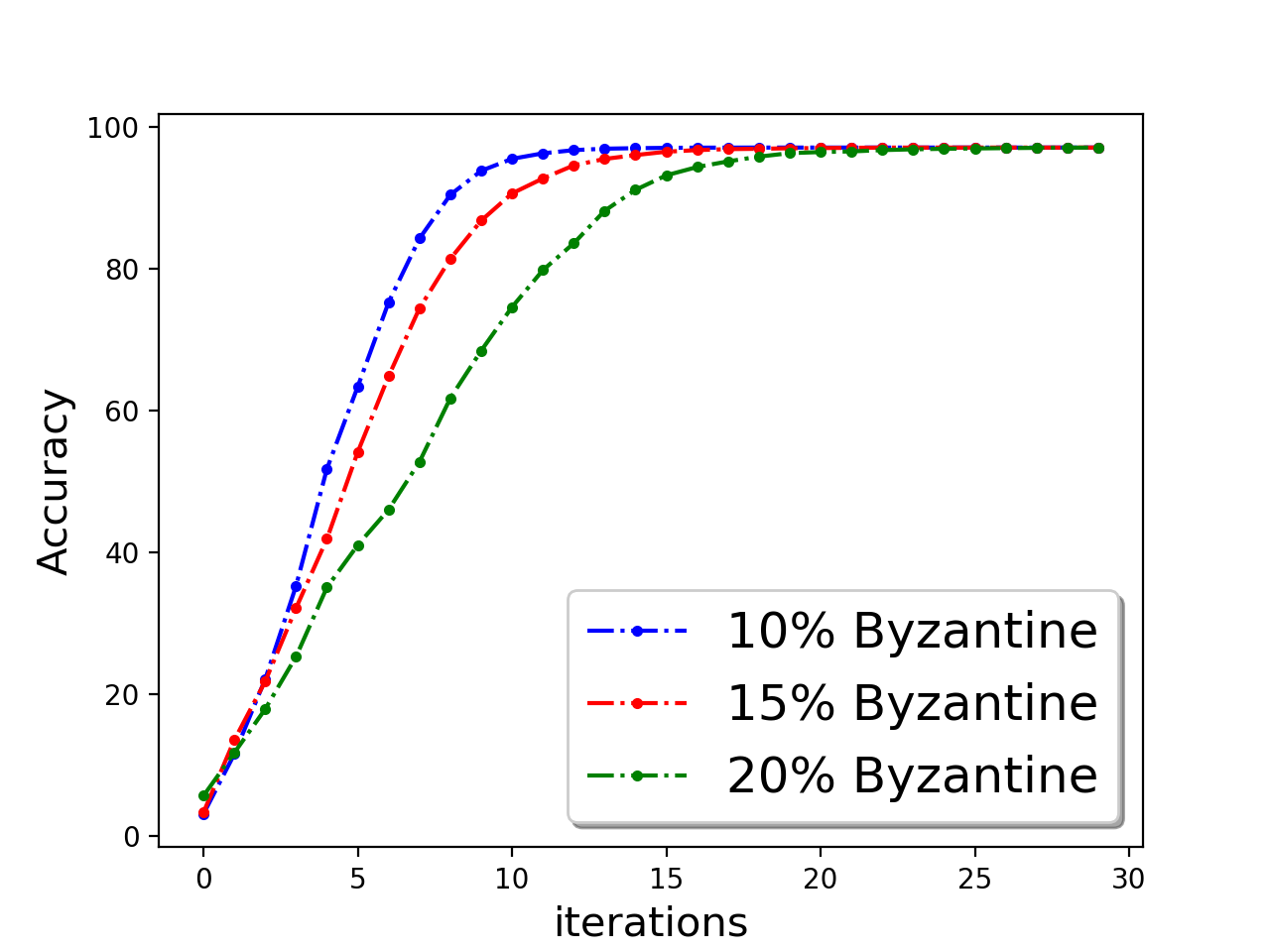} }}%
    \subfloat[a9a `flipped']{{\includegraphics[height = 3cm,width=3.5cm]{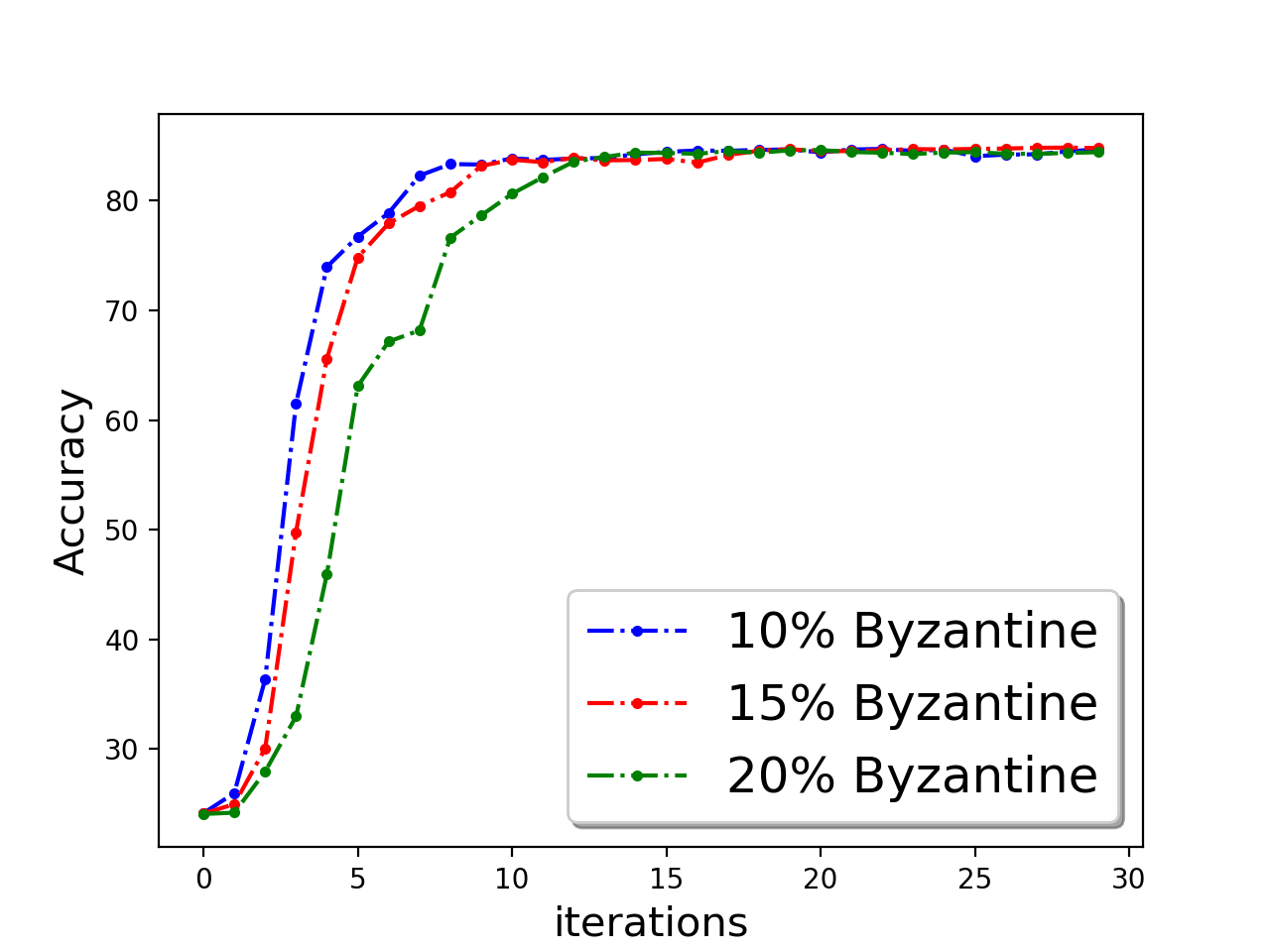} }}%
    \subfloat[a9a `negative']{{\includegraphics[height = 3cm,width=3.5cm]{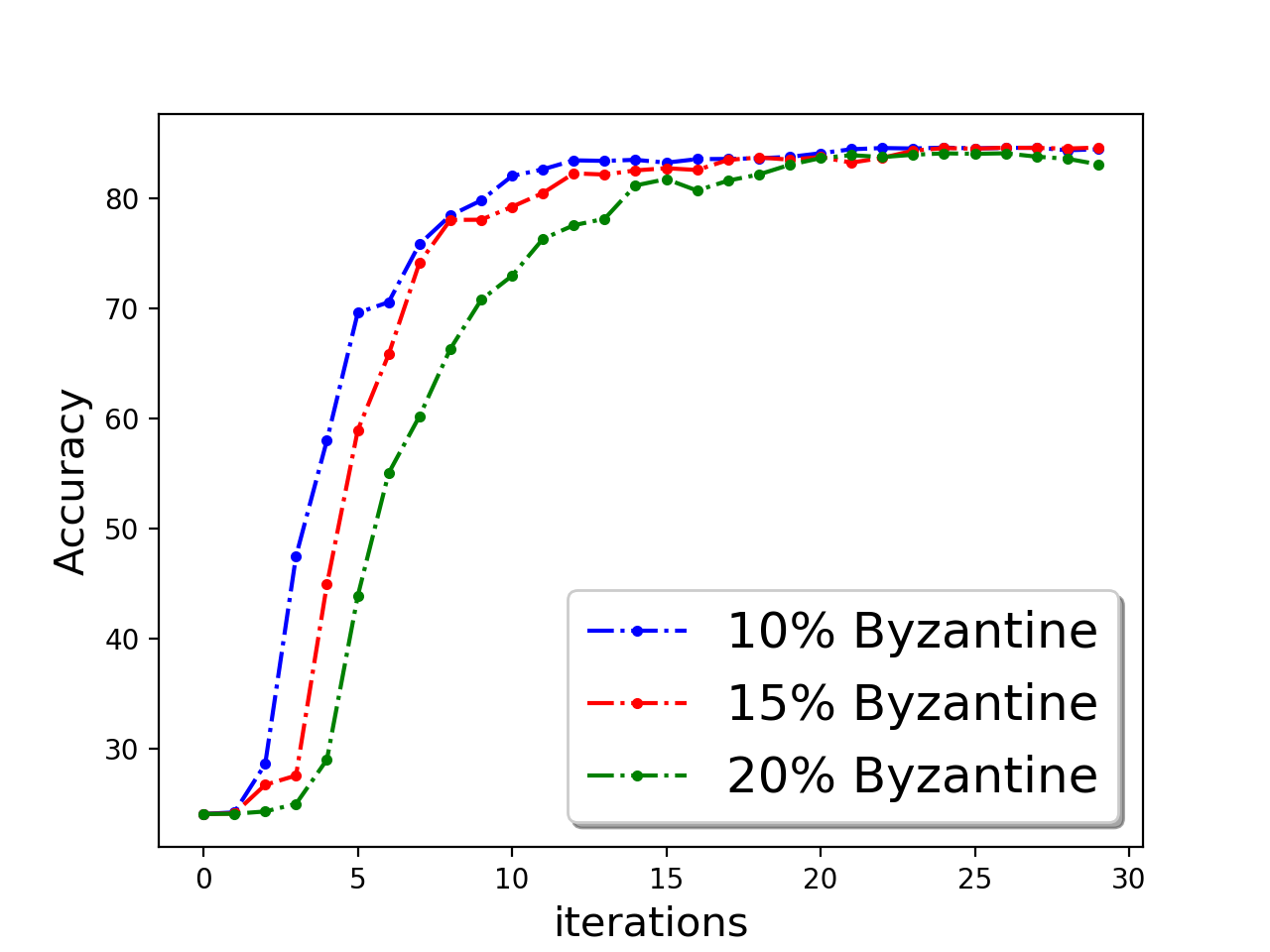} }}%
    \caption{(First row) Accuracy of \textsf{COMRADE}  with $10\%,15\%,20\%$ Byzantine workers with  `negative update' attack for (a).  w5a (b). a9a (c). covtype (d). Epsilon. (Second row) \textsf{COMRADE} accuracy  with $10\%,15\%,20\%$ Byzantine  workers with  `flipped label' attack for (e) w5a (f)   a9a (g) covtype (h) Epsilon.  (Third row) Accuracy of \textsf{COMRADE} with $\rho$-approximate compressor (Section~\ref{sec:compress}) with $10\%,15\%,20\%$ Byzantine  workers; (i) `flipped label' attack for w5a  (j) `negative update' attack for w5a. (k) `flipped label' attack for a9a . (l) `negative update' attack for a9a dataset.}%
    \label{fig:byz}%
\end{figure}

In Figure~\ref{fig:comparision}(first row) we compare \textsf{COMRADE} in non-Byzantine setup ($\alpha =\beta=0$) with the state-of the art algorithm GIANT~\cite{giant}. It is evident from the plot that despite the fact that \textsf{COMRADE} requires less communication, the algorithm is able to achieve similar accuracy.  Also, we show the ineffectiveness of GIANT in the presence of Byzantine attacks. In Figure ~\ref{fig:byz}((e),(f)) we show the accuracy for flipped label and negative update attacks. These plots are an indicator of the requirement of robustness in the learning algorithm. So we device `Robust GIANT', which is GIANT algorithm with added `norm based thresholding' for robustness. In particular, we trim the worker machines based on the local gradient norm in the first round of communication of GIANT. Subsequently, in the second round of communication, the non-trimmed worker machines send the updates (product of local Hessian inverse and the local gradient) to the center machine. We compare \textsf{COMRADE} with `Robust GIANT' in Figure~\ref{fig:comparision}((g),(h))  with $10\%$ Byzantine worker machines for `a9a' dataset. It is evident plot that \textsf{COMRADE} performs better than the `Robust GIANT'.

%

Next we show the accuracy of \textsf{COMRADE} with different numbers of  Byzantine worker machines.  Here we choose $c=0.9$. We show the accuracy for 'negetive update ' attack in Figure~\ref{fig:byz}(first row)  and 'flipped label' attack in Figure~\ref{fig:byz} (second row). Furthermore, we show that \textsf{COMRADE} works even when $\rho$-approximate compressor is applied to the updates. In Figure ~\ref{fig:byz}(Third row) we plot the tranning accuracies. For compression we apply the scheme known as QSGD \cite{qsgd}. Further experiments can be found in the supplementary material.

\section{Conclusion and Future Work}
In this paper, we address the issue of communication efficiency and Byzantine robustness via second order optimization and norm based thresholding respectively for strongly convex loss. Extending our setting to handle weakly convex and non-convex loss is of immediate interest. We would also like to exploit local averaging with second order optimization. Moreover, an import aspect, privacy, is not addressed in this work. We keep this as our future research direction.

\bibliographystyle{abbrv}
\bibliography{secondref}
%

\begin{center}
{\Large \textbf{Appendix}}
\end{center}

\section{Appendix A: Analysis of Section~\ref{sec:one}}
\textbf{Matrix Sketching} \\ Here we briefly discuss the matrix sketching that is broadly used in the context of \emph{randomized linear algebra}. For any matrix $\bA \in \mathbb{R}^{n \times d}$ the sketched  matrix  $\bZ \in \mathbb{R}^{s\times d}$ is defined as $\bS^T\bA$ where $\bS \in \mathbb{R}^{n\times s}$ is the sketching matrix (typically $s<n$). Based on the scope and basis of the application,  the sketched matrix is constructed by taking linear combination of the rows of matrix  which is known as \emph{random projection} or by sampling and scaling  a subset of the rows of the matrix which is known as \emph{random sampling}. The sketching is done to get a smaller representation of the original matrix  to reduce  computational cost.
  
Here we consider a uniform row sampling scheme. The matrix $\bZ$ is formed by sampling and scaling  rows of the matrix $\bA$.  Each row of the matrix $\bA $ is sampled with probability $p=\frac{1}{n}$ and scaled by multiplying with $\frac{1}{\sqrt{sp}}$ .
\begin{align*}
\mathbb{P}\left( \z_i= \frac{\ma_j}{\sqrt{sp}} \right)=p,
\end{align*}
 where $\z_i$ is the $i$-th  row  matrix $\bZ$  and  $\ma_j$ is the $j$ th row of the matrix $\bA$. Consequently the sketching matrix $\bS$ has one non-zero entry in each column. 

 We define the matrix $\bA_t^{\top}= [\ma_1^{\top},\ldots, \ma_n^{\top}]\in \mathbb{R}^{d\times n}$ where $\ma_j = \sqrt{\ell''_j(\w^{\top}\x_j)}\,\x_j$. So the exact Hessian in equation~\eqref{globalgradhess} is  $\bH_t= \frac{1}{n}\bA_t^{\top}\bA_t + \lambda \mathbf{I}$. Assume that $S_i$ is the set of features that are held by the $i$th worker machine. So the local Hessian is 
\begin{align*}
    \bH_{i ,t}= \frac{1}{s}\sum_{j\in S_i}\ell''_j(\w^{\top}\x_j)\x_j\x_j^{\top} +\lambda \mathbf{I}= \frac{1}{s}\bA_{i,t}^{\top}\bA_{i,t}+ \lambda \mathbf{I},
\end{align*}
where $\bA_{i,t}\in \mathbb{R}^{s\times d}$ and the row of the matrix $\bA_{i,t} $ is indexed by $S_i$.
Also we define $\bB_t= [\mb_1,\ldots ,\mb_n] \in \mathbb{R}^{d\times n}$ where $\mb_i= \ell'_i(\w^{\top}\x_i)\x_i$. So the exact gradient in equation~\eqref{globalgradhess} is $\g_t= \frac{1}{n}\bB_t\mathbf{1}+ \lambda\w_t$ and the local gradient is
\begin{align*}
\g_{i,t}= \frac{1}{s}\sum_{i \in S_i}\ell_j'(\w_t^{\top}\x_i)\x_i + \lambda\w_t =\frac{1}{s}\bB_{i,t}\mathbf{1} + \lambda\w_t,
\end{align*}
where $\bB_{i,t}$ is the matrix with column indexed by $S_i$. If $\{\bS_i\}_{i=1}^m$ are the sketching matrices then the local Hessian and gradient can be expressed as 
\begin{align}
\bH_{i ,t} = \bA_t^{\top}\bS_i\bS_i^{\top}\bA_t^{\top} + \lambda\mathbf{I} && \g_{i,t}= \frac{1}{n}\bB\bS_i\bS_i^{\top}\mathbf{1} + \lambda\w.
\end{align}
With the help of sketching idea later we show that the  local hessian and gradient are close to the exact hessian and gradient.

\textbf{The Quadratic function }  For the purpose of analysis we define an auxiliary  quadratic function 
\begin{align}
    \phi(\p)= \frac{1}{2}\p^{\top}\bH_t\p -\g_t^{\top}\p= \frac{1}{2}\p^{\top}(\bA_t^{\top}\bA_t+ \lambda \mathbf{I})\p -\g_t^{\top}\p. \label{quad}
\end{align}
The optimal solution to the above function is 
\begin{align*}
    \p^*= \arg\min \phi(\p) = \bH_t^{-1}\g_t= (\bA_t^{\top}\bA_t+ \lambda \mathbf{I})^{-1}\g_t,
\end{align*}
which is also the optimal  direction  of the global Newton update.  In this work we consider the local and  global (approximate ) Newton direction to be 
\begin{align*}
\hat{\p}_{i,t}= (\bA^{\top}\bS_i\bS_i^{\top}\bA+ \lambda \mathbf{I})^{-1}\g_{i,t}, \quad  \hat{\p}_t= \frac{1}{m}\sum_{i=1}^m \hat{\p}_{i,t}.
\end{align*}
respectively. And it can be easily verified that each local update $\hat{\p}_{i,t}$ is optimal solution to the following quadratic function
\begin{align}
\hat{\phi}_{i,t}(p)= \frac{1}{2}\p^{\top}(\bA^{\top}\bS_i\bS_i^{\top}\bA+ \lambda \mathbf{I})\p -\g_i^{\top}\p.
\end{align} 
In our convergence analysis we show that value of the quadratic function in \eqref{quad} with value $ \hat{\p}_t$ is close to the optimal value.

\textbf{Singular Value Decomposition (SVD)} For any matrix $\bA \in \mathbb{R}^{n \times d}$ with rank $r$, the singular value decomposition is defined as  $\bA = \bU\mathbf{\Sigma}\bV^{\top} $ where $\bU, \bV$ are  $n\times r$ and $d\times r$ column orthogonal matrices respectively and $\mathbf{\Sigma}$ is a $r\times r$ diagonal matrix with diagonal entries $\{\sigma_1,\ldots \sigma_r\}$. If $\bA$ is a symmetric positive semi-definite matrix then $\bU=\bV$.

\subsection{Analysis}

\begin{lemma}[McDiarmid's Inequality]\label{lem:mcd}
Let $X= X_1,\ldots ,X_m$ be $m$ independent random variables taking values from some set $A$, and assume that $f: A^m \rightarrow \mathbb{R}$ satisfies the following condition (bounded differences ):
\begin{align*} 
\sup_{x_1,\ldots ,x_m,\hat{x}_i} \left|  f(x_i ,\ldots,x_i,\ldots,x_m)- f(x_i ,\ldots,\hat{x}_i,\ldots,x_m) \right| \leq c_i,
\end{align*}
for all $ i \in  \{1,\ldots ,m \}$. Then for any $\epsilon>0 $ we have 
 
 \begin{align*}
 P\left[ f(X_1,\ldots ,X_m)- \mathbb{E}[f(X_1,\ldots ,X_m)] \geq \epsilon \right] \leq \exp \left( - \frac{2\epsilon^2}{\sum_{i=1}^mc_i^2} \right).
 \end{align*}
\end{lemma}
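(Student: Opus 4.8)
The final statement is the bounded-differences (McDiarmid) inequality, and the plan is the classical Doob-martingale argument combined with the Chernoff method and Hoeffding's lemma. First I would build the Doob martingale associated with $f$: setting $\mathcal{F}_k = \sigma(X_1,\dots,X_k)$, define
\begin{align*}
Z_k = \E\!\left[f(X_1,\dots,X_m)\mid \mathcal{F}_k\right], \qquad k=0,1,\dots,m,
\end{align*}
so that $Z_0 = \E[f]$, $Z_m = f(X_1,\dots,X_m)$, and $\{Z_k,\mathcal{F}_k\}$ is a martingale with increments $D_k = Z_k - Z_{k-1}$ satisfying $\E[D_k\mid\mathcal{F}_{k-1}]=0$. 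The target tail $\Prob[f-\E f \ge \epsilon]$ is then the tail of the telescoping sum $\sum_{k=1}^m D_k$.

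The crucial step is to show that, conditionally on $\mathcal{F}_{k-1}$, each increment $D_k$ is confined to an interval whose length is at most $c_k$. Here both hypotheses enter: by independence of the $X_i$, conditioning on $X_1,\dots,X_{k-1}$ lets me write $Z_k$ and $Z_{k-1}$ as expectations over the remaining coordinates $X_{k+1},\dots,X_m$ in which only the $k$-th argument is changed, and the bounded-differences condition then forces the supremum and infimum of $D_k$ over the value of $X_k$ to differ by at most $c_k$. Concretely I would set $L_k = \inf_{x}\E[f\mid X_1,\dots,X_{k-1},X_k=x] - Z_{k-1}$ and $U_k = \sup_{x}(\cdots) - Z_{k-1}$, then verify $L_k \le D_k \le U_k$ and $U_k - L_k \le c_k$ pointwise on $\mathcal{F}_{k-1}$.

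With the increments controlled, I would apply the Chernoff bound and peel off the martingale differences one at a time using the tower property: for any $s>0$,
\begin{align*}
\Prob[f-\E f \ge \epsilon] \le e^{-s\epsilon}\,\E\Big[\textstyle\prod_{k=1}^m e^{sD_k}\Big] = e^{-s\epsilon}\,\E\Big[\textstyle\prod_{k=1}^{m-1}e^{sD_k}\,\E\!\big[e^{sD_m}\mid\mathcal{F}_{m-1}\big]\Big].
\end{align*}
Since $\E[D_k\mid\mathcal{F}_{k-1}]=0$ and $D_k$ lies in an interval of length $\le c_k$, Hoeffding's lemma gives $\E[e^{sD_k}\mid\mathcal{F}_{k-1}]\le \exp(s^2 c_k^2/8)$; iterating the conditioning yields $\E[e^{s(f-\E f)}]\le \exp\!\big(\frac{s^2}{8}\sum_{k=1}^m c_k^2\big)$ and hence $\Prob[f-\E f\ge\epsilon]\le \exp\!\big(-s\epsilon+\frac{s^2}{8}\sum_k c_k^2\big)$.

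Finally I would optimize the free parameter, taking $s = 4\epsilon/\sum_{k=1}^m c_k^2$, which minimizes the exponent and produces exactly $\exp\!\big(-2\epsilon^2/\sum_{k=1}^m c_k^2\big)$, the claimed bound. The main obstacle is the second step: establishing the conditional interval bound $U_k - L_k \le c_k$, which is the only place independence is genuinely needed (to integrate out the later coordinates cleanly) and which must be argued carefully because $D_k$ is itself a random, $\mathcal{F}_{k-1}$-measurable functional rather than a plain difference of evaluations of $f$. The auxiliary Hoeffding lemma (the sub-Gaussian moment generating function bound for a zero-mean variable supported on an interval) I would treat as a standard ingredient, proved via convexity of $x\mapsto e^{sx}$ and a second-order Taylor estimate on its cumulant generating function.
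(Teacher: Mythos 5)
Your proof is correct: the Doob-martingale construction, the conditional increment bound $U_k - L_k \le c_k$ (where independence is used to integrate out the later coordinates), Hoeffding's lemma giving $\E[e^{sD_k}\mid\mathcal{F}_{k-1}]\le e^{s^2c_k^2/8}$, and the optimization $s = 4\epsilon/\sum_k c_k^2$ all fit together exactly as claimed and yield the stated bound $\exp(-2\epsilon^2/\sum_k c_k^2)$. Note, however, that the paper offers no proof of this lemma at all --- it is invoked by name as McDiarmid's inequality, a standard concentration result, and used as a black box in the proof of its gradient-sketching lemma --- so your argument is the canonical textbook proof supplied where the paper simply cites.
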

The property described  in the following  Lemma~\ref{lem:Hessketch} is a very useful result for  uniform row sampling sketching matrix. 

\begin{lemma}[Lemma 8 \cite{giant}]\label{lem:Hessketch}
Let $\eta,\delta \in (0,1)$ be  a fixed parameter and $r= \text{rank}(\bA_t)$ and $\bU \in \mathbb{R}^{n \times r}$ be the orthonormal bases of the matrix $\bA_t$. Let $\{\bS_i\}_{i=1}^m$  be sketching matrices and $\bS= \frac{1}{\sqrt{m}}[\bS_1,\ldots \bS_m] \in \mathbb{R}^{n\times ms}$. With probability $1-\delta$ the following holds
\begin{align*}
\left\|\bU^{\top}\bS_i\bS_i^{\top}\bU - \mathbf{I} \right\|_2 \leq \eta \quad \forall i \in [m]\quad   \text{ and } \quad  \left\|\bU^{\top}\bS\bS^{\top}\bU - \mathbf{I} \right\|_2 \leq \frac{\eta}{\sqrt{m}}.
\end{align*} 
\end{lemma}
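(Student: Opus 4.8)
The plan is to treat each sketched Gram matrix as an average of independent rank-one terms and apply a matrix Chernoff bound. Fix a worker $i$. Because uniform row sampling draws each index with probability $p = 1/n$ and rescales the sampled row by $1/\sqrt{sp}$, I would write $\bU^{\top}\bS_i\bS_i^{\top}\bU = \sum_{k=1}^{s}\mathbf{X}_k$, where $\mathbf{X}_k = \frac{1}{sp}\bu_{j_k}\bu_{j_k}^{\top}$ and $j_1,\dots,j_s$ are i.i.d.\ uniform on $[n]$, with $\bu_j \in \mathbb{R}^r$ the $j$-th row of $\bU$. Since $\sum_{j=1}^n \bu_j\bu_j^{\top} = \bU^{\top}\bU = \bI$, a one-line computation gives $\E[\mathbf{X}_k] = \frac{1}{s}\bI$ and hence $\E[\bU^{\top}\bS_i\bS_i^{\top}\bU] = \bI$. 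The lemma is therefore the statement that this empirical average concentrates around its mean in operator norm.

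First I would bound the per-term operator norm via the coherence. Each $\mathbf{X}_k$ is positive semidefinite with $\|\mathbf{X}_k\|_2 = \frac{1}{sp}\|\bu_{j_k}\|^2 \le \frac{n}{s}\max_j\|\bu_j\|^2 = \frac{\mu d}{s}$, using $\max_j\|\bu_j\|^2 = \frac{\mu d}{n}$ from Definition~\ref{def:cohe}. With the mean equal to $\bI$ (so $\lambda_{\max} = \lambda_{\min} = 1$) and per-term bound $R = \mu d/s$, the matrix Chernoff inequality for sums of bounded PSD matrices yields, for $\eta \in (0,1)$,
\[
\Prob\!\left[\left\|\bU^{\top}\bS_i\bS_i^{\top}\bU - \bI\right\|_2 \ge \eta\right] \le 2r\,\exp\!\left(-\frac{\eta^2 s}{3\mu d}\right).
\]
Substituting the hypothesis $s \ge \frac{3\mu d}{\eta^2}\log\frac{md}{\delta}$ forces the exponent to be at least $\log(md/\delta)$, so the right-hand side is at most $2r\delta/(md) \le 2\delta/m$ since $r \le d$, and a union bound over the $m$ workers then controls all local deviations simultaneously.

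For the second, stacked bound I would use $\bS\bS^{\top} = \frac{1}{m}\sum_{i=1}^m \bS_i\bS_i^{\top}$, so that $\bU^{\top}\bS\bS^{\top}\bU$ is an average of $ms$ i.i.d.\ rank-one terms of the form $\frac{1}{msp}\bu\bu^{\top}$, each of operator norm at most $\frac{\mu d}{ms}$ and with total mean $\bI$. Applying the same inequality with deviation $\eta/\sqrt{m}$ and per-term bound $\mu d/(ms)$ produces a tail exponent $\frac{(\eta/\sqrt{m})^2 \cdot ms}{3\mu d} = \frac{\eta^2 s}{3\mu d}$, identical to the single-worker case, so the very same sample complexity delivers the tighter $\eta/\sqrt{m}$ accuracy.

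The hard part is not any deep difficulty but the bookkeeping: one must confirm that the dimension factor $r \le d$, the $m$-fold union bound, and the two-sided ($\lambda_{\max}$ and $\lambda_{\min}$) tails are all absorbed cleanly into the $\log(md/\delta)$ term, which may force the universal constant in the sample-complexity bound to be slightly above the nominal $3$ (harmless slack). A secondary point to check carefully is the reduction to the orthonormal basis $\bU$ of $\bA_t$, so that $\max_j\|\bu_j\|^2$ is governed exactly by the coherence $\mu$ of Definition~\ref{def:cohe}; this is precisely what lets \emph{uniform} sampling succeed with only $O(\mu d \log(md/\delta)/\eta^2)$ samples per worker.
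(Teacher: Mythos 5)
Your proof is sound, but note that this paper never proves the lemma at all: it is imported verbatim as Lemma 8 of GIANT \cite{giant} and used as a black box in the appendix, so your matrix Chernoff derivation is a self-contained reconstruction rather than a parallel of any argument in the paper. The reconstruction is the standard route and it checks out. Writing $\bU^{\top}\bS_i\bS_i^{\top}\bU$ as a sum of $s$ i.i.d.\ PSD rank-one terms with mean $\frac{1}{s}\bI$ and per-term operator norm at most $\mu d/s$ (exactly where coherence and the rescaling $1/\sqrt{sp}$ enter) gives the two-sided tail $2r\exp\bigl(-\eta^2 s/(3\mu d)\bigr)$, which the hypothesis $s \geq \frac{3\mu d}{\eta^2}\log\frac{md}{\delta}$ makes small enough to survive the union bound over the $m$ workers; and since $\bS\bS^{\top}=\frac{1}{m}\sum_{i=1}^m \bS_i\bS_i^{\top}$, the stacked matrix is a sum of $ms$ i.i.d.\ terms each bounded by $\mu d/(ms)$, so demanding accuracy $\eta/\sqrt{m}$ costs the identical exponent $\eta^2 s/(3\mu d)$. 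That last observation is the real content of the lemma --- averaging $m$ independent sketches buys a $\sqrt{m}$ accuracy gain at no extra sample cost --- and it is what produces the $\eta/\sqrt{m}$ term in $\zeta$ in Theorem~\ref{thm:smooth}. What your argument buys over the paper's citation is transparency: it exposes why the sample complexity has precisely the form $\frac{3\mu d}{\eta^2}\log\frac{md}{\delta}$ (Chernoff constant $3$, dimension factor $r\le d$, union-bound factor $m$). The only defect is the one you flag yourself: the failure probabilities accumulate to a constant multiple of $\delta$ (roughly $2\delta$ from the per-worker bounds plus $2\delta/m$ from the stacked bound), so either the constant $3$ or the argument of the logarithm needs a minor adjustment; this is harmless bookkeeping and does not affect the statement order-wise.
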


\begin{lemma}\label{lem:gradsketch}
Let $\bS \in \mathbb{R}^{n \times s}$  be any uniform sampling sketching matrix, then for any matrix  $\bB= [\mb_1,\ldots ,\mb_n] \in \mathbb{R}^{d\times n}$ with probability $1-\delta$ for any $\delta>0$ we have, 
\begin{align*}
   \|\frac1n \bB\bS\bS^{\top}\mathbf{1}- \frac1n \bB\mathbf{1}\| \leq  (1+ \sqrt{2\ln (\frac{1}{\delta})})\sqrt{\frac{1}{s}}\max_i \|\mb_i\|,
\end{align*}
where $\mathbf{1}$ is  all ones vector. 
\end{lemma}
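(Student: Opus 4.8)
The plan is to recast the sketched quantity as an average of i.i.d.\ sampled columns and then combine McDiarmid's inequality (Lemma~\ref{lem:mcd}) with a Jensen/variance bound on the expectation. First I would unpack the structure of the uniform sampling sketch $\bS \in \mathbb{R}^{n \times s}$ described above: each of its $s$ columns has a single nonzero entry equal to $1/\sqrt{sp} = \sqrt{n/s}$ (since $p = 1/n$), whose row index $j_k \in [n]$ is drawn independently and uniformly. A direct computation then gives $\bS^\top \mathbf{1} = \sqrt{n/s}\,\mathbf{1}_s$, and collapsing the double multiplication yields
\begin{align*}
\frac{1}{n}\bB\bS\bS^\top\mathbf{1} = \frac{1}{s}\sum_{k=1}^s \mb_{j_k},
\end{align*}
with the $j_k$ i.i.d.\ uniform on $[n]$. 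Because $\E[\mb_{j_k}] = \frac{1}{n}\sum_{j=1}^n \mb_j = \frac{1}{n}\bB\mathbf{1}$, the target $\|\frac{1}{n}\bB\bS\bS^\top\mathbf{1} - \frac{1}{n}\bB\mathbf{1}\|$ is precisely the deviation of an empirical mean of bounded i.i.d.\ vectors from its mean.

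Next I would apply McDiarmid to $f(j_1,\dots,j_s) = \|\frac{1}{s}\sum_k \mb_{j_k} - \frac{1}{n}\bB\mathbf{1}\|$, viewed as a function of the independent indices $j_1,\dots,j_s$. Replacing one index $j_k$ shifts the inner average by at most $\frac{1}{s}\|\mb_{j_k} - \mb_{\hat{j}_k}\| \le \frac{2}{s}\max_i\|\mb_i\|$, so the bounded-difference constants are $c_k = \frac{2}{s}\max_i\|\mb_i\|$ and $\sum_k c_k^2 = \frac{4}{s}\max_i\|\mb_i\|^2$. Lemma~\ref{lem:mcd} then gives, with probability at least $1-\delta$,
\begin{align*}
f \le \E[f] + \sqrt{2\ln(1/\delta)}\,\sqrt{\tfrac{1}{s}}\,\max_i\|\mb_i\|.
\end{align*}

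Finally I would bound $\E[f]$. By Jensen, $\E[f] \le \sqrt{\E[f^2]}$, and since the centered summands $\mb_{j_k} - \frac{1}{n}\bB\mathbf{1}$ are i.i.d.\ with mean zero, all cross terms vanish, giving $\E[f^2] = \frac{1}{s}\,\E\|\mb_{j_1} - \frac{1}{n}\bB\mathbf{1}\|^2 \le \frac{1}{s}\,\E\|\mb_{j_1}\|^2 \le \frac{1}{s}\max_i\|\mb_i\|^2$, hence $\E[f] \le \sqrt{1/s}\,\max_i\|\mb_i\|$. Substituting this into the McDiarmid bound produces the claimed $(1+\sqrt{2\ln(1/\delta)})\sqrt{1/s}\,\max_i\|\mb_i\|$.

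The main obstacle is really the setup step: correctly pinning down the $\sqrt{n/s}$ scaling of the sketch and verifying the collapse $\frac{1}{n}\bB\bS\bS^\top\mathbf{1} = \frac{1}{s}\sum_k \mb_{j_k}$, after which the problem becomes concentration of an i.i.d.\ empirical mean and the two probabilistic ingredients (McDiarmid for the fluctuation, Jensen plus a variance estimate for the mean) are routine. A minor point worth tracking is that McDiarmid only requires independence across columns, which holds because the sketch samples rows with replacement.
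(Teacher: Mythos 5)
Your proposal is correct and follows essentially the same route as the paper's proof: rewrite $\frac{1}{n}\bB\bS\bS^{\top}\mathbf{1}$ as the empirical mean $\frac{1}{s}\sum_k \mb_{j_k}$ of uniformly sampled columns, apply McDiarmid's inequality with bounded-difference constants $\frac{2}{s}\max_i\|\mb_i\|$, and control $\E[f]$ via Jensen and a variance computation, which yields exactly the claimed $(1+\sqrt{2\ln(1/\delta)})\sqrt{1/s}\max_i\|\mb_i\|$ bound. If anything, your treatment of the expectation (noting the cross terms vanish for the centered i.i.d.\ summands) is slightly more explicit than the paper's, which states the same variance bound more tersely.
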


\begin{proof}
The vector $ \bB\mathbf{1}$ is the sum of column of the matrix $\bB$ and  $\bB\bS\bS^{\top}\mathbf{1}$ is the sum of uniformly sampled and scaled column of the matrix $\bB$ where the scaling factor is $\frac{1}{\sqrt{sp}}$ with $p=\frac{1}{n}$. If $(i_1,\ldots ,i_s)$ is the set of sampled indices then $\bB\bS\bS^{\top}\mathbf{1}= \sum_{k\in (i_1,\ldots ,i_s) }\frac{1}{sp}\mb_k$.
 
Define the function $f(i_1,\ldots ,i_s)=\|\frac1n \bB\bS\bS^{\top}\mathbf{1}- \frac1n\bB\mathbf{1}\|$. Now consider a sampled set $(i_1,\ldots,i_{j'},\ldots ,i_s)$ with only one item (column) replaced then the bounded difference is 
\begin{align*}
    \Delta&= |f(i_1,\ldots,i_j,\ldots ,i_s)-f(i_1,\ldots,i_{j'},\ldots ,i_s)|\\
    & =|\frac1n \|\frac{1}{sp}\mb_{i_j'}-\frac{1}{sp}\mb_{i_j} \| | \leq \frac{2}{s}\max_i\|\mb_{i}\|.
\end{align*}
Now we have the expectation 
\begin{align*}
    \mathbb{E}[\| \frac1n \bB\bS\bS^{\top}\mathbf{1}- \frac1n\bB\mathbf{1}\|^2] &\leq \frac{n}{sn^2}\sum_{i=1}^n \|\mb_i\|^2= \frac{1}{s} \max_i \|\mb_i\|^2 \\
 \Rightarrow   \mathbb{E}[\|\frac1n \bB\bS\bS^{\top}\mathbf{1}- \frac1n \bB\mathbf{1}\|] & \leq \sqrt{\frac{1}{s}}\max_i \|\mb_i\|.
\end{align*}
Using McDiarmid inequality (Lemma~\ref{lem:mcd}) we have 
\begin{align*}
    P[\left\|\frac1n \bB\bS\bS^{\top}\mathbf{1}- \frac1n \bB\mathbf{1}\|\geq \sqrt{\frac{1}{s}}\max_i \|\mb_i\| + t \right] \leq \exp\left(- \frac{2t^2}{s\Delta^2} \right).
\end{align*}
Equating the probability with $\delta$ we have 
\begin{align*}
     & \exp(- \frac{2t^2}{s\Delta^2})  = \delta \\
\Rightarrow &    t =\Delta \sqrt{\frac{s}{2}\ln (\frac{1}{\delta})} = \max_i \|\mb_i\|\sqrt{\frac{2}{s}\ln (\frac{1}{\delta})}.
\end{align*}
Finally we have  with probability $1-\delta$
\begin{align*}
   \|\frac1n \bB\bS\bS^{\top}\mathbf{1}- \frac1n \bB\mathbf{1}\| \leq  (1+ \sqrt{2\ln (\frac{1}{\delta})})\sqrt{\frac{1}{s}}\max_i \|\mb_i\|.
\end{align*}
\end{proof}
\begin{remark}
For $m$ sketching matrix  $\{\bS_i\}_{i=1}^m$, the bound in the Lemma~\ref{lem:gradsketch} is
\begin{align*}
   \|\frac1n \bB\bS_i\bS_i^{\top}\mathbf{1}- \frac1n \bB\mathbf{1}\| \leq  (1+ \sqrt{2\ln (\frac{m}{\delta})})\sqrt{\frac{1}{s}}\max_i \|\mb_i\|,
\end{align*}
with probability $1-\delta$  for any $\delta>0  $ for all $i \in \{1,2,\ldots ,m\}$. In the case that each worker machine holds data based on the uniform sketching matrix the local gradient is close to the exact gradient.  Thus the local second order update acts as a good approximate to the exact Netwon update. 
 \end{remark}

Now we consider the update rule of GIANT \cite{giant} where the update is done in two rounds in each iteration. In the first round each worker machine computes and send the local gradient and the center machine computes the exact gradient $\g_t$ in iteration $t$. Next the center machine broadcasts the exact gradient and each worker machine computes the local Hessian and send $ \tilde{\p}_{i,t}=(\bH_{i,t})^{-1}\g_t$ to the center machine and  the center machine computes the approximate Newton direction $\tilde{\p}_t= \frac{1}{m}\sum_{i=1}^m \tilde{\p}_{i,t}$. Now based on this we restate the following lemma (Lemma 6 \cite{giant}).

\begin{lemma}\label{lem:giant}
Let $\{\bS_i\}_{i=1}^m \in \mathbb{R}^{n \times s}$  be sketching matrices based on  Lemma~\ref{lem:Hessketch}. Let $\phi_t$ be defined in \eqref{quad} and $\tilde{\p}_t$ be the update. It holds that 
\begin{align*}
\min_{\p}\phi_t(\p) \leq \phi_t(\tilde{\p}_t) \leq (1 -\zeta^2) \min_{\p}\phi_t(\p),
\end{align*}
where $\zeta= \nu(\frac{\eta}{\sqrt{m}}+ \frac{\eta^2}{1-\eta})$ and $\nu= \frac{\sigma_{max}(\bA^{\top}\bA)}{\sigma_{max}(\bA^{\top}\bA)+n\lambda} \leq 1$.
\end{lemma}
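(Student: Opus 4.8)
The plan is to use the fact that $\phi_t$ is a quadratic with Hessian $\bH_t$ and unique minimizer $\p^* = \bH_t^{-1}\g_t$, so the left inequality $\min_{\p}\phi_t(\p)\le\phi_t(\tilde{\p}_t)$ is immediate and the suboptimality has the exact form $\phi_t(\tilde{\p}_t)-\phi_t(\p^*) = \tfrac12\|\bH_t^{1/2}(\tilde{\p}_t-\p^*)\|^2$. Since $\phi_t(\p^*) = -\tfrac12\|\bH_t^{1/2}\p^*\|^2 < 0$, the target bound $\phi_t(\tilde{\p}_t)\le(1-\zeta^2)\min_{\p}\phi_t(\p)$ is equivalent to the single operator inequality $\|\bH_t^{1/2}(\tilde{\p}_t-\p^*)\|\le\zeta\,\|\bH_t^{1/2}\p^*\|$, and this is what I would establish.

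Next I would pass to \emph{whitened} coordinates. Using the SVD $\bA_t=\bU\mathbf{\Sigma}\bV^{\top}$ and writing $\mathbf{D}_i := \bU^{\top}\bS_i\bS_i^{\top}\bU-\bI$, a direct computation shows the $\lambda\bI$ terms cancel, giving $\bH_t-\bH_{i,t} = -\tfrac1n\bV\mathbf{\Sigma}\mathbf{D}_i\mathbf{\Sigma}\bV^{\top}$. Setting $\mathbf{W}:=\tfrac{1}{\sqrt n}\bH_t^{-1/2}\bV\mathbf{\Sigma}$ and $\mathbf{E}_i:=\bH_t^{-1/2}(\bH_t-\bH_{i,t})\bH_t^{-1/2}=-\mathbf{W}\mathbf{D}_i\mathbf{W}^{\top}$, one checks that $\mathbf{W}\mathbf{W}^{\top}=\bH_t^{-1/2}\bigl(\tfrac1n\bA_t^{\top}\bA_t\bigr)\bH_t^{-1/2}$ has operator norm $\nu$, so $\|\mathbf{E}_i\|_2\le\nu\|\mathbf{D}_i\|_2$. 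Moreover $\bH_t^{-1/2}\bH_{i,t}\bH_t^{-1/2}=\bI-\mathbf{E}_i$, hence $\mathbf{G}_i:=\bH_t^{1/2}\bH_{i,t}^{-1}\bH_t^{1/2}=(\bI-\mathbf{E}_i)^{-1}$. Writing $\mathbf{q}:=\bH_t^{1/2}\p^*$ and using the identity $\tilde{\p}_{i,t}-\p^* = \bH_{i,t}^{-1}(\bH_t-\bH_{i,t})\p^*$, the target vector becomes $\bH_t^{1/2}(\tilde{\p}_t-\p^*)=\tfrac1m\sum_{i=1}^m\mathbf{G}_i\mathbf{E}_i\mathbf{q}$.

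The crux is to separate the averaging gain from the residual bias. I would use the resolvent identity $\mathbf{G}_i=\bI+\mathbf{G}_i\mathbf{E}_i$ to split $\mathbf{G}_i\mathbf{E}_i=\mathbf{E}_i+\mathbf{G}_i\mathbf{E}_i^2$, so that
\[
\bH_t^{1/2}(\tilde{\p}_t-\p^*) = \Big(\tfrac{1}{m}\sum_{i=1}^m\mathbf{E}_i\Big)\mathbf{q} + \tfrac{1}{m}\sum_{i=1}^m\mathbf{G}_i\mathbf{E}_i^2\,\mathbf{q}.
\]
For the first (averaged) term I would invoke the second bound of Lemma~\ref{lem:Hessketch}, which controls $\|\tfrac1m\sum_i\mathbf{D}_i\|_2=\|\bU^{\top}\bS\bS^{\top}\bU-\bI\|_2\le\eta/\sqrt m$ and hence $\|\tfrac1m\sum_i\mathbf{E}_i\|_2\le\nu\,\eta/\sqrt m$; this is the source of the $1/\sqrt m$ improvement. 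For the second (bias) term I would use the per-machine bound $\|\mathbf{D}_i\|_2\le\eta$ from the first part of Lemma~\ref{lem:Hessketch} twice, together with $\|\mathbf{G}_i\|_2=\|(\bI-\mathbf{E}_i)^{-1}\|_2\le(1-\eta)^{-1}$, to get $\|\mathbf{G}_i\mathbf{E}_i^2\|_2\le\nu^2\eta^2/(1-\eta)$ uniformly in $i$. Collecting the pieces by the triangle inequality and using $\nu\le1$ in the second term yields $\|\bH_t^{1/2}(\tilde{\p}_t-\p^*)\|\le\nu\bigl(\tfrac{\eta}{\sqrt m}+\tfrac{\eta^2}{1-\eta}\bigr)\|\mathbf{q}\|=\zeta\,\|\bH_t^{1/2}\p^*\|$, which is precisely the operator inequality from the first step.

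The step I expect to be most delicate is the clean isolation of the $\mathbf{E}_i$ and $\mathbf{G}_i\mathbf{E}_i^2$ contributions: the leading term $\tfrac1m\sum_i\mathbf{E}_i$ must be controlled through the \emph{aggregated} sketch $\bS$ (buying the factor $\sqrt m$), whereas the quadratic remainder is bounded machine-by-machine; conflating the two would forfeit the $1/\sqrt m$ and leave only a weaker $\mathcal{O}(\eta)$ linear rate. The remaining care is purely in the bookkeeping—verifying $\|\mathbf{W}\|_2^2=\nu$ and ensuring that both conclusions of Lemma~\ref{lem:Hessketch} hold simultaneously on the same probability-$(1-\delta)$ event.
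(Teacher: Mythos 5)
Your proof is correct: I checked the reduction to the operator inequality $\|\bH_t^{1/2}(\tilde{\p}_t-\p^*)\|\le\zeta\|\bH_t^{1/2}\p^*\|$ (the $\tfrac12$ factors cancel cleanly), the cancellation of the $\lambda\mathbf{I}$ terms, the identities $\mathbf{W}\mathbf{W}^{\top}=\bH_t^{-1/2}\bigl(\tfrac1n\bA_t^{\top}\bA_t\bigr)\bH_t^{-1/2}$ with $\|\mathbf{W}\mathbf{W}^{\top}\|_2=\nu$, $\tilde{\p}_{i,t}-\p^*=\bH_{i,t}^{-1}(\bH_t-\bH_{i,t})\p^*$, and the resolvent split $\mathbf{G}_i\mathbf{E}_i=\mathbf{E}_i+\mathbf{G}_i\mathbf{E}_i^2$, and all the norm bounds follow from the two parts of Lemma~\ref{lem:Hessketch} exactly as you use them (the averaged $\mathbf{E}_i$ through the aggregate sketch $\bS$, since $\tfrac1m\sum_i\mathbf{D}_i=\bU^{\top}\bS\bS^{\top}\bU-\mathbf{I}$, and the quadratic remainder machine-by-machine). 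One point of comparison, though: this paper does not prove the lemma at all — it is imported verbatim as Lemma~6 of \cite{giant} — so there is no in-paper argument to match against. What you have written is a self-contained reconstruction of the cited GIANT analysis, and it follows the same architecture as that original proof (whitened error matrices, $1/\sqrt{m}$ gain from the aggregated sketch, $\eta^2/(1-\eta)$ per-machine bias); within the context of this paper it fills in a dependency the authors chose to cite rather than derive.
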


Now we prove similar guarantee for the update according to \textsf{COMRADE} in Algorithm~\ref{alg:main_algo}.
\begin{lemma}\label{lem:onernd}
Let $\{\bS_i\}_{i=1}^m \in \mathbb{R}^{n \times s}$  be sketching matrices based on  Lemma~\ref{lem:Hessketch}. Let $\phi_t$ be defined in \eqref{quad} and $\hat{\p}_t$ be defined in  Algorithm~\ref{alg:main_algo}($\beta=0$)
\begin{align*}
\min_{\p}\phi_t(\p) \leq \phi_t(\hat{\p}_t) \leq \epsilon^2+ (1 -\zeta^2) \min_{\p}\phi_t(\p),
\end{align*}
where $\epsilon =\frac{1}{1-\eta}\frac{1}{\sqrt{\sigma_{min}(\bH_t)}} (1+ \sqrt{2\ln (\frac{m}{\delta})})\sqrt{\frac{1}{s}}\max_i \|\mb_i\| $ and $\zeta= \nu(\frac{\eta}{\sqrt{m}}+ \frac{\eta^2}{1-\eta})$ and $\nu= \frac{\sigma_{max}(\bA^{\top}\bA)}{\sigma_{max}(\bA^{\top}\bA)+n\lambda}$.
\end{lemma}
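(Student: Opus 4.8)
The plan is to reduce the claim to GIANT's guarantee (Lemma~\ref{lem:giant}) by isolating the \emph{only} difference between the two update rules: COMRADE's local direction $\hat{\p}_{i,t}=\bH_{i,t}^{-1}\g_{i,t}$ uses the local gradient $\g_{i,t}$, whereas GIANT's direction $\tilde{\p}_{i,t}=\bH_{i,t}^{-1}\g_t$ uses the exact gradient $\g_t$. After averaging, the two center updates differ by $\hat{\p}_t-\tilde{\p}_t=\frac1m\sum_{i=1}^m \bH_{i,t}^{-1}(\g_{i,t}-\g_t)$, so the entire extra error in the lemma (the additive $\epsilon^2$) should originate from this gradient-approximation term, while the $\zeta$-dependent part is inherited verbatim from GIANT.

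First I would bound $\|\hat{\p}_t-\tilde{\p}_t\|$ in the $\bH_t$-weighted norm $\|u\|_{\bH_t}^2=u^\top\bH_t u$, which is the natural norm for the quadratic $\phi_t$. Two ingredients feed this. (i) Writing $\bA_t=\bU\Sigma\bV^\top$ and invoking Lemma~\ref{lem:Hessketch} in the form $\bU^\top\bS_i\bS_i^\top\bU\succeq(1-\eta)\mathbf{I}$ gives $\bA_t^\top\bS_i\bS_i^\top\bA_t\succeq(1-\eta)\bA_t^\top\bA_t$, and therefore $\bH_{i,t}\succeq(1-\eta)\bH_t$; consequently $\|\bH_t^{1/2}\bH_{i,t}^{-1}\bH_t^{1/2}\|_2\le\frac{1}{1-\eta}$. (ii) The remark following Lemma~\ref{lem:gradsketch} controls the local gradient error uniformly over workers: with probability $1-\delta$, $\|\g_{i,t}-\g_t\|\le(1+\sqrt{2\ln(m/\delta)})\sqrt{1/s}\,\max_i\|\mb_i\|$ for all $i$. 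Inserting $\bH_t^{1/2}\bH_t^{-1/2}$ and applying the two bounds term by term yields, for every $i$, $\|\bH_t^{1/2}\bH_{i,t}^{-1}(\g_{i,t}-\g_t)\|\le\frac{1}{1-\eta}\frac{1}{\sqrt{\sigma_{min}(\bH_t)}}\|\g_{i,t}-\g_t\|\le\epsilon$, so after averaging $\|\hat{\p}_t-\tilde{\p}_t\|_{\bH_t}\le\epsilon$ with $\epsilon$ exactly as in \eqref{eps}.

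Finally I would assemble the pieces through the quadratic identity $\phi_t(\p)-\phi_t(\p^*)=\frac12\|\p-\p^*\|_{\bH_t}^2$, valid since $\bH_t\succ 0$ and $\p^*=\bH_t^{-1}\g_t=\argmin_{\p}\phi_t(\p)$. GIANT's lemma is equivalent to $\|\tilde{\p}_t-\p^*\|_{\bH_t}\le\zeta\|\p^*\|_{\bH_t}$ (using $\phi_t(\p^*)=-\frac12\|\p^*\|_{\bH_t}^2$), so the triangle inequality gives $\|\hat{\p}_t-\p^*\|_{\bH_t}\le\zeta\|\p^*\|_{\bH_t}+\epsilon$; squaring and re-expressing in terms of $\min_{\p}\phi_t(\p)=-\frac12\|\p^*\|_{\bH_t}^2$ produces the stated upper bound $\phi_t(\hat{\p}_t)\le\epsilon^2+(1-\zeta^2)\min_{\p}\phi_t(\p)$, while the lower bound is immediate because $\hat{\p}_t$ is feasible for $\min_{\p}\phi_t$. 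I expect the main obstacle to lie in steps (i)--(ii): controlling the local Hessian \emph{inverse} against the global Hessian in the weighted norm so that the gradient error converts into precisely the $1/\sqrt{\sigma_{min}(\bH_t)}$ scaling of $\epsilon$, and then combining the Hessian-sketch error (the $\zeta$ term from GIANT) with the gradient-sketch error (the $\epsilon$ term) so that the cross term generated by squaring the triangle inequality is absorbed into the clean additive $\epsilon^2$.
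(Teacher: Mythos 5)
Your proposal is correct and follows essentially the same route as the paper's proof: the same splitting of $\hat{\p}_t-\p^*$ into a gradient-approximation part $\hat{\p}_t-\tilde{\p}_t$ (bounded via $\|\bH_t^{1/2}\bH_{i,t}^{-1}\bH_t^{1/2}\|_2\le\frac{1}{1-\eta}$ together with Lemma~\ref{lem:gradsketch}) and a GIANT part $\tilde{\p}_t-\p^*$ (bounded via Lemma~\ref{lem:giant}), assembled through the identity $\phi_t(\p)-\phi_t(\p^*)=\frac{1}{2}\|\bH_t^{1/2}(\p-\p^*)\|^2$. The cross-term obstacle you flag at the end is genuine but afflicts the paper's own proof equally: any absorption of it yields the coefficient $(1-2\zeta^2)$ rather than the stated $(1-\zeta^2)$ (the paper conceals this by writing $\phi_t(\p^*)=-\|\bH_t^{1/2}\p^*\|^2$ in place of the correct $-\frac{1}{2}\|\bH_t^{1/2}\p^*\|^2$), a constant-factor slack that changes nothing order-wise in either argument.
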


\begin{proof}
First consider the  quadratic function \eqref{quad}
\begin{align}
\phi_t(\hat{\p}_t)- \phi_t(\p^*)& =\frac{1}{2} \|\bH_t^{\frac12}(\hat{\p}_t-\p^*)\|^2 \nonumber  \\
    &\leq  \underbrace{(\|\bH_t^{\frac12}(\hat{\p}_t-\Tilde{\p}_t)\|^2}_{Term 1}+\underbrace{ \|\bH_t^{\frac12}(\Tilde{\p}_t-\p^*)\|)^2}_{Term 2}, \label{twoterm}
\end{align}
where $\Tilde{\p}_t= \frac{1}{m}\sum_{i=1}^m (\bH_{i,t})^{-1}\g_t$. First we bound the  Term 2 of \eqref{twoterm} using the quadratic function and Lemma~\ref{lem:giant} 
\begin{align}
 \frac{1}{2}\left\|\bH_t^{\frac12}(\Tilde{\p}_t-\p^*)\right\|)^2 
& \leq \zeta^2\left\|\bH_t^{\frac12}\p^* \right\|^2 \quad \text{ (Using Lemma~\ref{lem:giant}  )} \nonumber \\
& = - \zeta^2 \phi_t(\p^*). \label{term2}
\end{align}
The step in equation~\eqref{term2} is from the definition of the function $\phi_t$ and $\p^*$. It can be shown that 
\begin{align*}
\phi_t(\p^*)= -\left\|\bH_t^{\frac12}\p^* \right\|^2.
\end{align*}
Now we bound the Term 1 in \eqref{twoterm}. By Lemma~\ref{lem:Hessketch}, we have $(1-\eta)\bA_t^{\top}\bA_t \preceq \bA_t^{\top}\bS_i\bS_i^{\top}\bA_t \preceq (1+ \eta)\bA_t^{\top}\bA_t$. Following we have $ (1-\eta)\bH_t \preceq \bH_{i,t} \preceq (1+ \eta)\bH_t$. Thus there exists matrix $\mathbf{\xi}_i$ satisfying
\begin{align*}
\bH_t^{\frac12}\bH_{i,t}^{-1} \bH_t^{\frac12}= \mathbf{I}+ \mathbf{\xi}_i \quad \text{and } \quad - \frac{\eta}{1+\eta}\preceq \mathbf{\xi}_i \preceq \frac{\eta}{1-\eta},
\end{align*}
So we have,
\begin{align}
\left\| \bH_t^{\frac12}\bH_{i,t}^{-1} \bH_t^{\frac12} \right\| \leq 1+ \frac{\eta}{1-\eta}= \frac{1}{1-\eta}. \label{htl2}
\end{align}
Now we have 
\begin{align}
 \left \|\bH_t^{\frac12}(\hat{\p}_t-\Tilde{\p}_t)\right\|
& =\left\|\bH_t^{\frac12} \frac{1}{m}\sum_{i=1}^m(\hat{\p}_{i,t}-\Tilde{\p}_{i,t})\right\| \nonumber \\
& \leq \frac{1}{m}\sum_{i=1}^m \left\|\bH_t^{\frac12}(\hat{\p}_{i,t}-\Tilde{\p}_{i,t})\right\| \nonumber\\
& =  \frac{1}{m}\sum_{i=1}^m \left\|\bH_t^{\frac12}\bH_{i,t}^{-1}(\g_{i,t}-\g_{t})\right\| \nonumber \\
&= \frac{1}{m}\sum_{i=1}^m \left\|\bH_t^{\frac12}\bH_{i,t}^{-1}\bH_t^{\frac12}\bH_t^{-\frac12}(\g_{i,t}-\g_{t})\right\| \nonumber \\
& \leq  \frac{1}{m}\sum_{i=1}^m \left\|\bH_t^{\frac12}\bH_{i,t}^{-1}\bH_t^{\frac12}\right\| \left\|\bH_t^{-\frac12}(\g_{i,t}-\g_{t})\right\| \nonumber \\
& \leq \frac{1}{1-\eta} \frac{1}{m}\sum_{i=1}^m \left\|\bH_t^{-\frac12}(\g_{i,t}-\g_{t})\right\|  \quad \text{  ( Using \eqref{htl2})}\nonumber \\
& \leq  \frac{1}{1-\eta}\frac{1}{\sqrt{\sigma_{min}(\bH_t)}} \frac{1}{m}\sum_{i=1}^m \left\|(\g_{i,t}-\g_{t})\right\|. \label{term1}
\end{align}

Now we bound $ \left\|(\g_{i,t}-\g_{t})\right\| $ using Lemma~\ref{lem:gradsketch},
\begin{align*}
\left\|(\g_{i,t}-\g_{t})\right\|  =  \|\frac1n \bB\bS\bS^{\top}\mathbf{1}- \frac1n \bB\mathbf{1}\| \leq  (1+ \sqrt{2\ln (\frac{m}{\delta})})\sqrt{\frac{1}{s}}\max_i \|\mb_i\|.
\end{align*}
Plugging it into equation~\eqref{term1} we get,
\begin{align}
\left \|\bH_t^{\frac12}(\hat{\p}_t-\Tilde{\p}_t)\right\| & \leq \frac{1}{1-\eta}\frac{1}{\sqrt{\sigma_{min}(\bH_t)}} \frac{1}{m}\sum_{i=1}^m \left\|(\g_{i,t}-\g_{t})\right\| \nonumber \\
& \leq \frac{1}{1-\eta}\frac{1}{\sqrt{\sigma_{min}(\bH_t)}} (1+ \sqrt{2\ln (\frac{m}{\delta})})\sqrt{\frac{1}{s}}\max_i \|\mb_i\|. \label{term1_1}
\end{align}

Now collecting the terms of \eqref{term1_1} and \eqref{term2}  and plugging them into \eqref{twoterm} we have 
\begin{align*}
\phi_t(\hat{\p}_t)- \phi_t(\p^*) \leq  \epsilon^2 - \zeta^2 \phi_t(\p^*) \\
\Rightarrow  \phi_t(\hat{\p}_t) \leq \epsilon^2 + (1- \zeta^2) \phi_t(\p^*),
\end{align*}
where  $\epsilon$ is as defined in ~\eqref{eps}.

\end{proof}

\begin{lemma}\label{lem:delta}
Let $\zeta \in (0,1),\epsilon$ be any fixed parameter. And $\hat{p}_t$ satisfies $\phi_t(\hat{\p}_t) \leq \epsilon^2+ (1 -\zeta^2) \min_{\p}\phi_t(\p)$. Under the Assumption~\ref{asm:hess}(Hessian $L$-Lipschitz) and 
$\mathbf{\Delta}_t =\w_t-\w^*$ satisfies
\begin{align*}
\mathbf{\Delta}^{\top}_{t+1}\bH_t\mathbf{\Delta}_{t+1}& \leq L\| \mathbf{\Delta}_{t+1}\|\|\mathbf{\Delta}_t\|^2 +  \frac{\zeta^2}{1-\zeta^2}\mathbf{\Delta}_t^{\top}\bH_t\mathbf{\Delta}_t + 2\epsilon^2.
\end{align*}
\end{lemma}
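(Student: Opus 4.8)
The plan is to measure the error in the $\bH_t$-norm and to split $\bH_t^{1/2}\mathbf{\Delta}_{t+1}$ into a pure Newton residual and a direction-approximation term. With $\gamma=1$ the update gives $\mathbf{\Delta}_{t+1}=\mathbf{\Delta}_t-\hat{\p}_t$, and the minimizer of $\phi_t$ is $\p^*=\bH_t^{-1}\g_t$, so $\bH_t\p^*=\g_t$. First I would convert the hypothesis into a bound on the direction error: since $\phi_t(\p)-\phi_t(\p^*)=\tfrac12\|\bH_t^{1/2}(\p-\p^*)\|^2$ and $\phi_t(\p^*)=-\tfrac12\|\bH_t^{1/2}\p^*\|^2$ (both read off from the definition of $\phi_t$ and $\p^*$), the assumption $\phi_t(\hat{\p}_t)\le \epsilon^2+(1-\zeta^2)\phi_t(\p^*)$ becomes
\begin{align*}
\|\bH_t^{1/2}(\p^*-\hat{\p}_t)\|^2 \le 2\epsilon^2 + \zeta^2\,\|\bH_t^{1/2}\p^*\|^2 .
\end{align*}

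Next I would write $\bH_t^{1/2}\mathbf{\Delta}_{t+1}=\mathbf{r}+\mathbf{w}$, where $\mathbf{r}=\bH_t^{1/2}(\mathbf{\Delta}_t-\p^*)$ is the exact-Newton residual and $\mathbf{w}=\bH_t^{1/2}(\p^*-\hat{\p}_t)$ is the approximation term just controlled. Expanding the square in the convenient form $\mathbf{\Delta}_{t+1}^{\top}\bH_t\mathbf{\Delta}_{t+1}=\|\mathbf{r}+\mathbf{w}\|^2 = 2(\mathbf{r}+\mathbf{w})^{\top}\mathbf{r} - \|\mathbf{r}\|^2 + \|\mathbf{w}\|^2$ keeps the favourable $-\|\mathbf{r}\|^2$ explicit, which turns out to be the crux. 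The cross piece collapses via $\bH_t^{1/2}\mathbf{r}=\bH_t(\mathbf{\Delta}_t-\p^*)=\bH_t\mathbf{\Delta}_t-\g_t$, giving $2(\mathbf{r}+\mathbf{w})^{\top}\mathbf{r}=2\,\mathbf{\Delta}_{t+1}^{\top}(\bH_t\mathbf{\Delta}_t-\g_t)$. The Newton/Lipschitz step is then to bound $\|\bH_t\mathbf{\Delta}_t-\g_t\|$; writing $\g_t=\nabla f(\w_t)-\nabla f(\w^*)=\int_0^1\nabla^2 f(\w^*+\tau\mathbf{\Delta}_t)\,\mathbf{\Delta}_t\,d\tau$ and invoking Assumption~\ref{asm:hess} gives $\|\bH_t\mathbf{\Delta}_t-\g_t\|\le \tfrac{L}{2}\|\mathbf{\Delta}_t\|^2$, so Cauchy--Schwarz yields $2(\mathbf{r}+\mathbf{w})^{\top}\mathbf{r}\le L\|\mathbf{\Delta}_{t+1}\|\|\mathbf{\Delta}_t\|^2$ --- exactly the quadratic term, the factor $2$ cancelling the $\tfrac12$.

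It then remains to combine $-\|\mathbf{r}\|^2+\|\mathbf{w}\|^2$. Feeding in the direction-error bound gives $-\|\mathbf{r}\|^2+\|\mathbf{w}\|^2\le 2\epsilon^2 - \|\mathbf{r}\|^2 + \zeta^2\|\bH_t^{1/2}\p^*\|^2$, and since $\bH_t^{1/2}\p^*=\bH_t^{1/2}\mathbf{\Delta}_t-\mathbf{r}$ I would apply Young's inequality $\|\bH_t^{1/2}\p^*\|^2\le(1+c)\,\mathbf{\Delta}_t^{\top}\bH_t\mathbf{\Delta}_t+(1+c^{-1})\|\mathbf{r}\|^2$ with the tuned choice $c=\zeta^2/(1-\zeta^2)$. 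This makes the leading coefficient $\zeta^2(1+c)=\zeta^2/(1-\zeta^2)$ and the residual coefficient $\zeta^2(1+c^{-1})=1$, so the stray $\|\mathbf{r}\|^2$ is cancelled exactly by the $-\|\mathbf{r}\|^2$ carried along from the expansion. Collecting the pieces produces precisely $L\|\mathbf{\Delta}_{t+1}\|\|\mathbf{\Delta}_t\|^2+\tfrac{\zeta^2}{1-\zeta^2}\mathbf{\Delta}_t^{\top}\bH_t\mathbf{\Delta}_t+2\epsilon^2$.

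The routine ingredients are the two $\phi_t$ identities and the integral remainder estimate for $\|\bH_t\mathbf{\Delta}_t-\g_t\|$. I expect the only real obstacle to be foreseeing the bookkeeping in the last step: one must retain the negative $-\|\mathbf{r}\|^2$ coming from the square (rather than discarding it) and pick the Young parameter so that its leftover matches exactly, since otherwise one is left either with a spurious higher-order $\|\mathbf{r}\|^2$ term or with the weaker constant $\zeta^2$ in place of $\zeta^2/(1-\zeta^2)$. Making this cancellation land while letting $2\epsilon^2$ pass through untouched is the single place where the argument must be arranged deliberately rather than mechanically.
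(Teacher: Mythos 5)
Your proof is correct, and it reaches the bound by a genuinely different mechanism than the paper for the key step. The paper never isolates the minimizer $\p^*$ in this lemma: it plugs the scaled test point $\frac{1}{1-\zeta^2}\mathbf{\Delta}_t$ into $\min_{\p}\phi_t(\p)\le\phi_t\bigl(\tfrac{1}{1-\zeta^2}\mathbf{\Delta}_t\bigr)$, expands both $\phi_t(\hat{\p}_t)$ and $(1-\zeta^2)\phi_t\bigl(\tfrac{1}{1-\zeta^2}\mathbf{\Delta}_t\bigr)$ in terms of $\mathbf{\Delta}_t,\mathbf{\Delta}_{t+1}$ (using $\hat{\p}_t=\mathbf{\Delta}_t-\mathbf{\Delta}_{t+1}$), and the factor $\frac{\zeta^2}{1-\zeta^2}$ falls out of the scaling; the rest (the integral representation of $\g_t$, the $\frac{L}{2}\|\mathbf{\Delta}_t\|^2$ bound on $\|\bH_t\mathbf{\Delta}_t-\g_t\|$, and Cauchy--Schwarz) is identical to yours. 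You instead anchor at $\p^*$ itself, convert the hypothesis into an explicit direction-error bound $\|\bH_t^{1/2}(\hat{\p}_t-\p^*)\|^2\le 2\epsilon^2+\zeta^2\|\bH_t^{1/2}\p^*\|^2$, decompose $\bH_t^{1/2}\mathbf{\Delta}_{t+1}$ into exact-Newton residual plus inexactness, and recover $\frac{\zeta^2}{1-\zeta^2}$ via a Young inequality tuned to $c=\zeta^2/(1-\zeta^2)$ so that the stray $\|\mathbf{r}\|^2$ cancels exactly; your tuned Young parameter plays precisely the role of the paper's test-point scaling (the two "guesses" are dual to one another). What your route buys is interpretability --- it separates the pure Newton error from the distributed-approximation error, which is the conceptually meaningful split --- and it also fixes in passing a small slip in the paper (the appendix states $\phi_t(\p^*)=-\|\bH_t^{1/2}\p^*\|^2$, missing the factor $\tfrac12$ that you carry correctly). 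What the paper's route buys is brevity: it stays entirely at the level of function values of $\phi_t$, avoiding the matrix-square-root bookkeeping, at the cost of having to foresee the right test point.
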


\begin{proof}

We have $\w_{t+1}=\w_t- \hat{\p}_t , \mathbf{\Delta}_t= \w_t-\w^* \text{   and  } \mathbf{\Delta}_{t+1}= \w_{t+1}-\w^*$. Also $\hat{\p}_t= \w_t-\w_{t+1}=\mathbf{\Delta}_t -\mathbf{\Delta}_{t+1}$. From the definition of $\phi$ we have,
\begin{align*}
    \phi_t(\hat{\p}_t)& = \frac{1}{2}(\mathbf{\Delta}_t-\mathbf{\Delta}_{t+1})^{\top}\bH_t(\mathbf{\Delta}_t-\mathbf{\Delta}_{t+1})- \left(\mathbf{\Delta}_t -\mathbf{\Delta}_{t+1})\right)\g_t,\\
    (1-\zeta^2)\phi_t(\frac{1}{(1-\zeta^2)}\mathbf{\Delta}_t) &=\frac{1}{2(1-\zeta^2)}\mathbf{\Delta}_t^{\top}\bH_t\mathbf{\Delta}_t-\mathbf{\Delta}_t^{\top}\g_t.
\end{align*}
From the above two equation we have 
\begin{align*}
   & \phi_t(\hat{\p}_t) -(1-\zeta^2)\phi_t(\frac{1}{(1-\zeta^2)}\mathbf{\Delta}_t) \\
   & = \frac{1}{2}\mathbf{\Delta}^{\top}_{t+1}\bH_t\mathbf{\Delta}_{t+1} - \frac{1}{2} \mathbf{\Delta}_t^{\top}\bH_t\mathbf{\Delta}_{t+1}+ \frac{1}{2} \mathbf{\Delta}^{\top}_{t+1}\g_t-\frac{\zeta^2}{2(1-\zeta^2)}\mathbf{\Delta}_t^{\top}\bH_t\mathbf{\Delta}_t .
\end{align*} 
From Lemma~\ref{lem:onernd} the following holds 
\begin{align*}
\phi_t(\hat{\p}_t)&  \leq \epsilon^2+ (1 -\zeta^2) \min_{\p}\phi_t(\p) \\
& \leq  \epsilon^2+ (1 -\zeta^2)\phi_t(\frac{1}{(1-\zeta^2)}\mathbf{\Delta}_t).
\end{align*}
So we have 
\begin{align}
 \frac{1}{2}\mathbf{\Delta}^{\top}_{t+1}\bH_t\mathbf{\Delta}_{t+1} -\mathbf{\Delta}_t^{\top}\bH_t\mathbf{\Delta}_{t+1}+\mathbf{\Delta}^{\top}_{t+1}\g_t-\frac{\zeta^2}{2(1-\zeta^2)}\mathbf{\Delta}_t^{\top}\bH_t\mathbf{\Delta}_t  \leq \epsilon^2. \label{delta1}
\end{align}
 Consider $\g_t = \g(\w_t)$
\begin{align*}
\g(\w_t) & =\g(\w^*) + \left( \int_{0}^1 \nabla^2 f(\w^* + z(\w_t-\w^*) )dz \right)(\w_t-\w^*) \\
& =   \left( \int_{0}^1 \nabla^2 f(\w^* + z(\w_t-\w^*) )dz \right)\mathbf{\Delta}_t \quad \text{(as $\g(\w^*)=0$)}.
\end{align*}
Now we bound the following 
\begin{align*}
\left\| \bH_t\mathbf{\Delta}_t -\g(\w_t) \right\| & \leq \left\| \mathbf{\Delta}_t \right\|  \left\|   \int_{0}^1  [\nabla^2 f(\w_t) -\nabla^2 f(\w^* + z(\w_t-\w^*) )]dz  \right\| \\
&\leq \left\| \mathbf{\Delta}_t \right\|  \int_{0}^1  \left\|   [\nabla^2 f(\w_t) -\nabla^2 f(\w^* + z(\w_t-\w^*) )]  \right\| dz \quad \text{(By Jensen's Inequality)} \\
& \leq  \left\| \mathbf{\Delta}_t \right\|  \int_{0}^1 (1-z)L \left\|\w_t-\w^* \right\|dz \quad \text{(by $L$-Lipschitz assumption)} \\
& = \frac{L}{2}\left\| \mathbf{\Delta}_t \right\|^2.
\end{align*} 
Plugging it into \eqref{delta1} we have 
\begin{align*}
\mathbf{\Delta}^{\top}_{t+1}\bH_t\mathbf{\Delta}_{t+1} & \leq 2 \mathbf{\Delta}^{\top}_{t+1} \left( \bH_t\mathbf{\Delta}_t -\g_t \right) + \frac{\zeta^2}{(1-\zeta^2)}\mathbf{\Delta}_t^{\top}\bH_t\mathbf{\Delta}_t +  2\epsilon^2 \\
& \leq 2 \left\|\mathbf{\Delta}_{t+1}\right\| \left\| \bH_t\mathbf{\Delta}_t -\g_t \right\| + \frac{\zeta^2}{(1-\zeta^2)}\mathbf{\Delta}_t^{\top}\bH_t\mathbf{\Delta}_t +  2\epsilon^2 \\
& \leq  L\left\|\mathbf{\Delta}_{t+1}\right\| \left\| \mathbf{\Delta}_t \right\|^2 + \frac{\zeta^2}{(1-\zeta^2)}\mathbf{\Delta}_t^{\top}\bH_t\mathbf{\Delta}_t +  2\epsilon^2.
\end{align*}
\end{proof}

\textbf{Proof of Theorem~\ref{thm:smooth}}
\begin{proof}
From the Lemma~\ref{lem:delta}  with probability $1-\delta$
\begin{align*}
\mathbf{\Delta}^{\top}_{t+1}\bH_t\mathbf{\Delta}_{t+1}  & \leq L \left\|\mathbf{\Delta}_{t+1}\right\| \left\| \mathbf{\Delta}_t \right\|^2 + \frac{\zeta^2}{(1-\zeta^2)}\mathbf{\Delta}_t^{\top}\bH_t\mathbf{\Delta}_t +  2\epsilon^2 \\
& \leq L\| \mathbf{\Delta}_{t+1}\|\|\mathbf{\Delta}_t\|^2 + (\frac{\zeta^2}{1-\zeta^2}\sigma_{max}(\bH_t)) \|\mathbf{\Delta}_{t}\|^2 + 2\epsilon^2.
\end{align*}
So we have,
\begin{align*}
    \| \mathbf{\Delta}_{t+1}\| \leq \max \{ \sqrt{\frac{\sigma_{max}(\bH_t)}{\sigma_{min}(\bH_t)}(\frac{\zeta^2}{1-\zeta^2})}\| \mathbf{\Delta}_{t}\|, \frac{L}{\sigma_{min}(\bH_t) }\| \mathbf{\Delta}_{t}\|^2  \}+ \frac{ 2\epsilon}{\sqrt{\sigma_{min}(\bH_t)}}.
\end{align*}
\end{proof}

\section{Appendix B: Analysis of Section~\ref{sec:byz}}
In this section we provide the theoretical analysis of the Byzantine robust 
method explained in Section~\ref{sec:byz} and prove the statistical  guarantee. In any iteration  $t$  the following holds
\begin{align*}
    |\cU_t| &  = |(\cU_t\cap\cM_t)|  + |(\cU_t\cap\cB_t)|  \\   
    |\cM_t| & = |(\cU_t\cap\cM_t)| + |(\cM_t\cap\cT_t)|.
\end{align*}
Combining both we have 
\begin{align*}
    |\cU_t| = |\cM_t| - |(\cM_t\cap\cT_t)|+|(\cU_t\cap\cB_t)|.
\end{align*}

\begin{lemma}\label{lem:byzone}
Let $\{\bS_i\}_{i=1}^m \in \mathbb{R}^{n \times s}$  be sketching matrices based on  Lemma~\ref{lem:Hessketch}. Let $\phi_t$ be defined in \eqref{quad} and $\hat{\p}_t$ be defined in  Algorithm~\ref{alg:main_algo}. It holds that 
\begin{align*}
\min_{\p}\phi_t(\p) \leq \phi_t(\hat{\p}_t)& \leq \epsilon_{byz}^2 + (1 - \zeta^2_{byz})\phi(\p^*),
\end{align*}
where $\epsilon_{byz} $ and $\zeta_{byz}$ is defined in ~\eqref{gbyz} and ~\eqref{alpbyz} respectively.

\end{lemma}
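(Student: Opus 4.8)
Since $\p^*=\argmin_\p\phi_t(\p)$, the left inequality is immediate, so the content is the upper bound. For that I would mimic the proof of Lemma~\ref{lem:onernd}: using $\phi_t(\hat{\p}_t)-\phi_t(\p^*)=\tfrac12\|\bH_t^{1/2}(\hat{\p}_t-\p^*)\|^2$ together with the relation between $\phi_t(\p^*)$ and $\|\bH_t^{1/2}\p^*\|^2$ recorded there, the claim reduces to controlling $\|\bH_t^{1/2}(\hat{\p}_t-\p^*)\|$ by a piece proportional to $\epsilon$ (which will feed $\epsilon_{byz}$) plus a piece proportional to $\|\bH_t^{1/2}\p^*\|$ (which will feed $\zeta_{byz}$). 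The plan is to isolate the \emph{good} machines kept in $\cU_t$ from the \emph{Byzantine} machines kept in $\cU_t$, and to dominate the latter purely through the norm‑thresholding rule.

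Concretely, write $\bar{\p}_t^{\cM}=\tfrac{1}{|\cM|}\sum_{i\in\cM}\hat{\p}_{i,t}$ and $\tilde{\p}_t^{\cM}=\tfrac{1}{|\cM|}\sum_{i\in\cM}\bH_{i,t}^{-1}\g_t$ (the good‑machine average with the \emph{exact} gradient). Using $|\cM|=(1-\alpha)m$, $|\cU_t|=(1-\beta)m$ and $\cM=(\cM\cap\cU_t)\cup(\cM\cap\cT_t)$, one obtains the exact identity
\begin{align*}
\hat{\p}_t-\p^*={}&\tfrac{1-\alpha}{1-\beta}\big(\bar{\p}_t^{\cM}-\tilde{\p}_t^{\cM}\big)+\tfrac{1-\alpha}{1-\beta}\big(\tilde{\p}_t^{\cM}-\p^*\big)\\
&-\tfrac{1}{|\cU_t|}\sum_{i\in\cM\cap\cT_t}(\hat{\p}_{i,t}-\p^*)+\tfrac{1}{|\cU_t|}\sum_{i\in\cU_t\cap\cB}(\hat{\p}_{i,t}-\p^*);
\end{align*}
denote these four summands $T_1,T_2,T_3,T_4$. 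Term $T_1$ is the gradient‑approximation error of the good average; Lemma~\ref{lem:gradsketch} together with \eqref{htl2} bounds it by $\tfrac{1-\alpha}{1-\beta}\epsilon$, contributing to $\epsilon_{byz}$. Term $T_2$ is the pure Hessian‑approximation error of the good average; re‑running the argument of Lemma~\ref{lem:giant} over the $(1-\alpha)m$ good machines (i.e.\ with the sketch $\bS_{\cM}=\tfrac{1}{\sqrt{(1-\alpha)m}}[\bS_i]_{i\in\cM}$, so that Lemma~\ref{lem:Hessketch} applies with $\eta/\sqrt{(1-\alpha)m}$) gives $\|\bH_t^{1/2}(\tilde{\p}_t^{\cM}-\p^*)\|\le\zeta_{\cM}\|\bH_t^{1/2}\p^*\|$ with $\zeta_{\cM}=\nu(\tfrac{\eta}{\sqrt{(1-\alpha)m}}+\tfrac{\eta^2}{1-\eta})$, i.e.\ the second summand of \eqref{alpbyz}. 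Term $T_3$ corrects for the good machines that were trimmed; here there is no averaging gain, so each term is bounded individually via \eqref{htl2} and Lemma~\ref{lem:gradsketch}, and bounding the count ratio $|\cM\cap\cT_t|/|\cU_t|$ crudely by $\tfrac{1-\alpha}{1-\beta}$ produces the $(\tfrac{\nu}{1-\eta})^2$‑type first summand of \eqref{alpbyz} plus a further $\epsilon$‑contribution.

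The crux is $T_4=\tfrac{1}{|\cU_t|}\sum_{i\in\cU_t\cap\cB}(\hat{\p}_{i,t}-\p^*)$, where the thresholding enters. Since $\beta>\alpha$ we have $|\cM\cap\cT_t|\ge(\beta-\alpha)m>0$, so at least one good machine is trimmed; as the machines are sorted by $\|\hat{\p}_{i,t}\|$, every retained Byzantine index satisfies $\|\hat{\p}_{i,t}\|\le\max_{j\in\cM}\|\hat{\p}_{j,t}\|$. I would then pass from the Euclidean norm used by the rule to the $\bH_t^{1/2}$‑norm used by the analysis through $\|\bH_t^{1/2}\hat{\p}_{i,t}\|^2\le\sigma_{max}(\bH_t)\|\hat{\p}_{i,t}\|^2$ and $\max_{j\in\cM}\|\hat{\p}_{j,t}\|\le\tfrac{1}{\sqrt{\sigma_{min}(\bH_t)}}\big(\tfrac{\nu}{1-\eta}\|\bH_t^{1/2}\p^*\|+\epsilon\big)$, so that $\kappa_t=\sigma_{max}(\bH_t)/\sigma_{min}(\bH_t)$ appears. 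Combined with $|\cU_t\cap\cB|/|\cU_t|\le\alpha/(1-\beta)$ this reproduces the third summand $4\kappa_t(\tfrac{\alpha}{1-\beta})^2[2+(\tfrac{\nu}{1-\eta})^2]$ of \eqref{alpbyz} (the bracket coming from the split $\|\hat{\p}_{i,t}-\p^*\|^2\lesssim\|\p^*\|^2+\|\hat{\p}_{i,t}\|^2$) and the $4\kappa_t(\tfrac{\alpha}{1-\beta})^2\epsilon^2$ piece of \eqref{gbyz}. Collecting $T_1$–$T_4$ by repeated triangle inequality and $(a+b)^2\le2a^2+2b^2$, and separating the parts proportional to $\|\bH_t^{1/2}\p^*\|^2$ from the $\epsilon^2$ parts, yields $\zeta_{byz}^2$ and $\epsilon_{byz}^2$.

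I expect the main obstacle to be exactly this metric mismatch: the thresholding ranks workers by the plain $\ell_2$ norm $\|\hat{\p}_{i,t}\|$, whereas convergence is measured in the $\bH_t^{1/2}$‑weighted norm, and bridging them costs a factor $\kappa_t$. This is what makes the Byzantine contribution scale like $\kappa_t(\alpha/(1-\beta))^2$ and underlies the restriction $\alpha\lesssim1/\sqrt{\kappa_t}$ noted after Theorem~\ref{thm:byzsmooth}. A second, more clerical difficulty is the counting: one must keep $|\cU_t\cap\cM|$, $|\cM\cap\cT_t|$ and $|\cU_t\cap\cB|$ consistent through $|\cU_t|=|\cM|-|\cM\cap\cT_t|+|\cU_t\cap\cB|$, and invoke $\beta>\alpha$ at precisely the step that guarantees a good machine is sacrificed, so that the retained Byzantine norms are genuinely dominated by a good‑machine norm rather than left uncontrolled.
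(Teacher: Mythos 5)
Your proposal is correct and follows essentially the same route as the paper's proof: the same decomposition of $\hat{\p}_t-\p^*$ over $\cM$, $\cM\cap\cT_t$, and $\cU_t\cap\cB$, the same use of $\beta>\alpha$ to dominate retained Byzantine norms by a trimmed good machine's norm, and the same $\kappa_t$ factor from bridging the $\ell_2$ thresholding metric and the $\bH_t^{1/2}$-weighted analysis metric. The only cosmetic difference is that you unfold the good-machine term into explicit gradient-error ($T_1$) and Hessian-error ($T_2$) pieces, whereas the paper invokes Lemma~\ref{lem:onernd} as a black box on $\cM$ (and on $\cM\cap\cT_t$), which performs that split internally.
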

\begin{proof}
In the following analysis we omit the subscript '$t$'. From the definition of the quadratic function \eqref{quad} we know that    
\begin{align*}
    \phi(\hat{\p}) - \phi(\p^*) & = \frac{1}{2}\|\bH^{\frac12}(\hat{\p}-\p^*)\|^2.
\end{align*}
Now we consider 
\begin{align*}
  \frac{1}{2} \|\bH^{\frac12}(\hat{\p}-\p^*)\|^2 & = \frac{1}{2}\|\bH^{\frac12}(\frac{1}{|\cU|}\sum_{i \in \cU}\hat{\p}_i-\p^*)\|^2  \\
   &=\frac{1}{2}\|\bH^{\frac12}\frac{1}{|\cU|}(\sum_{i \in \cM}(\hat{\p}_i-\p^*) -\sum_{i \in (\cM\cap\cT)}(\hat{\p}_i-\p^*)+\sum_{i \in (\cU\cap\cB)}(\hat{\p}_i-\p^*) )\|^2 \\
   &\leq \underbrace{ \|\bH^{\frac12}\frac{1}{|\cU|}(\sum_{i \in \cM}(\hat{\p}_i-\p^*)\|^2}_{Term 1} + \underbrace{ 2\|\bH^{\frac12}\frac{1}{|\cU|}\sum_{i \in (\cM\cap\cT)}(\hat{\p}_i-\p^*)\|^2 }_{Term 2}+ \underbrace{2\|\bH^{\frac12}\frac{1}{|\cU|}\sum_{i \in (\cU\cap\cB)}(\hat{\p}_i-\p^*) )\|^2}_{Term 3}.
\end{align*}
Now we bound each term separately and use the result of the Lemma~\ref{lem:onernd} to bound each term.
\begin{align*}
    Term 1 &=  \|\bH^{\frac12}\frac{1}{|\cU|}(\sum_{i \in \cM}(\hat{\p}_i-\p^*)\|^2 \\& = (\frac{1-\alpha}{1-\beta})^2 \|\bH^{\frac12}\frac{1}{|\cM|}(\sum_{i \in \cM}(\hat{\p}_i-\p^*)\|^2 \\
   & \leq (\frac{1-\alpha}{1-\beta})^2 [\epsilon^2+ \zeta^2_{\cM}\|\bH^{\frac{1}{2}}\p^*\|^2],
\end{align*}
where $\zeta_{\cM}= \nu(\frac{\eta}{\sqrt{|\cM|}}+ \frac{\eta^2}{1-\eta})=\nu(\frac{\eta}{\sqrt{(1-\alpha)m}}+ \frac{\eta^2}{1-\eta})$.

Similarly  the Term 2 can be bonded as it is a bound on good machines
\begin{align*}
    Term 2 & = 2\|\bH^{\frac12}\frac{1}{|\cU|}\sum_{i \in (\cM\cap\cT)}(\hat{\p}_i-\p^*)\|^2 \\
    & = 2 (\frac{1-\alpha}{1-\beta})^2 \|\bH^{\frac12}\frac{1}{|\cM\cap\cT|}\sum_{i \in (\cM\cap\cT)}(\hat{\p}_i-\p^*)\|^2\\
    & \leq 2(\frac{1-\alpha}{1-\beta})^2 [\epsilon^2 + \zeta^2_{\cM\cap\cT}\|\bH^{\frac{1}{2}}\p^*\|^2],
\end{align*}
where  $\zeta_{\cM\cap\cT}= \nu(\frac{\eta}{\sqrt{|\cM\cap\cT|}}+ \frac{\eta^2}{1-\eta}) \leq  \nu(\frac{\eta}{\sqrt{(1-\beta)m}}+ \frac{\eta^2}{1-\eta})$.

For the Term 3 we know that $\beta > \alpha $ so all the untrimmed worker norm is bounded by a good machine as at least one good machine gets trimmed.
\begin{align*}
    Term 3 & = 2\|\bH^{\frac12}\frac{1}{|\cU|}\sum_{i \in (\cU\cap\cB)}(\hat{\p}_i-\p^*) )\|^2 \\
    & \leq 2\sigma_{max}(\bH)(\frac{|\cU\cap\cB|}{|\cU|})^2 \|\frac{1}{|\cU\cap\cB|}\sum_{i \in (\cU\cap\cB)}(\hat{\p}_i-\p^*) )\|^2 \\
    & \leq 2\sigma_{max}(\bH)(\frac{|\cU\cap\cB|}{|\cU|})^2 \frac{1}{|\cU\cap\cB|}\sum_{i \in (\cU\cap\cB)}\|(\hat{\p}_i-\p^*) )\|^2\\
   & \leq 4\sigma_{max}(\bH)(\frac{|\cU\cap\cB|}{|\cU|})^2 \frac{1}{|\cU\cap\cB|}\sum_{i \in (\cU\cap\cB)}(\|\hat{\p}_i\|^2+ \|\p^*\|^2) \\
   & \leq 4\sigma_{max}(\bH)(\frac{|\cU\cap\cB|}{|\cU|})^2 \max_{i \in \cM}(\|\hat{\p}_i\|^2+ \|\p^*\|^2)\\
   & \leq 4\sigma_{max}(\bH)(\frac{|\cU\cap\cB|}{|\cU|})^2 \max_{i \in \cM}(\|\hat{\p}_i- \p^*\|^2+ 2\|\p^*\|^2)\\
   & \leq 4\kappa (\frac{|\cU\cap\cB|}{|\cU|})^2 \max_{i \in \cM}(\|\bH^{\frac{1}{2}}(\hat{\p}_i- \p^*)\|^2+ 2\|\bH^{\frac{1}{2}}\p^*\|^2)\\
   &\leq   4\kappa (\frac{|\cU\cap\cB|}{|\cU|})^2(\epsilon^2 + (2+\zeta^2_{1})\|\bH^{\frac{1}{2}}\p^*\|^2)\\
   & \leq 4\kappa (\frac{\alpha}{1-\beta})^2(\epsilon^2 + (2+\zeta^2_{1})\|\bH^{\frac{1}{2}}\p^*\|^2),
\end{align*}
where  $\zeta_{1}= \nu(\eta+ \frac{\eta^2}{1-\eta}) = \frac{\nu}{1-\eta}$ and $\kappa= \frac{\sigma_{max}(\bH)}{\sigma_{min}(\bH)}$.

Combining all the bounds on Term1 , Term2 and Term3 we have 
\begin{align*}
  \frac{1}{2} \|\bH^{\frac12}(\hat{\p}-\p^*)\|^2 & \leq    \epsilon_{byz}^2+ \zeta^2_{byz}\|\bH^{\frac{1}{2}}\p^*\|^2,
\end{align*}
where 
\begin{align*}
    \epsilon_{byz}^2 &= \left(3 \left(\frac{1-\alpha}{1-\beta}\right)^2 +4\kappa \left(\frac{\alpha}{1-\beta}\right)^2 \right)\epsilon^2, \\
    \zeta^2_{byz} & =2 \left(\frac{1-\alpha}{1-\beta} \right)^2\zeta^2_{\cM\cap\cT} + \left(\frac{1-\alpha}{1-\beta} \right)^2\zeta^2_{\cM} + 4\kappa \left(\frac{\alpha}{1-\beta}\right)^2(2+\zeta^2_1).
\end{align*}
Finally we have 
\begin{align*}
 \phi(\hat{\p}) - \phi(\p^*) &  \leq  \epsilon_{byz}^2- \zeta^2_{byz}\phi(\p^*) \\
 \Rightarrow \phi(\hat{\p}) & \leq \epsilon_{byz}^2 + (1 - \zeta^2_{byz})\phi(\p^*).
\end{align*}

\end{proof}

\begin{lemma}\label{lem:byzdelta}
Let $\zeta_{byz} \in (0,1),\epsilon_{byz}$ be any fixed parameter. And $\hat{p}_t$ satisfies $\phi_t(\hat{\p}_t) \leq \epsilon_{byz}^2+ (1 -\zeta_{byz}^2) \min_{\p}\phi_t(\p)$. Under the Assumption~\ref{asm:hess}(Hessian $L$-Lipschitz) and 
$\mathbf{\Delta}_t =\w_t-\w^*$ satisfies
\begin{align*}
\mathbf{\Delta}^T_{t+1}\bH_t\mathbf{\Delta}_{t+1}& \leq L\| \mathbf{\Delta}_{t+1}\|\|\mathbf{\Delta}_t\|^2 +  \frac{\zeta^2_{byz}}{1-\zeta^2_{byz}}\mathbf{\Delta}_t^T\bH_t\mathbf{\Delta}_t + 2\epsilon_{byz}^2.
\end{align*}
\end{lemma}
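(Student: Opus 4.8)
The plan is to observe that this statement is precisely the Byzantine analogue of Lemma~\ref{lem:delta}, and that its proof requires no new ideas. The argument behind Lemma~\ref{lem:delta} uses only the hypothesis that $\hat{\p}_t$ obeys a one-step suboptimality bound of the form $\phi_t(\hat{\p}_t) \leq (\text{const})^2 + \big(1 - (\text{rate})^2\big)\min_{\p}\phi_t(\p)$ together with $(\text{rate})\in(0,1)$; it never exploits any internal structure of $\epsilon$ or $\zeta$. Here that bound is exactly what Lemma~\ref{lem:byzone} supplies, with $\epsilon_{byz}$ and $\zeta_{byz}$ in the roles of $\epsilon$ and $\zeta$. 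So my strategy is to replay the three moves of Lemma~\ref{lem:delta} verbatim under the substitution $\epsilon \mapsto \epsilon_{byz}$, $\zeta \mapsto \zeta_{byz}$.

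Concretely, I would first substitute $\hat{\p}_t = \w_t - \w_{t+1} = \mathbf{\Delta}_t - \mathbf{\Delta}_{t+1}$ into the definition of the quadratic $\phi_t$ in \eqref{quad}, and separately expand $(1-\zeta_{byz}^2)\,\phi_t\!\big(\tfrac{1}{1-\zeta_{byz}^2}\mathbf{\Delta}_t\big)$. Subtracting the two expansions and invoking the hypothesis $\phi_t(\hat{\p}_t) \le \epsilon_{byz}^2 + (1-\zeta_{byz}^2)\min_{\p}\phi_t(\p) \le \epsilon_{byz}^2 + (1-\zeta_{byz}^2)\phi_t\!\big(\tfrac{1}{1-\zeta_{byz}^2}\mathbf{\Delta}_t\big)$ (the final inequality holding because $\tfrac{1}{1-\zeta_{byz}^2}\mathbf{\Delta}_t$ is feasible for the minimization) yields the intermediate estimate
\begin{align*}
 \tfrac12\, \mathbf{\Delta}^{\top}_{t+1}\bH_t\mathbf{\Delta}_{t+1} - \mathbf{\Delta}_t^{\top}\bH_t\mathbf{\Delta}_{t+1} + \mathbf{\Delta}^{\top}_{t+1}\g_t - \tfrac{\zeta_{byz}^2}{2(1-\zeta_{byz}^2)}\,\mathbf{\Delta}_t^{\top}\bH_t\mathbf{\Delta}_t \le \epsilon_{byz}^2 .
\end{align*}
The only thing this step needs is $1-\zeta_{byz}^2 > 0$, which is exactly the standing hypothesis $\zeta_{byz}\in(0,1)$.

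Finally, I would convert the cross terms into the stated curvature and quadratic terms using Assumption~\ref{asm:hess}. Writing $\g_t = \g(\w_t) = \big(\int_0^1 \nabla^2 f(\w^* + z\mathbf{\Delta}_t)\,dz\big)\mathbf{\Delta}_t$ (since $\g(\w^*)=0$), Jensen's inequality together with the $L$-Lipschitz Hessian bound gives $\|\bH_t\mathbf{\Delta}_t - \g_t\| \le \tfrac{L}{2}\|\mathbf{\Delta}_t\|^2$, exactly as in the proof of Lemma~\ref{lem:delta}. Using symmetry of $\bH_t$ to rewrite $-\mathbf{\Delta}_t^{\top}\bH_t\mathbf{\Delta}_{t+1} + \mathbf{\Delta}^{\top}_{t+1}\g_t = -\mathbf{\Delta}_{t+1}^{\top}(\bH_t\mathbf{\Delta}_t - \g_t)$, then applying Cauchy--Schwarz to get $2\,\mathbf{\Delta}_{t+1}^{\top}(\bH_t\mathbf{\Delta}_t - \g_t) \le L\|\mathbf{\Delta}_{t+1}\|\,\|\mathbf{\Delta}_t\|^2$, and multiplying the intermediate estimate through by $2$, I recover the claimed inequality. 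I expect no genuine obstacle specific to this lemma: all of the Byzantine-specific difficulty (the trimming bookkeeping and the bounding of Term~1, Term~2, Term~3) was already absorbed into the constants $\epsilon_{byz}$ and $\zeta_{byz}$ in Lemma~\ref{lem:byzone}. The single point that must be verified is the consistency of the hypotheses — that $\zeta_{byz}\in(0,1)$ so the division by $1-\zeta_{byz}^2$ is legitimate — and this is assumed at the outset of the statement.
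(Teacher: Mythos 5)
Your proposal is correct and is exactly the paper's proof: the paper disposes of this lemma by taking $\zeta = \zeta_{byz}$, $\epsilon = \epsilon_{byz}$ (as supplied by Lemma~\ref{lem:byzone}) and repeating the argument of Lemma~\ref{lem:delta} unchanged, which is precisely the substitution-and-replay you describe. Your observation that the proof of Lemma~\ref{lem:delta} uses only the abstract suboptimality bound and $\zeta_{byz}\in(0,1)$, never the internal structure of the constants, is the right justification for why this works.
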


\begin{proof}
 We choose $\zeta= \zeta_{byz}$ and $\epsilon=\epsilon_{byz}$ from the Lemma~\ref{lem:byzone} and follow  the proof of Lemma~\ref{lem:delta} to obtain the desired bound.
\end{proof}

\textbf{Proof of Theorem~\ref{thm:byzsmooth} }
\begin{proof}
 We get the desired bound by developing from the result of the Lemma~\ref{lem:byzdelta} and following the proof of Theorem~\ref{thm:smooth}
\end{proof}

 
\section{Appendix C:Analysis of Section~\ref{sec:compress}}

First we prove the following lemma that will be useful in our subsequent calculations. Consider that $\cQ(\hat{\p})= \frac{1}{|B|}\sum_{i \in B}\cQ(\hat{\p}_i)$. And also we use the following notation
$ \zeta_B= \nu(\frac{\eta}{\sqrt{|B|}}+ \frac{\eta^2}{1-\eta})$,  $\nu= \frac{\sigma_{max}(\bA^{\top}\bA)}{\sigma_{max}(\bA^{\top}\bA)+n\lambda} \leq 1$.

\begin{lemma}\label{lem: useful}
If $\cQ(\hat{\p}_i)$ is the local update direction and $\p^*$ is the optimal solution to the quadratic function $\phi$ then 
\begin{align*}
\left\| \bH^{\frac{1}{2}} (\cQ(\hat{\p}_i)-\p^* )\right\|^2 \leq  1+ \kappa (1-\rho) )\epsilon^2 + (\zeta^2_B + \kappa (1-\rho)((1+\zeta^2_1))\left\|\bH^{\frac{1}{2}} \p^* \right\|^2,
\end{align*} 
 where $\bH$ is the exact Hessian and 
  \begin{align*}
 \epsilon_1 &= \sqrt{(1+ \kappa (1-\rho) )}\epsilon, \\
 \zeta^2_{comp,B}&= (\zeta^2_B + \kappa (1-\rho)((1+\zeta^2_1)).
 \end{align*}
 $\epsilon$ is defined in equation~\eqref{eps} and 
 \end{lemma}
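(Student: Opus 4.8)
The plan is to write the (averaged) compressed update $\cQ(\hat{\p}) = \frac{1}{|B|}\sum_{i\in B}\cQ(\hat{\p}_i)$ as the sum of a \emph{compression error} and an \emph{approximation error}, and to bound the two contributions separately in the $\bH^{\frac12}$-weighted norm. Concretely, I would decompose
\begin{align*}
\cQ(\hat{\p}) - \p^* = \underbrace{\frac{1}{|B|}\sum_{i\in B}\bigl(\cQ(\hat{\p}_i) - \hat{\p}_i\bigr)}_{\text{compression}} + \underbrace{\Bigl(\tfrac{1}{|B|}\sum_{i\in B}\hat{\p}_i - \p^*\Bigr)}_{\text{approximation}},
\end{align*}
and then expand $\bigl\|\bH^{\frac12}(\cQ(\hat{\p}) - \p^*)\bigr\|^2$, arranging the split so that the approximation piece keeps its clean one-round bound while the compression piece carries the $(1-\rho)$ factor.

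The approximation error is exactly the quantity controlled by the one-round sketching analysis. Re-running the argument of Lemma~\ref{lem:onernd} with the averaging restricted to the index set $B$ (so that $m$ is replaced by $|B|$, and $\zeta$ by $\zeta_B$) gives a bound of the form $\epsilon^2 + \zeta_B^2\|\bH^{\frac12}\p^*\|^2$, which supplies the $\zeta_B^2$ term in the statement. For the compression error I would first pass to the Euclidean norm, $\|\bH^{\frac12}(\cQ(\hat{\p}_i) - \hat{\p}_i)\|^2 \le \sigma_{max}(\bH)\|\cQ(\hat{\p}_i) - \hat{\p}_i\|^2$, then invoke the $\rho$-approximate compressor property of Definition~\ref{def:comp}, namely $\|\cQ(\hat{\p}_i) - \hat{\p}_i\|^2 \le (1-\rho)\|\hat{\p}_i\|^2$, and finally reconvert via $\sigma_{max}(\bH)\|\hat{\p}_i\|^2 \le \kappa\,\|\bH^{\frac12}\hat{\p}_i\|^2$. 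This manufactures the multiplicative $\kappa(1-\rho)$ factor sitting in front of both $\epsilon^2$ and the leading term.

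The main obstacle is that the compression error is driven by the size $\|\bH^{\frac12}\hat{\p}_i\|$ of the \emph{uncompressed} local direction, so the crux is to bound this in terms of $\epsilon$ and $\|\bH^{\frac12}\p^*\|$. Here I would use the single-machine guarantee $\|\bH^{\frac12}(\hat{\p}_i - \p^*)\|^2 \le \epsilon^2 + \zeta_1^2\|\bH^{\frac12}\p^*\|^2$ (the per-machine specialization with $\zeta_1 = \tfrac{\nu}{1-\eta}$ already isolated in the Term~3 estimate of Lemma~\ref{lem:byzone}), combined with the triangle inequality $\|\bH^{\frac12}\hat{\p}_i\| \le \|\bH^{\frac12}(\hat{\p}_i - \p^*)\| + \|\bH^{\frac12}\p^*\|$, to get $\|\bH^{\frac12}\hat{\p}_i\|^2 \le \epsilon^2 + (1+\zeta_1^2)\|\bH^{\frac12}\p^*\|^2$, which produces the $\kappa(1-\rho)(1+\zeta_1^2)$ term. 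The delicate bookkeeping is in tracking which averaging set ($B$ versus a single machine) feeds each factor, and in ensuring the cross term arising from the decomposition does not spoil the clean additive structure $\zeta_B^2 + \kappa(1-\rho)(1+\zeta_1^2)$; adding the approximation and compression bounds then yields the claimed estimate.
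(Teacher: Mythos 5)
Your proposal follows essentially the same route as the paper's own proof: the identical split of $\cQ(\hat{\p})-\p^*$ into a compression term $\cQ(\hat{\p})-\hat{\p}$ and an approximation term $\hat{\p}-\p^*$, the same use of Definition~\ref{def:comp} with the $\sigma_{max}(\bH)$/$\sigma_{min}(\bH)$ conversion to manufacture $\kappa(1-\rho)$, the same per-machine bound $\|\bH^{\frac12}\hat{\p}_i\|^2 \le \epsilon^2+(1+\zeta_1^2)\|\bH^{\frac12}\p^*\|^2$ via $\zeta_1$, and the same application of Lemma~\ref{lem:onernd} over the index set $B$ to get the $\zeta_B^2$ term. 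The proof is correct (up to the same absorbed constant-factor-of-two in the squared triangle inequality that the paper itself glosses over), so there is nothing to change.
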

 
 \begin{proof}
 \begin{align}
 \left\| \bH^{\frac{1}{2}} (\cQ(\hat{\p})-\p^* )\right\|^2 & = \left\| \bH^{\frac{1}{2}} (\cQ(\hat{\p}) -\hat{\p} +\hat{\p}-\p^* )\right\|^2 \nonumber \\
& \leq 2 \left( \underbrace{ \left\| \bH^{\frac{1}{2}} (\cQ(\hat{\p}) -\hat{\p}) \right\|^2}_{Term 1}  + \underbrace{ \left\|\bH^{\frac{1}{2}}( \hat{\p}-\p^* )\right\|^2}_{Term 2} \right). \label{u1}
 \end{align}
Following the proof of Lemma~\ref{lem:onernd}  we get  
\begin{align}
 \left\| \bH^{\frac{1}{2}} (\hat{\p}_i -\p^*) \right\|^2 \leq \epsilon^2 +\zeta_1\left\|\bH^{\frac{1}{2}} \p^* \right\|^2,  \label{u2}
\end{align} 
 where $\epsilon$ is as  defined in \eqref{eps}.Now we consider the term
 \begin{align*}
 \left\| \bH^{\frac{1}{2}} (\cQ(\hat{\p}_i) -\hat{\p}_i \right\|^2 & \leq \sigma_{max}(\bH)(1-\rho)  \left\| \hat{\p}_i \right\|^2 \\
 & \leq \sigma_{max}(\bH)(1-\rho) \left(  \left\| \hat{\p}_i -\p^* \right\|^2 +  \left\| \p^* \right\|^2 \right) \\
 & \leq \frac{\sigma_{max}}{\sigma_{min}}(1-\rho) \left(  \left\| \bH^{\frac{1}{2}} (\hat{\p}_i -\p^*) \right\|^2 +  \left\|\bH^{\frac{1}{2}} \p^* \right\|^2 \right)\\
&= \kappa(1-\rho) \left(  \left\| \bH^{\frac{1}{2}} (\hat{\p}_i -\p^*) \right\|^2 +  \left\|\bH^{\frac{1}{2}} \p^* \right\|^2 \right)\\
& \leq  \kappa(1-\rho) \left(  \epsilon^2 +  (1+\zeta_1^2)\left\|\bH^{\frac{1}{2}} \p^* \right\|^2 \right) \quad \text{Using \eqref{u2}}.
 \end{align*}
 Now we use the above calculation and  bound  Term1 
\begin{align}
\left\| \bH^{\frac{1}{2}} (\cQ(\hat{\p}) -\hat{\p}) \right\|^2 & \leq \frac{1}{|B|}\sum_{i \in B} \left\| \bH^{\frac{1}{2}} (\cQ(\hat{\p}_i) -\hat{\p}_i \right\|^2 \nonumber \\
& \leq  \kappa (1-\rho) \left(  \epsilon^2 +  (1+\zeta^2_1)\left\|\bH^{\frac{1}{2}} \p^* \right\|^2 \right).  \label{u3}
\end{align} 
 We can bound the Term2  directly using the proof of  Lemma~\ref{lem:onernd}
 \begin{align}
 \left\|\bH^{\frac{1}{2}}( \hat{\p}-\p^* )\right\|^2 \leq \epsilon^2 + \zeta^2_B\left\|\bH^{\frac{1}{2}} \p^* \right\|^2. \label{u4}
 \end{align}
Now we use \eqref{u3} and \eqref{u4} and plug them in \eqref{u1} 
\begin{align*}
\left\| \bH^{\frac{1}{2}} (\cQ(\hat{\p})-\p^* )\right\|^2 & \leq (1+ \kappa (1-\rho) )\epsilon^2 + (\zeta^2_B + \kappa (1-\rho)((1+\zeta^2_1))\left\|\bH^{\frac{1}{2}} \p^* \right\|^2.
\end{align*}
 Now we define 
 \begin{align*}
 \epsilon_1 &= \sqrt{(1+ \kappa (1-\rho) )}\epsilon \\
 \zeta^2_{comp,B}&= (\zeta^2_B + \kappa (1-\rho)((1+\zeta^2_1)).
 \end{align*}
\end{proof}  
Now we have the robust update in iteration $t$ to be $\cQ(\hat{\p}) = \frac{1}{|\cU_t|} \sum_{i \in \cU_t}\cQ(\hat{p}_{i,t})$. 
\begin{lemma}\label{lem:combyzone}
Let $\{\bS_i\}_{i=1}^m \in \mathbb{R}^{n \times s}$  be sketching matrices based on  Lemma~\ref{lem:Hessketch}. Let $\phi_t$ be defined in \eqref{quad} and $\cQ(\hat{\p}_t)$ be the update with $\cQ$ being $\rho$-approximate compressor. It holds that 
\begin{align*}
\min_{\p}\phi_t(\p) \leq \phi_t(\cQ(\hat{\p}_t))& \leq \epsilon_{comp,byz}^2 + (1 - \zeta^2_{comp,byz})\phi_t(\p^*),
\end{align*}
where $\epsilon_{comp,byz}$ and $\zeta^2_{comp,byz}$ is as defined in ~\eqref{ceps} and ~\eqref{calpha} respectively.
\end{lemma}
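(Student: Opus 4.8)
The plan is to follow the proof of Lemma~\ref{lem:byzone} line by line, substituting the compressed updates $\cQ(\hat{\p}_i)$ for $\hat{\p}_i$ and invoking Lemma~\ref{lem: useful} wherever the uncompressed argument used Lemma~\ref{lem:onernd}. Starting from the identity
\[
\phi_t(\cQ(\hat{\p}_t)) - \phi_t(\p^*) = \tfrac{1}{2}\big\|\bH^{\frac12}(\cQ(\hat{\p}_t) - \p^*)\big\|^2
\]
and writing $\cQ(\hat{\p}_t) = \frac{1}{|\cU_t|}\sum_{i \in \cU_t}\cQ(\hat{\p}_i)$, I would reuse the set identity $|\cU| = |\cM| - |\cM\cap\cT| + |\cU\cap\cB|$ to split the inner sum into contributions over $\cM$, over $\cM\cap\cT$, and over $\cU\cap\cB$. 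Applying $\tfrac{1}{2}\|a - b + c\|^2 \le \|a\|^2 + 2\|b\|^2 + 2\|c\|^2$ then produces the same three terms as in Lemma~\ref{lem:byzone}, now built from compressed updates.

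Terms~1 and~2 are routine. For Term~1 I would apply Lemma~\ref{lem: useful} with $B = \cM$, pulling out the factor $\big(\frac{1-\alpha}{1-\beta}\big)^2$ (since $|\cM| = (1-\alpha)m$, $|\cU| = (1-\beta)m$) in front of $\epsilon_1^2 + \zeta^2_{comp,\cM}\,\|\bH^{\frac12}\p^*\|^2$, where $\epsilon_1^2 = (1+\kappa_t(1-\rho))\epsilon^2$ and $\zeta^2_{comp,\cM} = \zeta^2_{\cM} + \kappa_t(1-\rho)(1+\zeta^2_1)$; this is the middle summand of \eqref{calpha}. For Term~2 I would apply Lemma~\ref{lem: useful} with $B = \cM\cap\cT$, bound $\big(\frac{|\cM\cap\cT|}{|\cU|}\big)^2 \le \big(\frac{1-\alpha}{1-\beta}\big)^2$ via $|\cM\cap\cT| \le |\cM|$, and use $\zeta_{\cM\cap\cT} \le \zeta_1$ (as $\zeta_B$ decreases in $|B|$ and $|\cM\cap\cT| \ge 1$), which replaces $\zeta^2_{comp,\cM\cap\cT}$ by $\zeta^2_1 + \kappa_t(1-\rho)(1+\zeta^2_1)$ and reproduces the first summand of \eqref{calpha}.

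The main obstacle is Term~3, the Byzantine contribution. Since the center now thresholds on the compressed norms $\|\cQ(\hat{\p}_i)\|$, and $\beta > \alpha$ guarantees at least one good machine is trimmed, every surviving Byzantine index $i \in \cU\cap\cB$ obeys $\|\cQ(\hat{\p}_i)\| \le \max_{j\in\cM}\|\cQ(\hat{\p}_j)\|$. I would use this to replace the sum over $\cU\cap\cB$ by $\max_{j\in\cM}$, bound $\|\cQ(\hat{\p}_j)\|^2 \le \|\cQ(\hat{\p}_j) - \p^*\|^2 + 2\|\p^*\|^2$, pass to the $\bH^{\frac12}$-weighted norm at the cost of the condition number $\kappa_t$, and then apply the single-machine ($|B|=1$, so $\zeta_B = \zeta_1$) case of Lemma~\ref{lem: useful} to get $\max_{j\in\cM}\|\bH^{\frac12}(\cQ(\hat{\p}_j)-\p^*)\|^2 \le \epsilon_1^2 + (\zeta^2_1 + \kappa_t(1-\rho)(1+\zeta^2_1))\|\bH^{\frac12}\p^*\|^2$. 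This yields Term~3 $\le 4\kappa_t\big(\frac{\alpha}{1-\beta}\big)^2\big(\epsilon_1^2 + (2 + \zeta^2_1 + \kappa_t(1-\rho)(1+\zeta^2_1))\|\bH^{\frac12}\p^*\|^2\big)$, the last summand of \eqref{calpha}. The delicate accounting is that the compression penalty $\kappa_t(1-\rho)(1+\zeta^2_1)$ compounds with the Byzantine factor $\kappa_t\big(\frac{\alpha}{1-\beta}\big)^2$, which is exactly the source of the nested $\kappa_t^2(1-\rho)$ dependence in \eqref{calpha}.

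Summing the three bounds collects all the $\epsilon^2$ coefficients into $\epsilon_{comp,byz}^2 = [\,3(\frac{1-\alpha}{1-\beta})^2 + 4\kappa_t(\frac{\alpha}{1-\beta})^2\,]\,\epsilon_1^2$ of \eqref{ceps} (the $3$ being $1$ from Term~1 plus $2$ from Term~2), and the $\|\bH^{\frac12}\p^*\|^2$ coefficients into $\zeta^2_{comp,byz}$ of \eqref{calpha}. Finally, using $\phi_t(\p^*) = -\|\bH^{\frac12}\p^*\|^2$ to rewrite that coefficient converts the estimate into $\phi_t(\cQ(\hat{\p}_t)) \le \epsilon_{comp,byz}^2 + (1 - \zeta^2_{comp,byz})\phi_t(\p^*)$, which is the claim.
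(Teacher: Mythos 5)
Your proposal is correct and follows essentially the same route as the paper's own proof: the identical three-way decomposition of $\cU_t$ into $\cM$, $\cM\cap\cT$, and $\cU\cap\cB$, with Lemma~\ref{lem: useful} playing the role that Lemma~\ref{lem:onernd} played in the uncompressed case, and the same norm-thresholding argument (at least one good machine trimmed, so surviving Byzantine norms are dominated by $\max_{j\in\cM}\|\cQ(\hat{\p}_j)\|$) for Term~3. Your accounting of how $\epsilon_1^2=(1+\kappa_t(1-\rho))\epsilon^2$ and the $\zeta^2_{comp,\cdot}$ quantities assemble into \eqref{ceps} and \eqref{calpha}, including the bound $\zeta_{\cM\cap\cT}\le\zeta_1$ and the compounding of the compression penalty with the Byzantine factor, matches the paper exactly.
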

\begin{proof}
In the following analysis we omit the subscript '$t$'. From the definition of the quadratic function \eqref{quad} we know that    
\begin{align*}
    \phi(\cQ(\hat{\p})) - \phi(\p^*) & = \frac{1}{2}\|\bH^{\frac12}(\cQ(\hat{\p})-\p^*)\|^2.
\end{align*}
Now we consider 
\vspace{-10pt}
\begin{align*}
  \frac{1}{2} \|\bH^{\frac12}(\cQ(\hat{\p})-\p^*)\|^2 & = \frac{1}{2}\|\bH^{\frac12}(\frac{1}{|\cU|}\sum_{i \in \cU}\cQ(\hat{\p}_i)-\p^*)\|^2  \\
   &=\frac{1}{2}\|\bH^{\frac12}\frac{1}{|\cU|}(\sum_{i \in \cM}(\cQ(\hat{\p}_i)-\p^*) -\sum_{i \in (\cM\cap\cT)}(\cQ(\hat{\p}_i)-\p^*)+\sum_{i \in (\cU\cap\cB)}(\cQ(\hat{\p}_i)-\p^*) )\|^2 \\
   &\leq \underbrace{ \|\bH^{\frac12}\frac{1}{|\cU|}(\sum_{i \in \cM}(\cQ(\hat{\p}_i)-\p^*)\|^2}_{Term 1} + \underbrace{ 2\|\bH^{\frac12}\frac{1}{|\cU|}\sum_{i \in (\cM\cap\cT)}(\cQ(\hat{\p}_i)-\p^*)\|^2 }_{Term 2} \\ 
   &+ \underbrace{2\|\bH^{\frac12}\frac{1}{|\cU|}\sum_{i \in (\cU\cap\cB)}(\cQ(\hat{\p}_i)-\p^*) )\|^2}_{Term 3}.
\end{align*}
Now we bound each term separately and use the Lemma~\ref{lem: useful}
\begin{align*}
    Term 1 &=  \|\bH^{\frac12}\frac{1}{|\cU|}(\sum_{i \in \cM}(\cQ(\hat{\p}_i)-\p^*)\|^2 \\& = (\frac{1-\alpha}{1-\beta})^2 \|\bH^{\frac12}\frac{1}{|\cM|}(\sum_{i \in \cM}(\cQ(\hat{\p}_i)-\p^*)\|^2 \\
   & \leq (\frac{1-\alpha}{1-\beta})^2 [\epsilon_1^2+ \zeta^2_{comp,\cM}\|\bH^{\frac{1}{2}}\p^*\|^2],
\end{align*}
where $\zeta^2_{comp,\cM} = (\zeta^2_{\cM} + \kappa (1-\rho)((1+\zeta^2_1)$. Similarly  the Term 2 can be bonded as it is a bound on good machines
\begin{align*}
    Term 2 & = 2\|\bH^{\frac12}\frac{1}{|\cU|}\sum_{i \in (\cM\cap\cT)}(\cQ(\hat{\p}_i)-\p^*)\|^2 \\
    & = 2 (\frac{1-\alpha}{1-\beta})^2 \|\bH^{\frac12}\frac{1}{|\cM\cap\cT|}\sum_{i \in (\cM\cap\cT)}(\cQ(\hat{\p}_i)-\p^*)\|^2\\
    & \leq 2(\frac{1-\alpha}{1-\beta})^2 [\epsilon_1^2 + \zeta^2_{comp,\cM\cap\cT}\|\bH^{\frac{1}{2}}\p^*\|^2].
\end{align*}
For the Term 3 we know that $\beta > \alpha $ so all the untrimmed worker norm is bounded by a good machine as at least one good machine gets trimmed.
\begin{align*}
    Term 3 & = 2\|\bH^{\frac12}\frac{1}{|\cU|}\sum_{i \in (\cU\cap\cB)}(\cQ(\hat{\p}_i)-\p^*) )\|^2 \\
    & \leq 2\sigma_{max}(\bH)(\frac{|\cU\cap\cB|}{|\cU|})^2 \|\frac{1}{|\cU\cap\cB|}\sum_{i \in (\cU\cap\cB)}(\cQ(\hat{\p}_i)-\p^*) )\|^2 \\
    & \leq 2\sigma_{max}(\bH)(\frac{|\cU\cap\cB|}{|\cU|})^2 \frac{1}{|\cU\cap\cB|}\sum_{i \in (\cU\cap\cB)}\|(\cQ(\hat{\p}_i)-\p^*) )\|^2\\
   & \leq 4\sigma_{max}(\bH)(\frac{|\cU\cap\cB|}{|\cU|})^2 \frac{1}{|\cU\cap\cB|}\sum_{i \in (\cU\cap\cB)}(\|\cQ(\hat{\p}_i)\|^2+ \|\p^*\|^2) \\
   & \leq 4\sigma_{max}(\bH)(\frac{|\cU\cap\cB|}{|\cU|})^2 \max_{i \in \cM}(\|\cQ(\hat{\p}_i)\|^2+ \|\p^*\|^2)\\
   & \leq 4\sigma_{max}(\bH)(\frac{|\cU\cap\cB|}{|\cU|})^2 \max_{i \in \cM}(\|\cQ(\hat{\p}_i)- \p^*\|^2+ 2\|\p^*\|^2)\\
   & \leq 4\kappa (\frac{|\cU\cap\cB|}{|\cU|})^2 \max_{i \in \cM}(\|\bH^{\frac{1}{2}}(\cQ(\hat{\p}_i)- \p^*)\|^2+ 2\|\bH^{\frac{1}{2}}\p^*\|^2)\\
   &\leq   4\kappa (\frac{|\cU\cap\cB|}{|\cU|})^2(\epsilon_1^2 + (2+\zeta^2_{1})\|\bH^{\frac{1}{2}}\p^*\|^2)\\
   &\leq   4\kappa (\frac{\alpha}{1-\beta})^2(\epsilon_1^2 + (2+\zeta^2_{1})\|\bH^{\frac{1}{2}}\p^*\|^2).
\end{align*}
Combining all the bounds on Term1 , Term2 and Term3 we have 
\begin{align*}
  \frac{1}{2} \|\bH^{\frac12}(\hat{\p}-\p^*)\|^2 & \leq    \epsilon_{byz}^2+ \zeta^2_{byz}\|\bH^{\frac{1}{2}}\p^*\|^2,
\end{align*}
where 
\begin{align*}
    \epsilon_{comp,byz}^2 &= \left(3 \left(\frac{1-\alpha}{1-\beta} \right)^2 +4\kappa \left(\frac{\alpha}{1-\beta}\right)^2 \right)\epsilon_1^2 \\
    \zeta^2_{comp,byz} & =2\left(\frac{1-\alpha}{1-\beta}\right)^2\zeta^2_{comp,\cM\cap\cT} + \left(\frac{1-\alpha}{1-\beta}\right)^2\zeta^2_{comp,\cM} + 4\kappa  \left(\frac{\alpha}{1-\beta}\right)^2 (2+\zeta^2_{comp,1}).
\end{align*}
Finally we have 
\begin{align*}
 \phi(\hat{\p}) - \phi(\p^*) &  \leq  \epsilon_{comp,byz}^2- \zeta^2_{comp,byz}\phi(\p^*) \\
 \Rightarrow \phi(\hat{\p}) & \leq \epsilon_{comp,byz}^2 + (1 - \zeta^2_{comp,byz})\phi(\p^*).
\end{align*}
\end{proof}

\begin{lemma}\label{lem:compbyzdelta}
Let $\zeta_{comp,byz} \in (0,1),\epsilon_{comp,byz}$ be any fixed parameter. And $\cQ(\hat{p}_t)$ satisfies $\phi_t(\cQ(\hat{p}_t)) \leq \epsilon_{byz}^2+ (1 -\zeta_{byz}^2) \min_{\p}\phi_t(\p)$. Under the Assumption~\ref{asm:hess}(Hessian $L$-Lipschitz) and 
$\mathbf{\Delta}_t =\w_t-\w^*$ satisfies
\begin{align*}
\mathbf{\Delta}^T_{t+1}\bH_t\mathbf{\Delta}_{t+1}& \leq L\| \mathbf{\Delta}_{t+1}\|\|\mathbf{\Delta}_t\|^2 +  \frac{\zeta^2_{comp,byz}}{1-\zeta^2_{comp,byz}}\mathbf{\Delta}_t^T\bH_t\mathbf{\Delta}_t + 2\epsilon_{comp,byz}^2.
\end{align*}
\end{lemma}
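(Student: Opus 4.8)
The plan is to observe that Lemma~\ref{lem:compbyzdelta} has exactly the same structure as Lemma~\ref{lem:byzdelta} (and the uncompressed Lemma~\ref{lem:delta}), so the argument of Lemma~\ref{lem:delta} applies verbatim once the compression-specific quadratic-suboptimality bound is in hand. The proof of Lemma~\ref{lem:delta} uses the update direction only through two facts: that the iterate advances by $\w_{t+1}=\w_t-(\text{step})$, so the step equals $\mathbf{\Delta}_t-\mathbf{\Delta}_{t+1}$; and that the step satisfies a bound of the form $\phi_t(\text{step})\le(\text{error})^2+(1-(\text{rate})^2)\min_\p\phi_t(\p)$. In the compressed Byzantine-robust setting the step is $\cQ(\hat{\p}_t)$, the compressed aggregate defined in Section~\ref{sec:compress}, and since $\gamma=1$ the update rule $\w_{t+1}=\w_t-\cQ(\hat{\p}_t)$ gives $\cQ(\hat{\p}_t)=\mathbf{\Delta}_t-\mathbf{\Delta}_{t+1}$. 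The required suboptimality bound, with constants $\epsilon_{comp,byz}$ and $\zeta_{comp,byz}$, is precisely Lemma~\ref{lem:combyzone}. Thus I would simply set $\zeta=\zeta_{comp,byz}$ and $\epsilon=\epsilon_{comp,byz}$ and replay the steps.

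Concretely: first, evaluate $\phi_t$ at the true step $\cQ(\hat{\p}_t)=\mathbf{\Delta}_t-\mathbf{\Delta}_{t+1}$ and at the scaled point $\tfrac{1}{1-\zeta_{comp,byz}^2}\mathbf{\Delta}_t$, then subtract, producing the identity that relates $\mathbf{\Delta}_{t+1}^\top\bH_t\mathbf{\Delta}_{t+1}$, the cross term $\mathbf{\Delta}_t^\top\bH_t\mathbf{\Delta}_{t+1}$, the term $\mathbf{\Delta}_{t+1}^\top\g_t$, and $\mathbf{\Delta}_t^\top\bH_t\mathbf{\Delta}_t$, exactly as in the derivation leading to \eqref{delta1}. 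Second, invoke Lemma~\ref{lem:combyzone} to replace $\phi_t(\cQ(\hat{\p}_t))$ by its upper bound, which collapses the identity into the analogue of \eqref{delta1} with $\zeta_{comp,byz}$ in place of $\zeta$ and $\epsilon_{comp,byz}$ in place of $\epsilon$. Third, use the mean-value representation $\g(\w_t)=\big(\int_0^1\nabla^2 f(\w^*+z\mathbf{\Delta}_t)\,dz\big)\mathbf{\Delta}_t$ together with Assumption~\ref{asm:hess} to bound $\|\bH_t\mathbf{\Delta}_t-\g_t\|\le\tfrac{L}{2}\|\mathbf{\Delta}_t\|^2$, identical to Lemma~\ref{lem:delta}. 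Combining these via Cauchy--Schwarz on the cross term $2\mathbf{\Delta}_{t+1}^\top(\bH_t\mathbf{\Delta}_t-\g_t)$ then yields the stated recursion.

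The main point is that there is no genuinely new obstacle here: all compression-induced difficulty has already been absorbed into Lemma~\ref{lem:combyzone} (via Lemma~\ref{lem: useful}, which controls the extra $\kappa(1-\rho)$ terms arising from the $\rho$-approximate compressor). Given that lemma, the present statement is a mechanical substitution. The only care needed is bookkeeping: confirming that the object fed into the recursion is exactly the compressed step $\cQ(\hat{\p}_t)$ that Lemma~\ref{lem:combyzone} bounds, and noting that the displayed hypothesis should read $\epsilon_{comp,byz}$ and $\zeta_{comp,byz}$ rather than the uncompressed $\epsilon_{byz},\zeta_{byz}$ (the conclusion already uses the compressed constants, consistent with feeding Lemma~\ref{lem:combyzone}).
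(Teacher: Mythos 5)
Your proposal matches the paper's own proof exactly: the paper proves this lemma by setting $\zeta=\zeta_{comp,byz}$ and $\epsilon=\epsilon_{comp,byz}$, invoking Lemma~\ref{lem:combyzone} for the quadratic-suboptimality bound, and replaying the argument of Lemma~\ref{lem:delta} verbatim. Your additional observation that the hypothesis should read $\epsilon_{comp,byz},\zeta_{comp,byz}$ rather than $\epsilon_{byz},\zeta_{byz}$ correctly identifies a typo in the statement and is consistent with how the paper actually uses the lemma.
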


\begin{proof}
 We choose $\zeta= \zeta_{comp,byz}$ and $\epsilon=\epsilon_{comp,byz}$ from the Lemma~\ref{lem:combyzone} and follow  the proof of Lemma~\ref{lem:delta} to obtain the desired bound.
\end{proof}

\textbf{Proof of Theorem~\ref{thm:compbyzsmooth} }
\begin{proof}
 We get the desired bound by developing from the result of the Lemma~\ref{lem:compbyzdelta} and following the proof of Theorem~\ref{thm:smooth}
\end{proof}

\section{Additional Experiment}

In addition to the experimental results in Section~\ref{sec:exp}, we provide some more experiments  supporting the robustness of the \textsf{COMRADE} in two different types of attacks : 1. `Gaussian attack': where the Byzantine workers add Gaussian Noise $(\mathcal{N}(\mu,\sigma^2))$ to the update and 2. `random label attack': where the Byzantine worker machines learns based on random labels instead of proper labels.  

\begin{figure}[h!]%
    \centering
   \subfloat[w5a `Gauss']{{\includegraphics[height = 3cm,width=3.5cm]{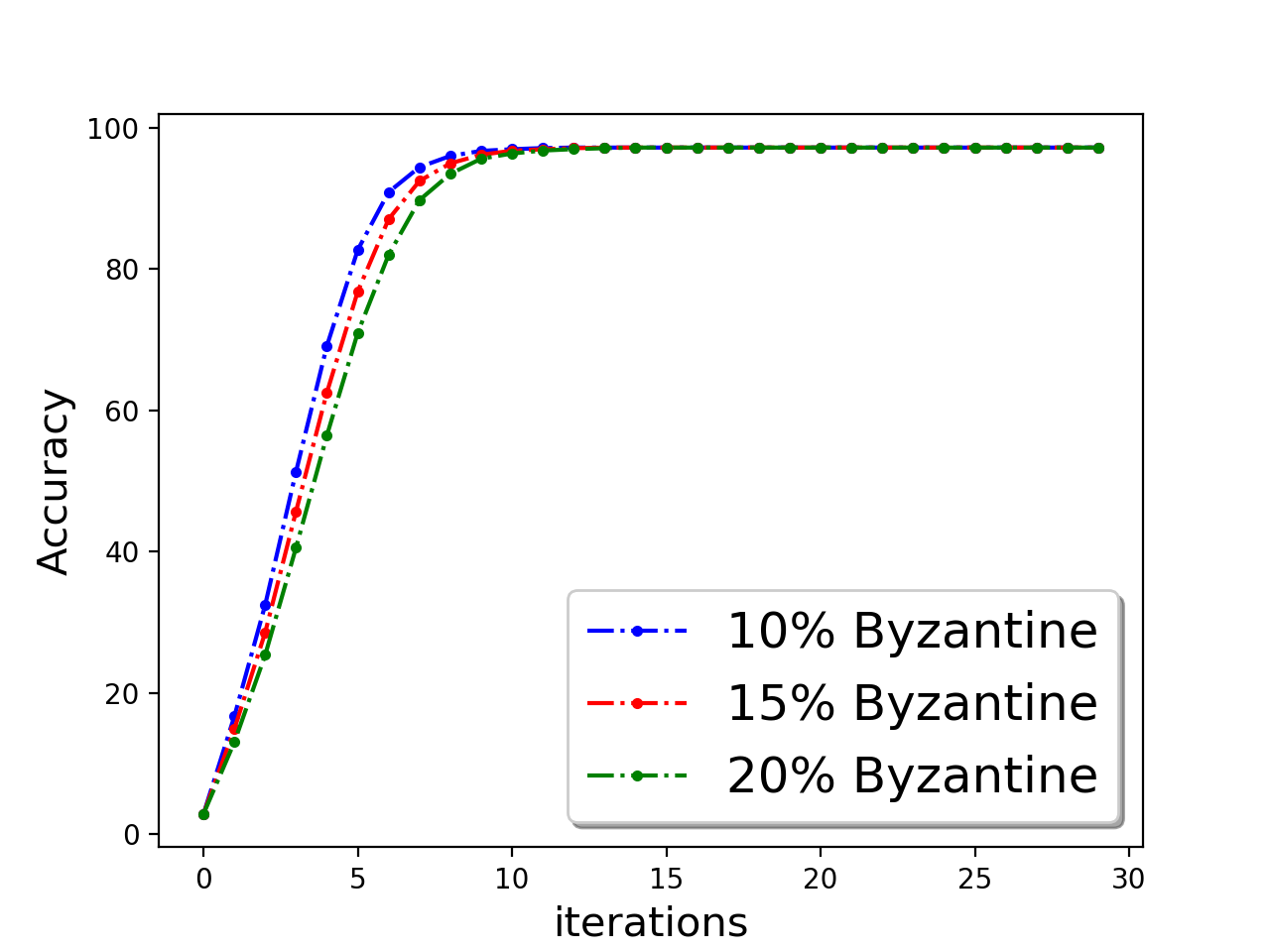} }}%
    \subfloat[a9a `Gauss']{{\includegraphics[height = 3cm,width=3.5cm]{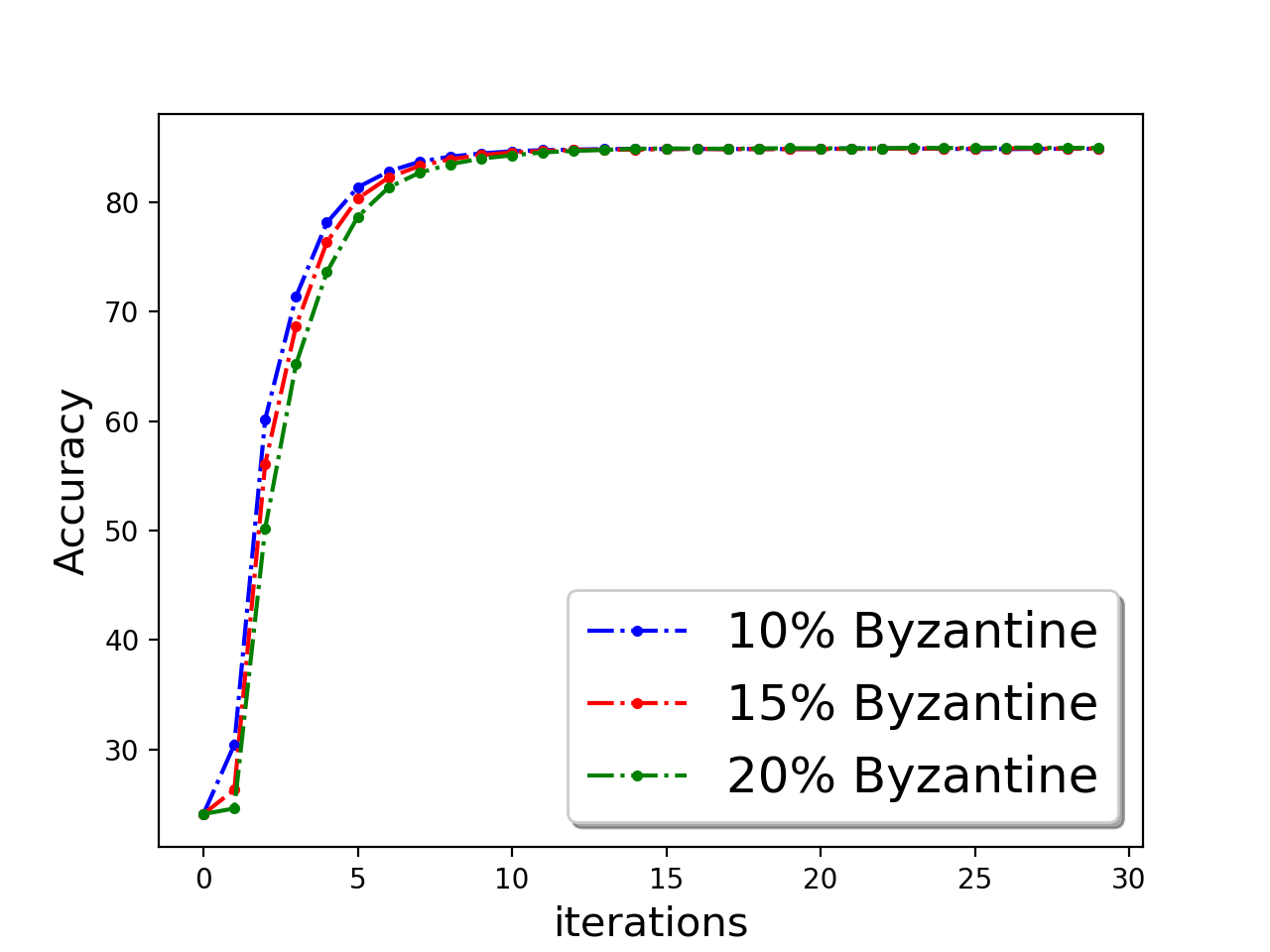} }}%
  \subfloat[w5a `random']{{\includegraphics[height = 3cm,width=3.5cm]{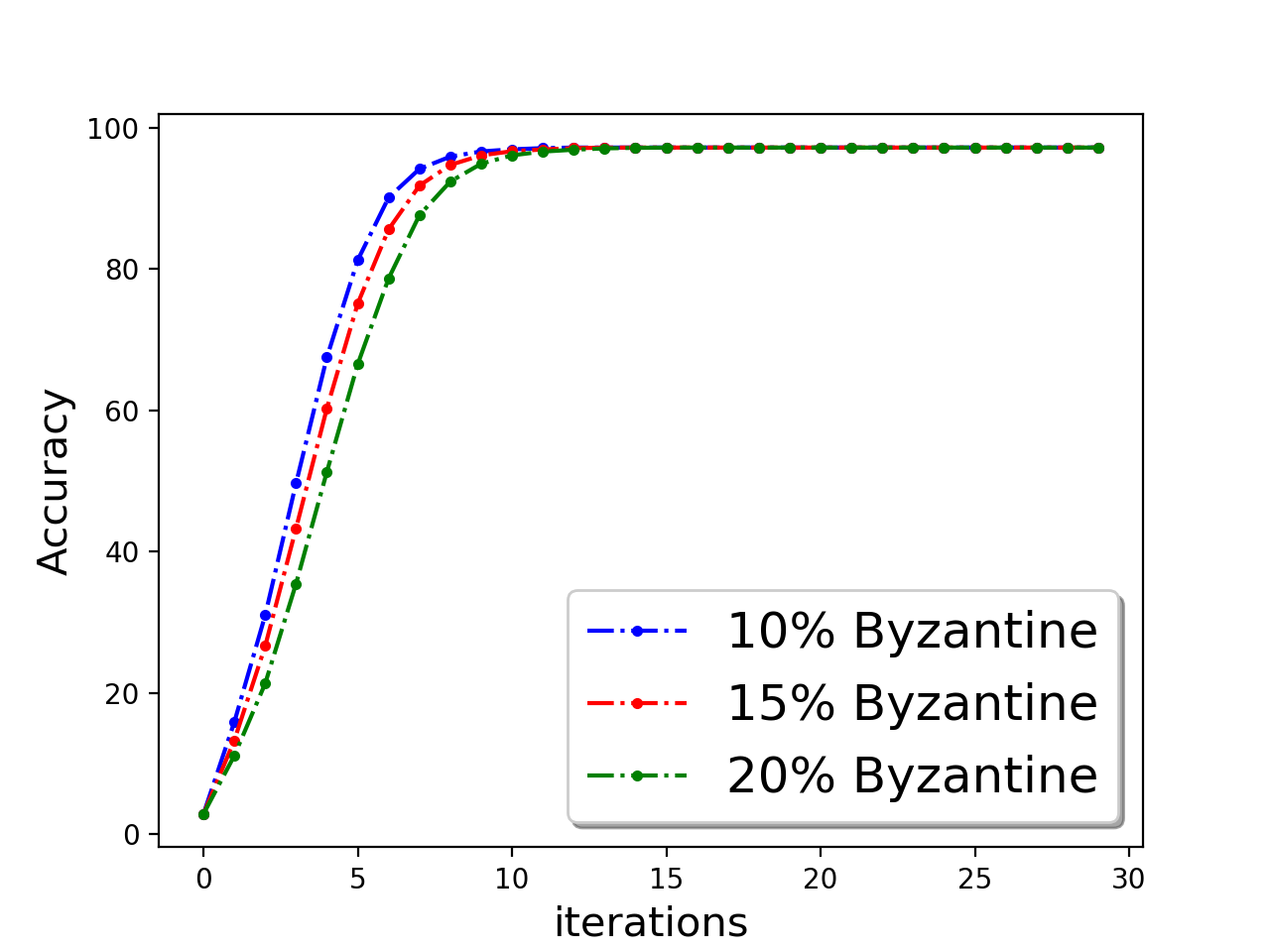} }}%
    \subfloat[a9a `random']{{\includegraphics[height = 3cm,width=3.5cm]{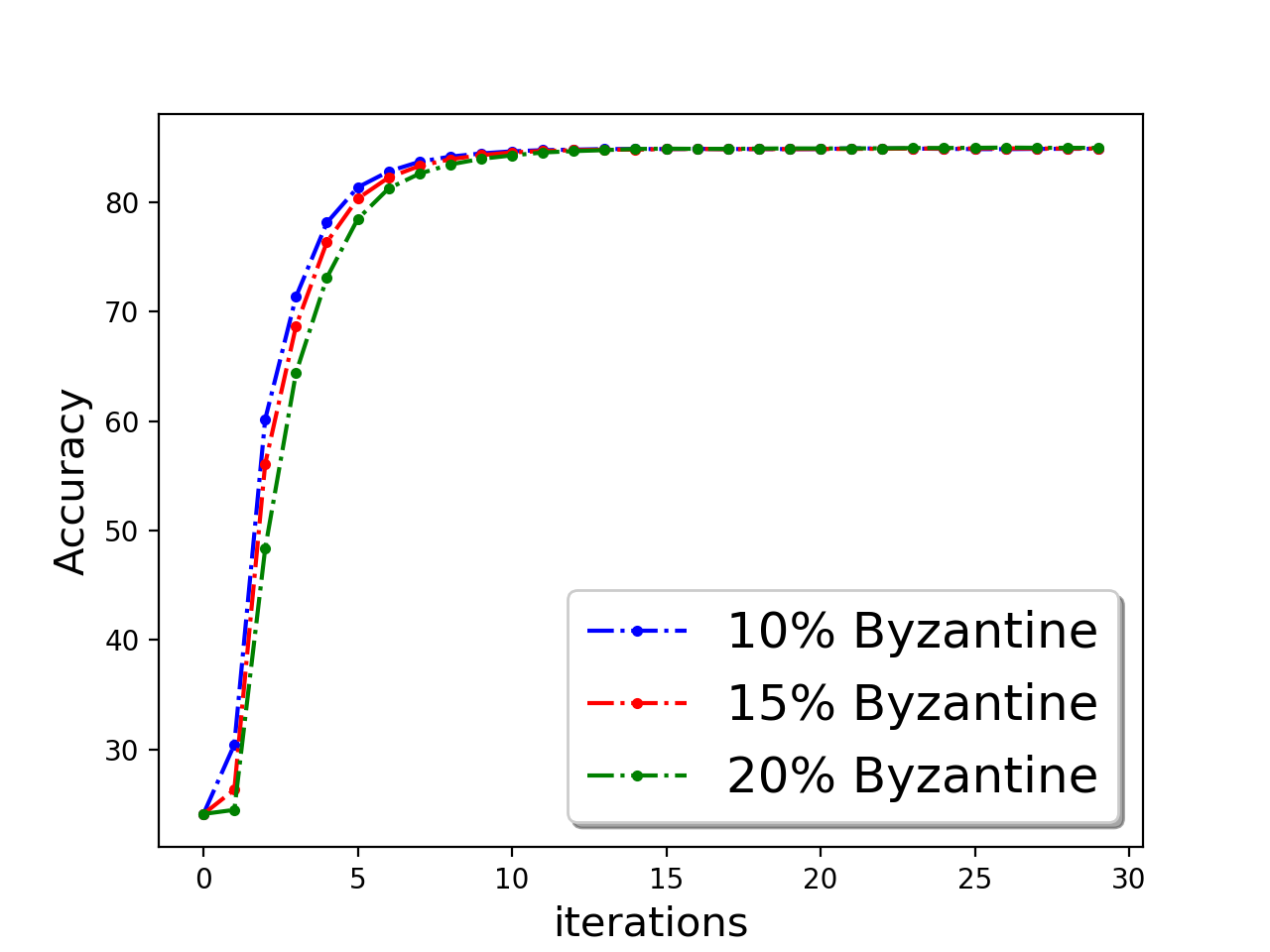} }}%
\vspace{-10pt}
    \subfloat[w5a `Gauss']{{\includegraphics[height = 3cm,width=3.5cm]{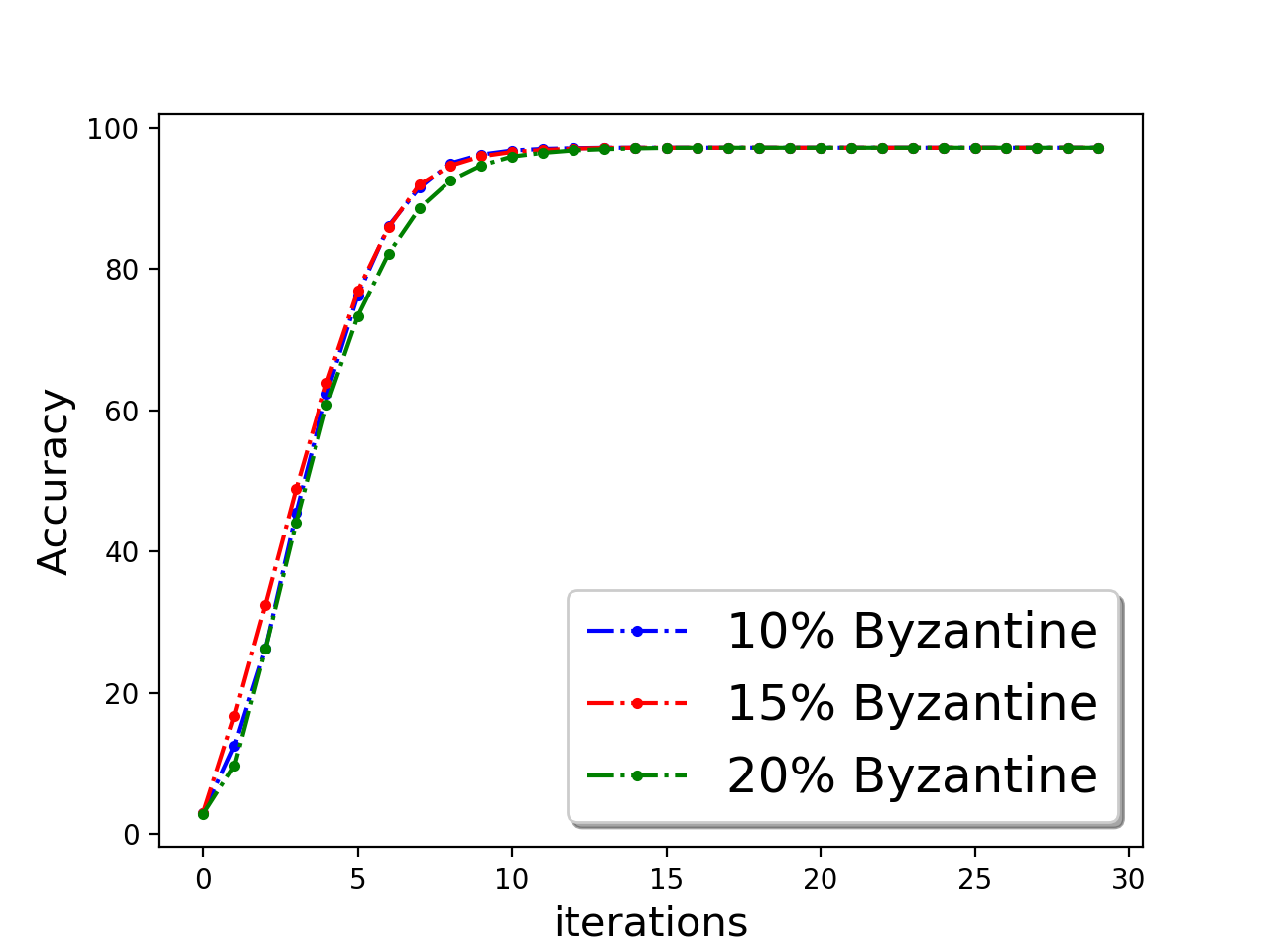} }}%
    \subfloat[a9a `Gauss']{{\includegraphics[height = 3cm,width=3.5cm]{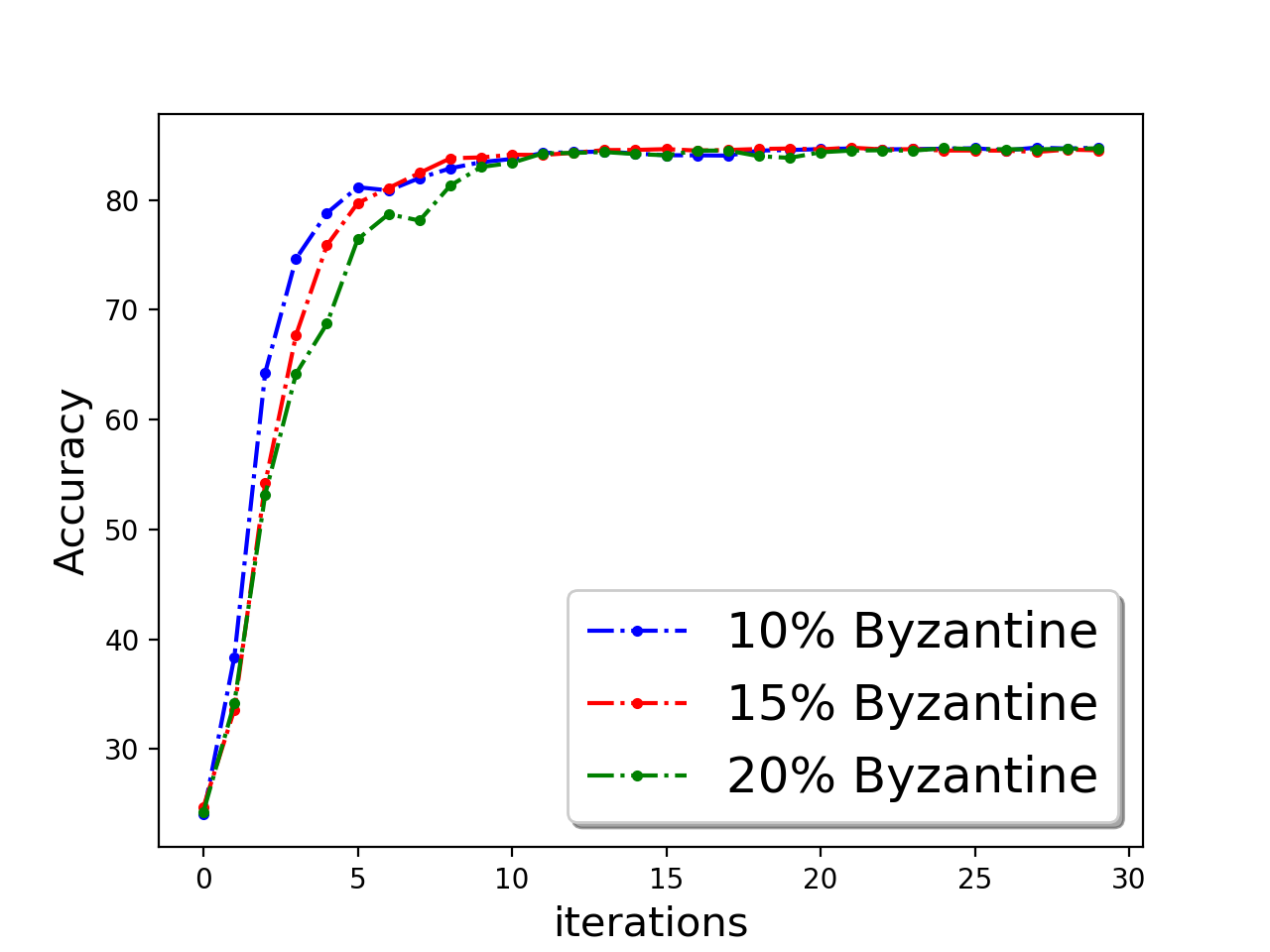} }}%
    \subfloat[w5a `random']{{\includegraphics[height = 3cm,width=3.5cm]{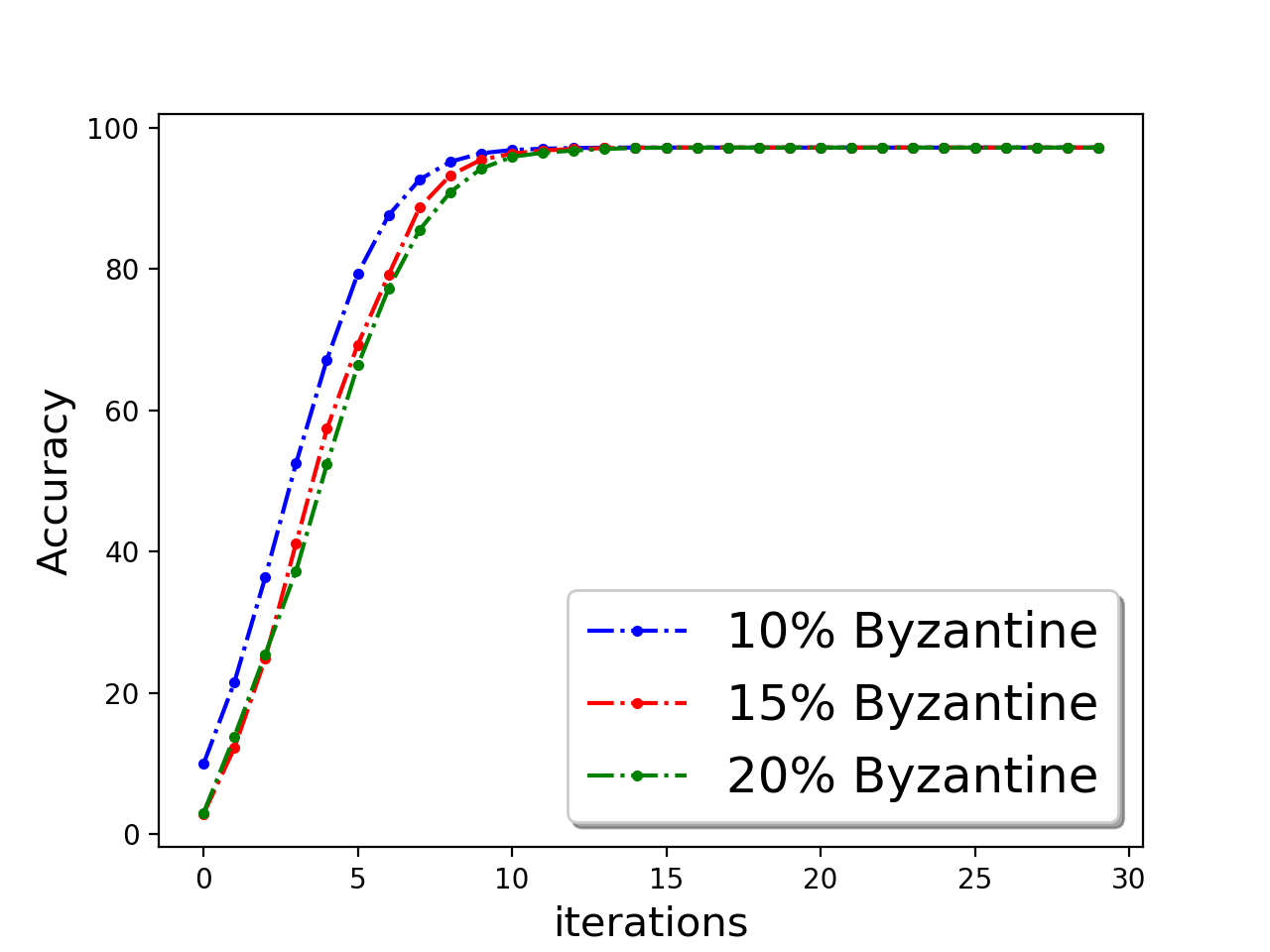} }}%
    \subfloat[a9a `random']{{\includegraphics[height = 3cm,width=3.5cm]{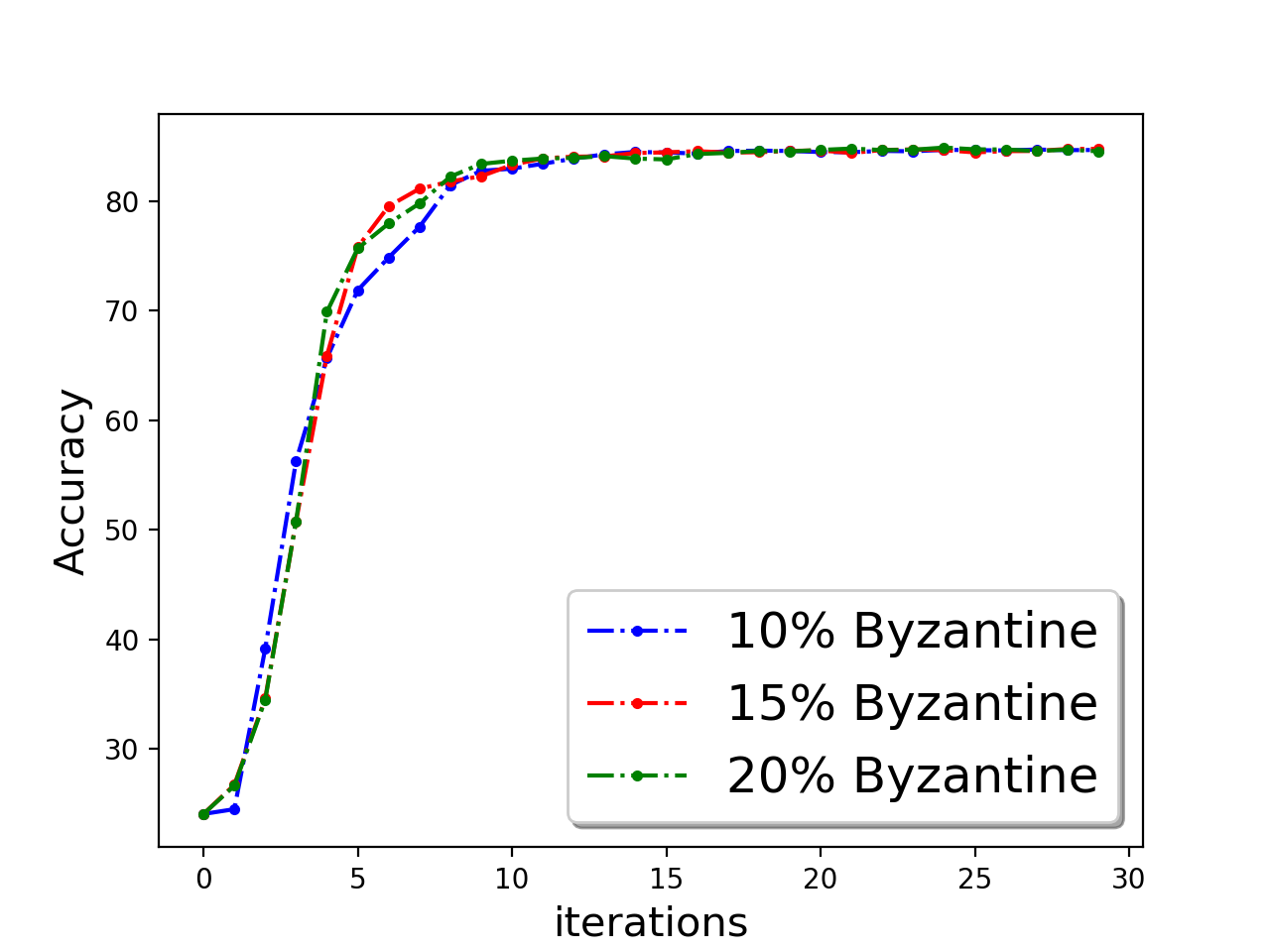} }}%
    \caption{(First row) Accuracy of \textsf{COMRADE}  with $10\%,15\%,20\%$ Byzantine workers with  `Gaussian ' attack for (a).  w5a (b). a9a and `random label' attack for (c). w5a (d).a9a.  (Second row) Accuracy of \textsf{COMRADE} with $\rho$-approximate compressor (Section~\ref{sec:compress}) with $10\%,15\%,20\%$ Byzantine  workers  with  `Gaussian ' attack for (a).  w5a (b). a9a and `random label' attack for (c). w5a (d).a9a.}%
    \label{fig:byz1}%
\end{figure}
\end{document}